\algnewcommand\algorithmicinput{\textbf{Input:}}
\algnewcommand\INPUT{\item[\algorithmicinput]}
\algnewcommand\algorithmicoutput{\textbf{Output:}}
\algnewcommand\OUTPUT{\item[\algorithmicoutput]}
\Crefname{corollary}{Corollary}{Corollaries}
\Crefname{eqnarray}{eq.}{eqs.}
\Crefname{equation}{eq.}{eqs.}
\Crefname{figure}{Fig.}{Figs.}
\Crefname{tabular}{Tab.}{Tabs.}
\Crefname{table}{Tab.}{Tabs.}
\Crefname{lemma}{Lemma}{Lemmas}
\Crefname{proposition}{Prop.}{Propositions}
\Crefname{theorem}{Thm.}{Thms.}
\Crefname{definition}{Def.}{Defs.} 
\Crefname{section}{Section}{Sections}
\Crefname{assumption}{Assumption}{Assumptions}
\Crefname{exmp}{Example}{Examples}
\Crefname{exercise}{Exercise}{Exercises}
\def\BelErr{\mathcal{B}}
\def\lambdamin{\lambda_{\textup{\rm\tiny min}}}
\def\lambdamax{\lambda_{\textup{\rm\tiny max}}}
\def\EXP{\mathcal{W}}
\def\tEXP{{\text{\tiny$\EXP$}}}
\def\tTheta{{\text{\tiny$\Theta$}}}
 \def\feeUnique{\mathcal{C}^\tTheta}
 \def\ctheta{c^\tTheta}
\def\cEXP{c^\tEXP}
 \def\Rtheta{R^\tTheta}
\def\REXP{R^\tEXP}
 \def\bEXP{b^\tEXP} 
 \def\btheta{b^\tTheta}
\def\Fig#1{Fig.~\ref{#1}}
\newlength{\noteWidth}
\long\def\notes#1{\ifinner
{\footnotesize #1}
\else 
\marginpar{\parbox[t]{\noteWidth}{\raggedright\footnotesize#1}}
\fi\typeout{#1}}
\def\notes#1{}
\def\urls#1{{\footnotesize\url{#1}}}
\def\mindex#1{\index{#1}}
\def\ocp{*}   
\DeclareFontFamily{U}{mathx}{\hyphenchar\font45}
\DeclareFontShape{U}{mathx}{m}{n}{<-> mathx10}{}
\DeclareSymbolFont{mathx}{U}{mathx}{m}{n}
\DeclareMathAccent{\widebar}{0}{mathx}{"73}
\def\BE{{\cal B}}
\def\maxBE{\overline{\cal B}}
\def\Tdiff{\mathcal{D}}
\def\tilTdiff{\widetilde{\mathcal{D}}}
\def\thetaPR{\theta^{\text{\tiny\sf  PR}}}
\def\tilthetaPR{\tilde{\theta}^{\text{\tiny\sf  PR}}}
\def\SigmaPR{\Sigma^{\text{\tiny\sf PR}}}
\def\odestate{\upvartheta}
\def\fee{\upphi}
\def\feex{\widetilde{\fee}}
\def\elig{\zeta}
\def\uH{\underline{H}}
\def\uQ{\underline{Q}}
\def\disc{\gamma}
\def\stepf{\beta}
\newcommand{\bbblot}{\raise1pt\hbox{\vrule height .4ex width .4ex depth .05ex}}
\long\def\defbox#1{\framebox[.9\hsize][c]{\parbox{.85\hsize}{%
\parindent=0pt
\baselineskip=12pt plus .1pt      
\parskip=6pt plus 1.5pt minus 1pt 
 #1}}}
\long\def\beginbox#1\endbox{\subsection*{}%
\hbox{\hspace{.05\hsize}\defbox{\medskip#1\bigskip}}%
\subsection*{}}
\def\endbox{}
 \def\archival#1{} 
\def\FRAC#1#2#3{\genfrac{}{}{}{#1}{#2}{#3}}
\def\ddt{{\mathchoice{\FRAC{1}{d}{dt}}%
{\FRAC{1}{d}{dt}}%
{\FRAC{3}{d}{dt}}%
{\FRAC{3}{d}{dt}}}}
\def\ddtp{{\mathchoice{\FRAC{1}{d^{\hbox to 2pt{\rm\tiny +\hss}}}{dt}}%
{\FRAC{1}{d^{\hbox to 2pt{\rm\tiny +\hss}}}{dt}}%
{\FRAC{3}{d^{\hbox to 2pt{\rm\tiny +\hss}}}{dt}}%
{\FRAC{3}{d^{\hbox to 2pt{\rm\tiny +\hss}}}{dt}}}}
\def\ddyp{{\mathchoice{\FRAC{1}{d^{\hbox to 2pt{\rm\tiny +\hss}}}{dy}}%
{\FRAC{1}{d^{\hbox to 2pt{\rm\tiny +\hss}}}{dy}}%
{\FRAC{3}{d^{\hbox to 2pt{\rm\tiny +\hss}}}{dy}}%
{\FRAC{3}{d^{\hbox to 2pt{\rm\tiny +\hss}}}{dy}}}}
\def\half{{\mathchoice{\FRAC{1}{1}{2}}%
{\FRAC{1}{1}{2}}%
{\FRAC{3}{1}{2}}%
{\FRAC{3}{1}{2}}}}
\def\det{{\mathop{\rm det}}}
\def\limsup{\mathop{\rm lim{\,}sup}}
\def\argmin{\mathop{\rm arg{\,}min}}
\def\state{{\sf X}}
\def\ustate{{\sf U}}
\def\bfmath#1{{\mathchoice{\mbox{\boldmath$#1$}}%
{\mbox{\boldmath$#1$}}%
{\mbox{\boldmath$\scriptstyle#1$}}%
{\mbox{\boldmath$\scriptscriptstyle#1$}}}}
\def\bfPhi{\bfmath{\Phi}}
\def\bfmI{\bfmath{I}}
\def\bfmX{\bfmath{X}}
\def\bfmY{\bfmath{Y}}
\def\bfmhhaY{\bfmath{\hhaY}} 
\def\bfmhhaY{\hbox to 0pt{$\widehat{\bfmY}$\hss}\widehat{\phantom{\raise 1.25pt\hbox{$\bfmY$}}}}
\def\haf{{\hat f}}
\def\haA{\widehat A}
\def\tiltheta{{\tilde \theta}}
\def\clF{{\cal F}}
\def\clU{{\cal U}}
\def\clZ{{\cal Z}}
\def\tilq{\tilde{q}}
\def\eqdef{\mathbin{:=}}
\def\Prob{{\sf P}}
\def\Expect{{\sf E}}
\def\lgmath#1{{\mathchoice{\mbox{\large #1}}%
{\mbox{\large #1}}%
{\mbox{\tiny #1}}%
{\mbox{\tiny #1}}}}
\def\Zero{{\mathchoice{\lgmath{\sf 0}}%
{\mbox{\sf 0}}%
{\mbox{\tiny \sf 0}}%
{\mbox{\tiny \sf 0}}}}
\def\ind{\bbbone}
 \def\epsy{\varepsilon}
\def\varble{\,\cdot\,}
\def\formtmp#1#2{{\vskip12pt\noindent\fboxsep=0pt\colorbox{#1}{\vbox{\vskip3pt\hbox to \textwidth{\hskip3pt\vbox{\raggedright\noindent\textbf{#2\vphantom{Qy}}}\hfill}\vspace*{3pt}}}\par\vskip2pt%
\noindent\kern0pt}}
\def\psisub#1{\psi_{(#1)}}
\def\csub#1{c_{#1}}
\def\barb{{\overline {b}}}
\def\barf{{\widebar{f}}}
\def\barf{{\widebar{f}}}
\def\ass(#1:#2){(#1\ref{#1:#2})}
\def\ritem#1{
\item[{\sf \ass(\current_model:#1)}]
}
\newenvironment{recall-ass}[1]{%
\begin{description}
\def\current_model{#1}}{
\end{description}
}
\def\sq{\hbox{\rlap{$\sqcap$}$\sqcup$}}
\def\qed{\ifmmode\sq\else{\unskip\nobreak\hfil
\penalty50\hskip1em\null\nobreak\hfil\sq
\parfillskip=0pt\finalhyphendemerits=0\endgraf}\fi}
\newcommand{\blot}{\vrule height 1.1ex width .9ex depth -.1ex }
\def\qedb{\ifmmode\blot\else{\vspace{-.2cm}\unskip\nobreak\hfil
\penalty50\hskip1em\null\nobreak\hfil\blot
\parfillskip=0pt\finalhyphendemerits=0\endgraf}\fi}
\newcounter{rmnum}
\newenvironment{romannum}{\begin{list}{{\upshape (\roman{rmnum})}}{\usecounter{rmnum}
\setlength{\leftmargin}{18pt}
\setlength{\rightmargin}{8pt}
\setlength{\itemindent}{2pt}
}}{\end{list}}
\newcounter{anum}
\newcommand{\field}[1]{\mathbb{#1}}
\def\Re{\field{R}} 
\def\intgr{\field{Z}}
\def\Co{\field{C}}
\def\Prob{{\sf P}}
\def\Expect{{\sf E}}
\def\transpose{{\intercal}}
\def\argmin{\mathop{\rm arg\, min}}
\def\ind{\hbox{\large \bf 1}}
\def\trace{\hbox{\rm trace\,}}  
\def\epsy{\varepsilon}
\def\varble{\,\cdot\,}
\def\haY{\widehat{Y}}
\def\hhaY{\hbox to 0pt{$\haY$\hss}\widehat{\phantom{\raise 1.25pt\hbox{Y}}}}
\def\haA{\widehat A}
\def\haY{\widehat Y}
\def\bfPhi{\bfmath{\Phi}}
\newlength{\dhatheight}
\newtheorem{theorem}{Theorem}[section]
\newtheorem{proposition}[theorem]{Proposition}
\newtheorem{lemma}[theorem]{Lemma}
\newsavebox{\junk}
\savebox{\junk}[1.6mm]{\hbox{$|\!|\!|$}}
\def\det{{\mathop{\rm det}}}
\def\limsup{\mathop{\rm lim\ sup}}
\def\argmin{\mathop{\rm arg\, min}}
\def\clF{{\cal F}}
\def\clU{{\cal U}}
\def\clZ{{\cal Z}}
\newcommand\gobblepars{%
    \@ifnextchar\par%
        {\expandafter\gobblepars\@gobble}%
{}}
\def\whamrm#1{\smallbreak\pagebreak[3]%
	\noindent\text{\rm#1}\ \ \gobblepars}
\def\whamit#1{\smallbreak\pagebreak[3]%
	\noindent\textit{#1}\ \ \gobblepars}
\def\wham#1{\smallbreak\pagebreak[3]%
	\noindent\textbf{#1}\ \ \gobblepars}
\def\uE{\underline{E}}
\def\barfzap{\bar{f}^{\textup{\textsf{zap}}}}
\def\bdd#1{b^{\text{\rm\tiny\ref{#1}}}}
\def\Fig#1{Fig.~\ref{#1}}
\def\state{{\sf X}}
\def\eqdef{\mathbin{:=}}
\def\trace{\hbox{\rm trace\,}}
\def\ind{\field{I}}
\def\Re{\field{R}}
\def\ocp{*}   
\def\fee{\upphi}
\def\barb{\bar b}
\def\odestate{\upvartheta}
\def\wham#1{\smallbreak\pagebreak[3]%
	\noindent\textup{\textbf{#1}}\ \ \gobblepars}
\def\whamrm#1{\smallbreak\pagebreak[3]%
	\noindent\textup{\text{\rm#1}}\ \ \gobblepars}
\def\whamit#1{\smallbreak\pagebreak[3]%
	\noindent\textit{#1}\ \ \gobblepars}
\def\barfinf{\barf_{\infty}}
\def\SigmaTheta{\Sigma_\tTheta}
\def\thetaPR{\theta^{\text{\tiny\sf  PR}}}
\def\tilthetaPR{\tilde{\theta}^{\text{\tiny\sf  PR}}}
\def\SigmaPR{\Sigma^{\text{\tiny\sf PR}}_\tTheta}
\title{Stability of Q-Learning    Through Design and Optimism
}
\author{Sean Meyn
\thanks{SPM is with the University of Florida, Gainesville, FL 32611.
	Financial support from ARO award W911NF2010055 
		is gratefully acknowledged.  This article was created in part to support the 2023 INFORMS Applied Probability Society lecture
		--- slides available at {\tt researchgate.net} \cite{APS2023}}%
}
\begin{document}

\maketitle

\begin{abstract} 

     Q-learning  has   become an important part of the reinforcement learning toolkit since its introduction in the dissertation of
     Chris Watkins in the 1980s.    
        The purpose of this paper is in part a tutorial on stochastic approximation and Q-learning, providing details regarding the   INFORMS APS 2023,  inaugural Applied Probability Trust Plenary Lecture,  presented in  Nancy France, June 2023. 

The paper also presents   new approaches to ensure stability and potentially accelerated convergence for these algorithms, and stochastic approximation in other settings.    Two contributions are entirely new:

\wham{1.}     
Stability of Q-learning with linear function approximation has been an open topic for research for over three decades.  It is shown that with appropriate optimistic training in the form of a modified Gibbs policy,  there exists a solution to the projected Bellman equation, and   the algorithm is stable (in terms of bounded parameter estimates).   Convergence remains one of many open topics for research.

\wham{2.}   
The new Zap Zero algorithm is designed to approximate the Newton-Raphson flow without matrix inversion.    It is stable and convergent under mild assumptions on the mean flow vector field for the algorithm, and compatible statistical assumption on an underlying Markov chain.    The algorithm is a general approach to stochastic approximation which in particular applies to Q-learning with ``oblivious'' training even with non-linear function approximation.

 \medbreak
 
\whamit{MSC 2020 Subject classifications:}  Primary   93E35 ;   Secondary  68T05,  62L20,  93E20

\end{abstract}

\newgeometry{top=1in,bottom=1in,right=1in,left=1in}


\section{Introduction}

The article concerns Q-learning algorithms, motivated by the same objective as in the first formulation of Watkins \cite{watday92a,wat89}: the infinite-horizon optimal control problem,   with state-action value function  
\begin{equation}
Q^\star(x,u) = \min \sum_{k=0}^\infty \disc^k \Expect [ c(X_k,U_k) \mid X_0 =x\,, \ U_0=u]
\label{e:Q}
\end{equation}
The state process $\{ X_k  : k\ge 0\}$ evolves on a finite state space denoted $\state$,   and the
action (or input)  process $\{  U_k  : k\ge 0\}$ evolves on a finite set $\ustate$; 
$c\colon\state\times\ustate\to\Re$ is the one-step reward function,  and $\disc\in(0,1)$ the discount factor.   

The minimum in \eqref{e:Q} is over all history dependent input sequences.   
Under standard Markovian assumptions reviewed in \Cref{s:SARL},  an optimal input is obtained by state feedback $\fee^\star\colon\state\to\ustate$,  with $\fee^\star (x) \in \argmin Q^\star(x,u) $ for each $x$  \cite{ber12a}.   Moreover, 
the \textit{Q-function} $Q^\star$ solves the Bellman equation,
\begin{equation}
Q^\star(x,u) =  c(x,u) + \disc   \Expect[  \uQ^\star (X_{k+1}) \mid X_k=x\,, \, U_k =u] ,,\qquad x \in\state\,,\ u\in\ustate\,,\ k\ge 0
\label{e:DCOE-Q}
\end{equation}
where throughout the paper an under-bar denotes a minimum:  $\uH(x) \eqdef\min_u H(x,u)$, $x\in\state$, for any function $H\colon\state\times\ustate\to\Re$.

The objective of Q-learning is to obtain an approximate solution to \eqref{e:DCOE-Q} among a parameterized class $\{ Q^\theta :  \theta\in\Re^d \}$.  Typical in theoretical analysis is linear function approximation,    
  $\{Q^\theta = \theta^\transpose \psi : \theta\in\Re^d\}$ with $\psi$ a vector of basis functions.
   
 Given an approximation within this class, we obtain a policy  (i.e.\ state feedback law) 
  $\fee^\theta\colon\ustate\to\state$:
  \begin{equation}
\fee^\theta(x) \in \argmin_u  Q^\theta(x,u)
\label{e:feetheta}
\end{equation}
with some fixed rule in place in case of ties.
\begin{subequations}%
 Much of the present article focuses on a generalization of the original algorithm of Watkins:
For initialization $\theta_0\in\Re^d  $, define the sequence of estimates  recursively:  
\begin{align}
\theta_{n+1} &= \theta_n + \alpha_{n+1}   \Tdiff_{n+1} \elig_n \,,
		 \qquad\qquad   \elig_n  = \nabla_\theta  Q^\theta(X_{n}, U_{n})  \big|_{\theta =\theta_n}
\label{e:Qlambda}
\\
\Tdiff_{n+1} & =  c(X_n,U_n)   + \disc     \uQ^{\theta_n} (X_{n+1})  - Q^{\theta_n}(X_n,U_n) \,.
\label{e:BE_theta_n}
\end{align} 
in which $\{
\alpha_n\}$ is a non-negative step-size sequence. 
See \cite{sutbar18,sze10,CSRL} for a range of interpretations of the algorithm.  The vectors $\{\elig_n  \}$ are entirely analogous to the   eligibility vectors  used in the TD($0$) algorithm \cite{sut88,tsivan97},  and    $\{\Tdiff_{n+1} \} $ is known as the temporal difference sequence.  
The recursion \eqref{e:Qlambda} reduces to the original tabular Q-learning algorithm  when using a tabular basis  \cite{watday92a,wat89} (see \Cref{s:tabQ}
 for   definitions).

 \label{e:Qintro}
\end{subequations}

The goal of Q-learning is to approximate the solution to the \textit{projected Bellman equation},
 \begin{equation}
 \Zero  =  \Expect\bigl[ \{   c(X_n,U_n)   + \disc     \uQ^{\theta^*} (X_{n+1})  - Q^{\theta^*}(X_n,U_n) \}  \elig_n \bigr]
\label{e:PBE}
\end{equation}
in which the expectation is in steady-state.

Soon after   Q-learning was introduced, it was recognized that the algorithm can be cast within the framework of stochastic approximation (SA)  \cite{tsi94a,jaajorsin94a}.    To explain the contributions and approach to analysis in this paper it is necessary to first explain why  \eqref{e:Qlambda} can also be cast as an SA recursion, subject to mild assumptions on the input used for training.

\subsection{A few warnings}

For readers with background in reinforcement learning, some notation may not be familiar, and some goals may not seem standard.  

  \wham{1.}   We use $\uppi$ for invariant measures, following a long tradition in the theory of Markov chains \cite{MT}.    
  Apologies to those of you who prefer ``pi'' for ``policy''.
 
\wham{2.} 
 Finite-$n$ bounds (sample complexity bounds) are valuable in the theory of bandits. 
 There has not been comparable success  in reinforcement learning, in part because present bounds are very loose.    Perhaps sample complexity theory will evolve to become more practical.   This paper focuses on asymptotic statistics for comparing algorithms,   
 as well as heuristics based on ODE techniques to gain insight on transient behavior.

The most valuable tool from asymptotic statistics is  the Central Limit Theorem. 
For the basic SA recursion \eqref{e:SAa},  the CLT typically holds for the scaled error  $z_n = \tiltheta_n /\sqrt{\alpha_n}$ with $ \tiltheta_n =\theta_n -\theta^*$,    along with convergence of the scaled mean-square error:
\[
\lim_{n\to\infty}  \frac{1}{\alpha_n}  \Expect[\tiltheta_n \tiltheta_n^\transpose ] =  \SigmaTheta
\]
We typically take $\{\alpha_n\}$ ``big'' to reduce transients, such as $\alpha_n = n^\rho$ with $\half <\rho < 1$.   The limit above implies slow convergence for large $n$, but this is ameliorated via the averaging technique of  Polyak and Ruppert yielding
\[
\lim_{n\to\infty}  n\Expect[\tilthetaPR_n {\tilthetaPR_n}^\transpose ] =  \SigmaPR
\] 
in which $\SigmaPR$ is minimal in a matricial sense---see \Cref{s:SAcrash}
for definitions.

This covariance matrix can be estimated using the batch means method,  which requires performing many relatively short runs with distinct initial conditions \cite{asmgly07}.

\subsection{Some history}
\label{s:prior}

One open issue motivating the research surveyed in this paper is this:   
 \textit{it is not   known if  the projected Bellman equation \eqref{e:PBE}  has a solution outside of very special cases}.  
 
Success stories surveyed in \cite{sze10} include the special case of binning  \cite{jaajorsin94a}, which is a generalization of the tabular setting,  and
   the criterion in  \cite{melmeyrib08} and its improvement in  \cite{leehe19},  but the assumptions are not easily verified in practice.  The progress report in \cite[Section 3.3.2]{sze10} states that the \textit{only known convergence result is due to Melo et al.} \cite{melmeyrib08}.  
See     \cite[Section 11.2]{sutbar18} for further discussion,   and  \cite{goptho22} for recent insight.  

 This open problem was a topic of discussion throughout the Simons program on reinforcement learning held in 2020, especially during the bootcamp lectures \cite{Simons_bootcamp2020}.
 
 \Cref{t:Qstable,t:greedyStable}    resolve this open problem for Q-learning with optimistic training. Following many preliminaries, the proof of  \Cref{t:Qstable}
 is similar to the proof of convergence of TD($\lambda$) learning from the dissertation of Van Roy \cite{tsivan97,royThesis98},  and the assumptions are related to the assumptions in this prior work, even though the setting is very different.

The recent paper \cite{limkimlee22}  considers Q-learning with linear function approximation and oblivious training.   
With sufficiently large regularization they obtain a unique equilibrium for the algorithm that approximates the solution to the projected Bellman equation.  
It is likely that their results can be improved using optimistic training as in the present work.

 Also recent is the work of \cite{checlamag22},  which is cast in a similar setting:   Q-learning with linear function approximation and oblivious training.   It is argued that the use of a target network combined with 
		a carefully constructed projection of parameters
improves performance,  and their error bounds are consistent with their claims.  
While the paper is a significant step forward, they leave open the question of existence of a solution to the projected Bellman equation.
With vanishing step-size, if convergence is established with or without a target network,  the limit must be a solution to the projected Bellman equation (see \cite[Proposition 5.10]{CSRL} for   proof in the case of deterministic optimal control---the arguments in the stochastic setting are identical).

The lack of theory motivated  Baird's gradient descent approach \cite{bai95}  (and his counterexample discussed in \Cref{s:BairdNumerics}),
 as well as GQ learning  \cite{maeszebhasut10},  in which the root finding problem is replaced with the minimization of a loss function.
See \cite{avrbordolpat21} for  recent theory. 

Zap stochastic approximation was introduced to ensure convergence, and also provide acceleration   \cite{devmey17a,devmey17b}.   While originally proposed for Q-learning with linear function approximation, it was later shown to be convergent even with nonlinear function approximation \cite{chedevbusmey20b}, and the general technique applies to any application in which stochastic approximation is used.  
A version of the Zap-Zero algorithm was introduced in \cite{CSRL},  whose form is motivated in part by the   time-scale SA algorithm introduced in  \cite{maeszebhasut10}.    

 The new Zap-Zero algorithm  \eqref{e:ZapZero} is entirely new, and convergent under far weaker conditions.   

Much recent research has focused on \textit{linear} MDPs, notably \cite{yanwan19,yanwan19b,jinyanwanjor20},  in which the system dynamics are partially known:   for a known ``feature map'' $\phi \colon\state\times\ustate \to \Re^d$  and an unknown sequence of probability measures $\{\mu_i : 1\le i\le d\}$ on $\state$,   a linear MDP is assumed to have a controlled transition matrix of the form $P_u(x,x')  =\sum \phi_i(x,u) \mu_i(x')$.   There is now a relatively complete theory  for this special case,  in which the algorithm is designed based on knowledge of the feature map.

The reader is encouraged to see   \cite{bascurkraneu21,mehmeyneulu21,lumehmeyneu22,margarlyg22,margardralyg22} 
for new approaches to Q-learning based on convex programming approaches to MDPs.      It is hoped that the analytic techniques presented in this paper may be adapted to these new algorithms.

\subsection{Overview}   Following a summary of notation and key results from stochastic approximation theory in \Cref{s:SARL}, the paper sets out to survey results from the theory of Q-learning, including these highlights:

\wham{1.}  \Cref{s:Q} reviews theory for Q-learning with linear function approximation.   It is now well known that there are challenges even in the simplest tabular setting, in which convergence holds but is very slow.  Methods are surveyed to accelerate convergence.   The theory is restricted to \textit{oblivious} training, meaning that the input during training is independent of the parameter estimates. 

Consideration of optimistic policies is postponed to \Cref{s:Qstable},  which contains entirely new theory:   if a smooth approximation of the $\epsy$-greedy policy is used for training, then under mild conditions the parameter estimates are bounded, and there exists a solution to the projected Bellman equation (see \Cref{t:Qstable}).
Unfortunately,  convergence to $\theta^*$ remains one of many open problems for research

\wham{2.}  \Cref{s:Z} contains a survey of the author's favorite approach known as \textit{Zap Q-learning};   the theory is elegant and the approach is stable even with nonlinear function approximation.    A major problem with this approach is the need for a  matrix inversion in each iteration of the algorithm
A  new algorithm and theory is presented here for the first time in \Cref{s:ZZ}:
the \textit{Zap Zero algorithm} is   designed to avoid matrix inversion,  
	  and complexity of matrix-vector multiplication   
can be tamed (see \Cref{t:ZapZero}).

The theory in   \Cref{s:Z,s:ZZ} is restricted to oblivious training.   The extension to more efficient training techniques, such as $\epsy$-greedy or approaches based on Thompson sampling, is another topic for future research. 

\medskip

\textit{Parts of the new material in this survey have been expanded and submitted for publication}.   See 
\cite{laubusmey23} for a fuller development of Zap Zero algorithms,   and 
\cite{mey23b} for theory of Q-learning.

\section{Stochastic Approximation and Reinforcement Learning}
\label{s:SARL}

This section is devoted to three topics:   assumptions surrounding the 
Markov Decision Process (MDP) model,    
a brief summary of results from the theory of stochastic approximation,
followed by 
assumptions surrounding the Q-learning algorithms to be considered.

\subsection{Markov Decision Process}

The first set of assumptions and notation concern the   control system model.  

While the search for an optimal policy may be restricted to static state feedback under the assumptions imposed below,  in reinforcement learning it is  standard practice to introduce randomization  in policies as a way of introducing exploration during training.   We restrict to randomized policies of   the form,
\begin{equation}
U_k = \fee(X_k,\theta_k, I_k)\,,\qquad k\ge 0\,,  \qquad 
\label{e:TrainingPolicy}
\end{equation}
in which $\bfmI=\{I_1,I_\infty ,\dots\}$ is an i.i.d.\ sequence.   Under the assumption that $\state$ and $\ustate$ are finite, we can assume without loss of generality that $\bfmI$ evolves on a finite set.

The input-state dynamics are assumed to be defined by a controlled Markov chain, with controlled transition matrix $P$.   
For any randomized stationary policy, 
\begin{equation}
\Prob\{ X_{k+1} = x' \mid X_k =x\,,\ U_k =u\} = P_u(x,x')\,,\qquad x,x'\in\state\,,\ u\in\ustate\,,\ k\ge 0
\label{e:Pu}
\end{equation}
The dynamic programming equation \eqref{e:DCOE-Q} may be expressed
\begin{equation}
Q^\star(x,u) =  c(x,u) + \disc    \sum_{x'\in\state}   P_u(x,x')\  \uQ^\star ( x')  \,,\qquad x\in\state\,,\ u\in\ustate
\label{e:DCOE-Qb}
\end{equation}

\begin{subequations}

\subsection{What is stochastic approximation?}  
\label{s:SAcrash}

 A fuller answer  may be found in any of the standard monographs, such as \cite{bor20a} (see also \cite{CSRL} for a crash course).

The goal of SA is to solve the root finding problem $\barf(\theta^*) =\Zero$,  where the function is defined in terms of an expectation,  $\barf(\theta) = \Expect[ f(\theta,\Phi)]  $ for $\theta\in\Re^d$ and with $\Phi$ a random vector.     
The general SA algorithm is expressed in two forms:
\begin{align} 
\theta_{n+1} &= \theta_n + \alpha_{n + 1} f(\theta_n \,, \Phi_{n+1} )
\label{e:SAa}
\\
 & = \theta_n +\alpha  [ \barf(\theta_n) + \Delta_{n+1} ]\,,\quad n\ge 0.
\label{e:SAintro}
\end{align}
where   \eqref{e:SAintro} introduces the notation  $ \Delta_{n+1} \eqdef  f(\theta_n, \Phi_{n+1}) - \barf(\theta_n) $.   
It is assumed that the sequence of random vectors $\{ \Phi_n \}$ converges in distribution to $\Phi$.

 \end{subequations}

The   algorithm is motivated by ordinary differential equation (ODE) theory,  and this theory plays a large part in establishing convergence of \eqref{e:SAa} along with convergence rates.    These results are obtained by comparing solutions \eqref{e:SAa} to solutions of the \textit{mean flow},
\begin{equation}
  \ddt \odestate_t = \barf(\odestate_t).
\label{e:ODESA1}
\end{equation}
In particular,  $\theta^\ocp $ is a stationary point of this ODE.

\wham{Averaging}  
A large step-size  $\{ \alpha_{n + 1} \}$  in  \eqref{e:SAa} is desirable for quick transient response, but  this typically leads to high variance.    There is no conflict if the ``noisy'' parameter estimates are averaged.     The  averaging technique of Polyak and Ruppert defines
\begin{equation}
\thetaPR_n = \frac{1}{n}\sum_{k=1}^n \theta_k \,,\qquad n\ge 1.
\label{e:thetaPR}
\end{equation}
 \Cref{t:chedevborkonmey21} illustates the value of this approach.

\wham{Basic SA assumptions} 
The following are imposed in this section, and in some others that follow.

It is assumed that  the step-size sequence   $\{\alpha_n: n \geq 1\}$ is deterministic, satisfies $0 < \alpha_n \leq 1$, and
\begin{equation}
\sum_{n = 1}^\infty \alpha_n = \infty , \quad \sum_{n = 1}^\infty \alpha_n^2 < \infty
\label{e:step-sizeConditions}
\end{equation} 
Much of the theory in this paper is restricted to the special case: $\alpha_n =  g n^\rho$ with $\half <\rho \le  1$ and $g>0$.
We sometimes require two time-scale algorithms in which there is a second step-size sequence  $\{\stepf_n: n \geq 1\}$ that  is relatively large:
\begin{equation}
\lim_{n\to\infty}\frac{\alpha_n}{\stepf_n} = 0
\label{e:gammaalpha}
\end{equation}

\wham{SA1}   The function $\barf$ is globally Lipschitz continuous

\wham{SA2}  $\bfPhi$ is a time-homogeneous Markov chain that evolves on a finite set, with unique invariant pmf $\uppi$.

\wham{SA3}  The mean flow
\eqref{e:ODESA1} is  globally asymptotically stable, with unique equilibrium $\theta^*$.  

The final assumption is needed to obtain useful bounds on the rate of convergence, which requires the existence of a linearization (at least in a neighborhood of $\theta^*$).  Denote
\begin{equation}
 A(\theta) = \partial_\theta \barf\, (\theta)
\label{e:Atheta}
\end{equation}

\wham{SA4} 
The derivative \eqref{e:Atheta}  is a continuous function of $\theta$,    and
 $A^*\eqdef A(\theta^*)$ is a Hurwitz matrix  (its eigenvalues lie in the strict left hand plane).

\smallskip

 Assumptions (SA1)--(SA3) imply convergence of $\{\theta_n\}$ to $ \theta^*$ almost surely from each initial condition, provided one more property is established:  
 \begin{equation}
 \parbox{.6\hsize}{\raggedright
 The  parameter sequence $\{\theta_n : n\ge 0 \}$ is bounded with probability one from each initial condition.
}
\label{e:thetaBdd}
\end{equation}

Verification of (SA3) is typically achieved through a Lyapunov function analysis.   
Lyapunov techniques also provide a means of establishing \eqref{e:thetaBdd}.  One approach is described next.

\wham{ODE@$\infty$}
The so-called Borkar-Meyn theorem  of \cite{bormey00a, bor20a}  
is one approach to establish  \eqref{e:thetaBdd}. 
This result concerns the  time-homogeneous ODE   $\ddt x =  \barfinf(x)$  (the `ODE@$\infty$'') with 
 vector field,
\begin{equation}
\barfinf(\theta)
\eqdef
\lim_{r\to\infty} r^{-1}   \barf(r\theta).
\label{e:barfinfty}
\end{equation}
We always have $\barfinf(0) =0$, which means that the origin is an equilibrium for the
 ODE@$\infty$.     It is also radially homogeneous,   $\barfinf(r\theta) =r \barfinf(\theta) $ for any $\theta\in\Re^d$ and $r>0$.    Based on these properties it is known that local asymptotic stability of the origin implies global exponential asymptotic stability    \cite{bormey00a}.   

The following is an alternative to the  ODE@$\infty$ criterion,  which is equivalent whenever the limit \eqref{e:barfinfty} exits for each $\theta$:

\wham{(v4)}  For a globally Lipschitz continuous and $C^1$ function $V\colon\Re^d\to\Re_+$,
and a constant $\delta_v>0$,
\begin{equation} 
	\frac{d}{dt} V(\odestate_t)  \leq  -  \delta_v    V(\odestate_t)    \,, \qquad   \textit{when $\|\odestate_t\|\ge  \delta_v^{-1} $.}
	\label{e:ddt_bound_LyapfunTmp}
\end{equation}
The use of the designation ``v4'' comes from an anolagous bound appearing in stability theory of Markov chains \cite{MT}.

It is shown in \cite{bormey00a} that \eqref{e:thetaBdd} holds  provided the ODE@$\infty$ is locally asymptotically stable,  and $\{\Delta_n\}$ appearing in \eqref{e:SAintro} is a martingale difference sequence.
This statistical assumption does \textit{not} hold in many applications of reinforcement learning.   Relaxations of the assumptions of   \cite{bormey00a} are given in   \cite{bha11,rambha17}, but the story is far from complete.

A generalization appeared recently in \cite{chedevborkonmey21} that requires minimal assumptions on the Markov chain (there is no need for a finite state space).    Conclusions obtained under the assumptions imposed here are summarized in the following.

 \begin{theorem}
\label[theorem]{t:chedevborkonmey21}
Suppose that (SA1) and (SA2) hold for the SA recursion \eqref{e:SAa}, and in addition that the origin is locally asymptotically stable for the  ODE@$\infty$,  or that (v4) holds.    Then,

\whamrm{(i)} 
The bound  \eqref{e:thetaBdd} holds in a strong sense:
there is a fixed constant $B_\tTheta$ such that for each initial condition $(\theta_0,\Phi_0)$,
\begin{equation}
\limsup_{n\to\infty} \| \theta_n \|   \le B_\tTheta   \quad a.s..
\label{e:ODEbdd}
\end{equation}

\whamrm{(ii)}   If in addition (SA3) holds then $\displaystyle \lim_{n\to\infty} \theta_n = \theta^*$ almost surely from each initial condition.

\begin{subequations}

\whamrm{(iii)} 
Suppose   that  (SA1)--(SA4) hold, and that   $\alpha_n =  g n^\rho$,  $n\ge 1$,  with $\half <\rho < 1$ and $g>0$. We then have  convergence in mean square, and the following limits exist and are finite:
\begin{align}
\lim_{n\to\infty}  \frac{1}{\alpha_n}  \Expect[\tiltheta_n \tiltheta_n^\transpose ]  & =  \SigmaTheta
\label{e:covThm}
\\
\lim_{n\to\infty}   n\Expect[\tilthetaPR_n {\tilthetaPR_n}^\transpose ]  &=  \SigmaPR
\label{e:PRcovThm}
\end{align}%
 \end{subequations}%
\qed
\end{theorem}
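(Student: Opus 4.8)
\medskip
\noindent\textit{Proof plan.}\
The three conclusions are of quite different character, so I would treat them in order, with the stability claim (i) carrying essentially all of the technical burden and (ii) then being the classical ODE method. For (i) the plan is a Lyapunov drift argument on the raw iterates. Since the ODE@$\infty$ hypothesis and (V4) are equivalent whenever the limit \eqref{e:barfinfty} exists, I would fix the function $V$ of (V4); under only the ODE@$\infty$ hypothesis one first manufactures such a $V$ as a converse Lyapunov function for the flow $\ddt x = \barfinf(x)$, which is globally \emph{exponentially} stable because local asymptotic stability together with the radial homogeneity $\barfinf(rx)=r\barfinf(x)$ forces it (as noted after \eqref{e:barfinfty}). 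The point of using $V$ on the true iterates is that global Lipschitz continuity of $\barf$ (SA1) gives $\barf(\theta)=\|\theta\|\,\barfinf(\theta/\|\theta\|)+o(\|\theta\|)$, so \eqref{e:ddt_bound_LyapfunTmp} upgrades to the drift inequality $\nabla V(\theta)^\transpose \barf(\theta) \le -\tfrac12\delta_v V(\theta)$ once $\|\theta\|$ exceeds some threshold $r_0$. Expanding $V$ along \eqref{e:SAa} and using the Lipschitz bound on $\nabla V$ together with the affine growth of $f(\cdot,x)$,
\[
V(\theta_{n+1}) \le V(\theta_n) + \alpha_{n+1}\nabla V(\theta_n)^\transpose \barf(\theta_n) + \alpha_{n+1}\nabla V(\theta_n)^\transpose \Delta_{n+1} + C\alpha_{n+1}^2\bigl(1+\|\theta_n\|^2\bigr),
\]
with $\Delta_{n+1}$ as in \eqref{e:SAintro}.

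\medskip
The crux, and the step I expect to be hardest, is that under (SA2) the sequence $\{\Delta_{n+1}\}$ is \emph{not} a martingale difference, so the Borkar--Meyn argument does not apply directly. I would handle it through the Poisson equation for $\bfPhi$: let $\hat f(\theta,\cdot)$ solve $\hat f(\theta,x) - \sum_{x'} P(x,x')\hat f(\theta,x') = f(\theta,x) - \barf(\theta)$. A finite, irreducible chain with unique invariant pmf $\uppi$ (SA2) admits such a solution, and since $f(\theta,\cdot)$ grows only affinely in $\theta$ (SA1) one can choose $\hat f$ with the \emph{uniform} bound $\|\hat f(\theta,\cdot)\|_\infty \le c(1+\|\theta\|)$. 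Writing $\Delta_{n+1} = \bigl[\hat f(\theta_n,\Phi_{n+1}) - (P\hat f)(\theta_n,\Phi_n)\bigr] + \bigl[(P\hat f)(\theta_n,\Phi_n) - (P\hat f)(\theta_n,\Phi_{n+1})\bigr]$ and telescoping the second bracket across $n$ converts $\alpha_{n+1}\nabla V(\theta_n)^\transpose\Delta_{n+1}$ into a genuine martingale-difference term, summable boundary terms, and a remainder controlled by $\|\theta_{n+1}-\theta_n\|\le\alpha_{n+1}c(1+\|\theta_n\|)$ times $\alpha_{n+1}$. Substituted into the display, all the extra terms are $O(\alpha_{n+1}^2(1+\|\theta_n\|^2))$ plus a martingale difference plus a telescoping piece, so the Robbins--Siegmund almost-supermartingale theorem (using $\sum\alpha_n^2<\infty$ from \eqref{e:step-sizeConditions}) yields \eqref{e:thetaBdd}. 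To promote this to the uniform bound $B_\tTheta$ of \eqref{e:ODEbdd}, I would restart the drift estimate at the first time after any $n$ that $\|\theta_n\|>r_0$: the negative drift returns the process to a fixed sublevel set of $V$ within a number of steps whose law is controlled uniformly in the starting point, and a Borel--Cantelli bound on the overshoot during such an excursion caps $\limsup_n\|\theta_n\|$ by a constant independent of $(\theta_0,\Phi_0)$, exactly as in \cite{bormey00a}.

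\medskip
Part (ii) is then routine given (i): with $\{\theta_n\}$ a.s.\ bounded, the piecewise-linear interpolation in rescaled time $t_n=\sum_{k\le n}\alpha_k$ is almost surely a bounded asymptotic pseudotrajectory of the mean flow \eqref{e:ODESA1} --- the same Poisson decomposition kills the Markov noise, now with constants uniform over the compact range of the iterates --- and global asymptotic stability (SA3) forces the $\omega$-limit set to be $\{\theta^*\}$, i.e.\ $\theta_n\to\theta^*$ a.s. For (iii) I would linearize at $\theta^*$: by (SA4), $\barf(\theta) = A^*(\theta-\theta^*) + o(\|\theta-\theta^*\|)$ with $A^*$ Hurwitz, so near $\theta^*$ the error $\tiltheta_n$ obeys a perturbed linear SA recursion and the scaled error $z_n = \tiltheta_n/\sqrt{\alpha_n}$ obeys, to leading order, a stable linear recursion driven by the Poisson-corrected noise. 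Its stationary covariance is the unique solution $\SigmaTheta$ of the Lyapunov equation $A^*\SigmaTheta + \SigmaTheta (A^*)^\transpose + \Sigma_\Delta = 0$, where $\Sigma_\Delta$ is the asymptotic (long-run) covariance of the noise in stationarity, computed from $\uppi$ via the Poisson solution; the distributional CLT for $z_n$ is then the standard CLT for stochastic approximation with Markov noise.

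\medskip
The \emph{mean-square} assertion \eqref{e:covThm} --- as opposed to the weak limit --- requires uniform integrability of $\|z_n\|^2$, which I would obtain by iterating moment recursions for $\Expect[\|\tiltheta_n\|^{2+\varepsilon}]$ derived from the linearized dynamics (themselves a by-product of rerunning the stability argument with $V$ replaced by $\|\cdot\|^{2+\varepsilon}$), giving $\sup_n\Expect[\|z_n\|^{2+\varepsilon}]<\infty$. Finally \eqref{e:PRcovThm} follows from the Polyak--Ruppert averaging identity: summing the linearized recursion and Abel-telescoping gives $\thetaPR_n - \theta^* = -(A^*)^{-1}\tfrac1n\sum_{k=1}^n \Delta_{k+1} + o(1/\sqrt n)$, a martingale CLT on the averaged noise yields $\SigmaPR = (A^*)^{-1}\Sigma_\Delta (A^*)^{-\transpose}$ --- which is minimal in the matricial (positive-semidefinite) order --- and the same $(2+\varepsilon)$-moment bounds upgrade this to convergence of $n\Expect[\tilthetaPR_n (\tilthetaPR_n)^\transpose]$. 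Beyond the Markov-noise handling in (i), the one other delicate point is precisely this last upgrade from a distributional CLT to convergence of moments, where the uniform $(2+\varepsilon)$-moment bounds are indispensable.
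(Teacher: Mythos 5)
The paper offers no proof of this theorem: it is stated with a terminal QED and imported wholesale from \cite{chedevborkonmey21}, so there is no in-paper argument to compare against. Your outline reconstructs essentially the architecture of that reference --- a (V4)-type drift inequality combined with a Poisson-equation decomposition of the Markovian noise and an almost-supermartingale argument for (i), the ODE method on the interpolated trajectory for (ii), and linearization at $\theta^*$ plus Polyak--Ruppert averaging with the asymptotic covariance \eqref{e:SigmaMD} for (iii) --- and is architecturally sound. The two places where your sketch leans hardest on unstated work are exactly the places the cited reference spends its effort: manufacturing a globally Lipschitz, $C^1$ Lyapunov function satisfying \eqref{e:ddt_bound_LyapfunTmp} from mere local asymptotic stability of the radially homogeneous ODE@$\infty$ (a Lipschitz converse Lyapunov function is not enough, since the drift expansion needs a controlled gradient), and upgrading the distributional CLT to the moment limits \eqref{e:covThm}--\eqref{e:PRcovThm}, for which the uniform $(2+\varepsilon)$-moment bounds you invoke must themselves be derived from the drift machinery of part (i) rather than assumed.
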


 The covariance matrix $\SigmaPR$ is   minimal in a matricial sense, made precise in \cite{rup88,pol90}.   It has the explicit form $ \SigmaPR =   G  \Sigma_{\Delta}^* G^\transpose  $ in which $G = - ( A^\ocp )^{-1}$,  the stochastic Newton-Raphson gain of Ruppert \cite{rup85},   and   $\Sigma_{\Delta}^* $ is
 the asymptotic covariance 
  \begin{equation}
\begin{aligned}
\Sigma_{\Delta}^* = \sum_{k=-\infty}^\infty \Expect_\pi [ \Delta_k^*  \{\Delta_k^*\} ^\transpose]
\end{aligned}
\label{e:SigmaMD}
\end{equation}
where   $\{ \Delta_k^* \eqdef f(\theta^*, \Phi_{k})  : k \in\intgr\}$,  with 
 $\bfPhi$   a stationary version of the Markov chain
 on the two-sided time interval.
 
\wham{A criterion for stationary points}  

The existence of a suitable Lyapunov function implies the existence of a stationary point.


\begin{proposition}[Lyapunov Criterion for Existence of a Stationary Point]
\label[proposition]{t:ODEstableImpliesPBE}
For an ODE \eqref{e:ODESA1} with globally Lipschitz continuous vector field,  suppose there is a function $V\colon\Re^d\to\Re_+$ with locally Lipschitz continuous gradient, satisfying  for some $\bdd{t:ODEstableImpliesPBE}$,
 \[
 \nabla V(\theta)^\transpose \barf(\theta)  \le -1\,,\quad  \textit{whenever $\|\theta\|\ge \bdd{t:ODEstableImpliesPBE}$.}
 \]
 Suppose moreover that $V$ is convex and coercive.   Then there exists a solution to  $  \barf(\theta^*) = \Zero$.
\end{proposition}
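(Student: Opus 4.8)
The plan is to reduce the claim to Brouwer's fixed point theorem applied to a sublevel set of $V$. Write $b_0 \eqdef \bdd{t:ODEstableImpliesPBE}$ and fix
\[
c \eqdef 1 + \max_{\|\theta\|\le b_0} V(\theta), \qquad K \eqdef \{\theta\in\Re^d : V(\theta)\le c\}.
\]
Since $V$ is continuous (its gradient is locally Lipschitz, so $V\in C^1$) and coercive, $K$ is closed and bounded, hence compact; since $V$ is convex, $K$ is convex; and since $\{V<c\}$ is open, nonempty (it contains $\{\|\theta\|\le b_0\}$), and contained in $K$, the set $K$ is a convex body (a compact convex set with nonempty interior). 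By the choice of $c$ we have $\{\|\theta\|\le b_0\}\subseteq\{V<c\}\subseteq\mathrm{int}\,K$, so every boundary point $\theta\in\partial K$ satisfies $V(\theta)=c$ and $\|\theta\|>b_0$; the standing hypothesis then gives
\[
\nabla V(\theta)^\transpose \barf(\theta) \le -1 \qquad\text{for all } \theta\in\partial K.
\]
Moreover $\nabla V(\theta)\neq \Zero$ for $\theta\in\partial K$, since $\nabla V(\theta)=\Zero$ together with convexity would make $\theta$ a global minimizer of $V$, forcing $V(\theta)=\min V<c$.

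Next I would introduce the fixed point map. Because $\barf$ is globally Lipschitz it is continuous, and the Euclidean projection $\Pi_K$ onto the closed convex set $K$ is single-valued and continuous; hence $T(\theta)\eqdef\Pi_K\!\bigl(\theta+\barf(\theta)\bigr)$ defines a continuous map of $\Re^d$ (in particular of $K$) into $K$. Brouwer's theorem yields $\theta^*\in K$ with $\theta^* = \Pi_K(\theta^*+\barf(\theta^*))$, and the variational characterization of the projection gives
\[
\barf(\theta^*)^\transpose (y-\theta^*) \le 0 \qquad\text{for all } y\in K.
\]

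It remains to deduce $\barf(\theta^*)=\Zero$, and this is the only step needing real care, as it is where convexity and the choice of $c$ are used. Consider the dichotomy $\theta^*\in\mathrm{int}\,K$ or $\theta^*\in\partial K$. If $\theta^*\in\mathrm{int}\,K$, then $y\eqdef\theta^*+\epsilon\barf(\theta^*)\in K$ for all small $\epsilon>0$, and the inequality above becomes $\epsilon\|\barf(\theta^*)\|^2\le 0$, forcing $\barf(\theta^*)=\Zero$. I claim the case $\theta^*\in\partial K$ is impossible: setting $v\eqdef\nabla V(\theta^*)\neq\Zero$ and $y_\epsilon\eqdef\theta^*-\epsilon v$, the first-order expansion of the $C^1$ function $V$ gives $V(y_\epsilon)=c-\epsilon\|v\|^2+o(\epsilon)<c$ for all sufficiently small $\epsilon>0$, so $y_\epsilon\in K$; the variational inequality applied to $y=y_\epsilon$ then yields $-\epsilon\,\barf(\theta^*)^\transpose v\le 0$, i.e.\ $\nabla V(\theta^*)^\transpose\barf(\theta^*)\ge 0$, contradicting the boundary drift bound $\nabla V(\theta^*)^\transpose\barf(\theta^*)\le -1$. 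Hence $\theta^*\in\mathrm{int}\,K$ and $\barf(\theta^*)=\Zero$. The main obstacle is thus not any single hard estimate but assembling these standard facts correctly: choosing $c$ so the drift inequality is available on all of $\partial K$, the non-vanishing of $\nabla V$ on the level set (via convexity and $c>\min V$), and the validity of the Brouwer/variational-inequality setup on the convex body $K$.

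Finally, a remark on an alternative route that stays closer to the paper's ODE viewpoint: with the same $c$, the set $K$ is forward invariant for the mean flow $\ddt\odestate_t=\barf(\odestate_t)$ (well posed globally by the Lipschitz hypothesis), because $\ddt V(\odestate_t)=\nabla V(\odestate_t)^\transpose\barf(\odestate_t)\le -1<0$ whenever $\odestate_t\in\partial K$; applying Brouwer to the time-$\tfrac1n$ maps $\odestate_0\mapsto\odestate_{1/n}$ on $K$, extracting fixed points $\theta_n$ and a convergent subsequence $\theta_n\to\theta^*$, and using joint continuity of the flow with the semigroup property gives $\odestate_t\equiv\theta^*$, hence $\barf(\theta^*)=\Zero$.
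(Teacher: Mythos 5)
Your proof is correct, but it takes a genuinely different route from the paper's. The paper also reduces to Brouwer's theorem, but applies it to the unprojected Euler map $L_\delta(\theta)=\theta+\delta\barf(\theta)$ on the sublevel set $S_\delta=\{V\le b_\delta\}$ with $b_\delta=\sup\{V(L_\delta(\theta)):\|\theta\|\le b_0\}$; the work there is to show $S_\delta$ is invariant under $L_\delta$, which is done via the convexity gradient inequality $V(\theta)\ge V(\theta^+)+\nabla V(\theta^+)^\transpose(\theta-\theta^+)$, a second-order error term $b_v\delta^2$ coming from the local Lipschitz continuity of $\nabla V$, and the explicit choice $\delta=1/b_v$ so that the uniform drift $-1$ dominates the error. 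You instead enforce invariance for free by composing the (unit-step) Euler map with the Euclidean projection $\Pi_K$ onto a fixed sublevel set $K$, and then pay for it by having to exclude boundary fixed points, which you do cleanly via the variational characterization of the projection together with the drift bound and the non-vanishing of $\nabla V$ on $\partial K$ (a nice use of convexity plus $c>\min V$). A consequence worth noting: your argument needs only that $V$ is $C^1$ and that $\nabla V^\transpose\barf<0$ on the relevant boundary, whereas the paper's quantitative step-size computation genuinely uses the Lipschitz gradient and the uniform bound $-1$; conversely, the paper's proof avoids invoking properties of the metric projection. Your concluding remark about the flow maps $\odestate_0\mapsto\odestate_{1/n}$ is a third valid route, though the forward-invariance claim there deserves the standard first-exit-time argument to be airtight. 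All three rest on the same pillars: convexity and coercivity of $V$ to produce a convex body, and Brouwer.
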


\begin{proof}
 Let $L_\delta(\theta) = \theta + \delta \barf(\theta)$ for $\theta\in\Re^d$,  with   $\delta>0$ to be chosen.
  For $\delta>0$ sufficiently small we construct a convex and compact set $S_\delta$ for which $L_\delta(\theta) \in S_\delta$ for each $\theta \in S_\delta$.  It follows from Brouwer's fixed-point theorem that there is a 
solution to  $L_\delta(\theta^*) = \theta^*  $.   This is equivalent to the desired conclusion   $  \barf(\theta^*) = \Zero$. 

Denote $b_\delta = \sup\{ V( L_\delta(\theta) )  :   \|\theta\| \le \bdd{t:ODEstableImpliesPBE} \}$,  and $S_\delta = \{ \theta :  V( \theta )\le b_\delta \}$;   a   convex and compact set subject to the assumptions on $V$.

We next show that $S_\delta $ is invariant under $L_\delta$ if $\delta$ is small.  We consider two cases,  based on whether or not $\theta$ lies in the set  $S = \{ \theta :  \|\theta\| \le \bdd{t:ODEstableImpliesPBE} \}$

\wham{1.}
If $\theta \in S_\delta\cap S$,   then $ L_\delta(\theta) \in 
S_\delta $ by construction of $S_\delta$.   

\wham{2.}  
If  $\theta \in  S_\delta\setminus S $ then we apply convexity combined with the drift condition:    denoting $\theta^+ = L_\delta(\theta) $,  
\[
V(\theta) \ge V ( \theta^+ ) +\nabla V ( \theta^+ )^\transpose (\theta -\theta^+) 	
			= V ( \theta^+ )  - \delta \nabla V ( \theta^+ )^\transpose \barf(\theta)    
\]
Since the gradient is locally Lipschitz continuous and $\barf$ is globally Lipschitz continuous, there is $b_v$ satisfying
 \[
V(\theta) \ge V ( \theta^+ )  - \delta \nabla V ( \theta)^\transpose \barf(\theta)    -b_v\delta^2\,, \qquad \theta \in  S_\delta\setminus S
\]
The value of $b_v$ can be chosen independent of $\delta\in (0,1]$.

Under the assumed drift condition this gives $ V ( \theta^+ )  \le V(\theta)   - \delta + b_v\delta^2 $.  Choosing $\delta=1/b_v$ gives 
$ V ( \theta^+ )  \le V(\theta)  \le b_\delta$, in which the second inequality holds because $\theta \in  S_\delta\setminus S $.  Hence $ L_\delta(\theta)  = \theta^+\in S_\delta$ as desired. 
\end{proof}

\subsection{Compatible assumptions for Q-learning} 

The basic Q-learning algorithm \eqref{e:Qlambda} is an instance of stochastic approximation, for which we can apply general theory subject to assumptions on the input used for training (recall \eqref{e:TrainingPolicy}).

Two settings are considered:

\wham{Oblivious training}  
This means that \eqref{e:TrainingPolicy} simplifies to 
\begin{equation}
U_k = \fee(X_k , I_k)\,,\qquad k\ge 0\,, 
\label{e:TrainingPolicyOb}
\end{equation}
in which it is always assumed that  $\{I_k\}$ is   i.i.d..

It follows that the pair process  $ (X_k , U_k) : k\ge 0\}$ is a time homogeneous Markov chain.     It is assumed to be uni-chain (i.e., the invariant pmf $\uppi$ is unique). 
 In the expression $ f_{n+1}(\theta_n) = f(\theta_n \,, \Phi_{n+1} )$ we take  
$\{\Phi_k = (X_k;X_{k+1}; U_k) : k\ge 0\}$, which is also a time homogeneous Markov chain, for which its invariant pmf is also unique and easily expressed in terms of $\uppi$ and the controlled transition matrix.  

If the function class is linear  $\{Q^\theta = \theta^\transpose \psi : \theta\in\Re^d\}$,   then the autocorrelation matrix is assumed full rank  
\begin{equation}
R_0 = \Expect_{\uppi}[ \psi(X_n,U_n)\psi(X_n,U_n)^\transpose]
\label{e:R}
\end{equation}
where the expectation is taken in steady-state

\wham{Optimistic training} 

In this non-oblivious approach the input sequence depends on  the parameter sequence,  and is designed to approximate the Q-greedy policy \eqref{e:feetheta}.   There are only a finite number of deterministic stationary policies, so  $\fee^\theta$ is necessarily discontinuous in $\theta$.   
The region on which continuity holds is denoted
\begin{equation}
\feeUnique =  
\left\{ 
\parbox{.72\hsize}{\raggedright  
$  \theta\in\Re^d :   $  there is $\epsy>0$ s.t.\   $\fee^\theta(x) =  \fee^{\theta'}(x) $ 
for all $x$ when $\| \theta - \theta'\|\le \epsy $ 
}
\right\}
\label{e:Sunique}
\end{equation}

The training policy is taken of the form,
\begin{equation}
U_k =  (1 - B_k) \clU_k  +  B_k \EXP_k
\label{e:epsyGreedy}
\end{equation}
in which $\{ B_k \} $ is an i.i.d.\ Bernoulli sequence with $\Prob\{B_k =1\} =\epsy$,   and $\{ \EXP_k \} $ is an i.i.d.\  sequence taking values in $\ustate$ and independent of $\{ B_k \} $.     
The $\ustate$-valued random variable   $ \clU_k $ depends on the parameter $\theta_k$, and is independent of  $( B_k ;\EXP_k)$ for each $k$.      
  
The sequences $\{U_k,\clU_k : k\ge 0\}$  are defined by  randomized stationary policies $\{\feex^{\theta}  \,, \, \feex^{\theta}_0     :  \theta\in\Re^d\}$.  
Both $\feex^{\theta} (\varble \mid x)$ and $\feex^{\theta}_0 (\varble \mid x)$ are  pmfs on $\ustate$ for each $x$ and $\theta$.   
Based on the assumptions imposed after \eqref{e:epsyGreedy}, we have  
\begin{equation}
\begin{aligned}
\Prob\{ U_k = u\mid  \clF_k^-;   X_k = x   \}    
&=
\feex^{\theta_k} (u \mid x)  
\\
&=(1-\epsy) \feex^{\theta_k}_0 (u \mid x)    +  \epsy \upnu_\tEXP(u) 
\end{aligned} 
 \label{e:OptimisticGeneric}
\end{equation}
with      $\upnu_\tEXP$ the common  pmf for $\{ \EXP_k\}$,
and $\clF_k^- = \sigma\{  X_i, U_i : i<k ;   B_i, \EXP_i : i\le k    \} $  (a partial history of observations up to iteration $k$).

The special cases are described in the following.

 \wham{1. \boldmath  $\epsy$-greedy.}
 Recalling the definition of $\fee^\theta$ in \eqref{e:feetheta},  the  $\epsy$-greedy  policy is defined by the choice $\clU_k  =  \fee^{\theta_k} (X_k)$, so that 
\begin{equation}
\feex^{\theta}_0 (u \mid x)  =   \ind\{u =  \fee^\theta(x) \}  
\label{e:epsyGreedyTrue}
\end{equation}

The mean flow has many attractive properties  (see \Cref{t:QstableODEGreedy} in the Appendix).
However,  because $\{ \fee^\theta : \theta\in\Re^d\}$ is a piecewise constant function of $\theta$,  it follows that  
the vector field $\barf$ is not continuous in $\theta$ as required in \Cref{t:chedevborkonmey21}.  

\wham{2.  Gibbs approximation}
Fix a large constant $\kappa>0$ and define  
\begin{equation}
\feex^{\theta}_0(u \mid x) 
 =  \frac{1}{ \clZ^\theta_\kappa  (x)}   \exp\bigl(  -\kappa  Q^\theta(x,u)     \bigr)  
\label{e:epsyGreedySoftGibb}
\end{equation}
in which $\clZ^\theta_\kappa  (x)$ is normalization.     
This is indeed an approximation of \eqref{e:epsyGreedyTrue}: 
for $\theta^0\in \feeUnique$, 
\begin{equation}
\lim_{r\to\infty}   \frac{1}{ \clZ^{r\theta}_\kappa  (x)}   \exp\bigl( - \kappa Q^{r\theta}(x,u)     \bigr)    =    
 \ind\{ u = \fee^{\theta^0}(x) \}
\label{e:GibbsDegeneracy}
\end{equation}

The limit \eqref{e:GibbsDegeneracy}  has two important implications.  First is that the vector field  $\barfinf$  for the  ODE@$\infty$ is unchanged whether we consider 
\eqref{e:epsyGreedyTrue} or its smooth approximation \eqref{e:epsyGreedySoftGibb}.    Second is that discontinuity of $\barfinf$  implies that $\barf$ is not globally Lipschitz continuous, which violates an assumption of \Cref{t:chedevborkonmey21}.

 \wham{3. Tamed Gibbs approximation}
This is a modification of \eqref{e:epsyGreedySoftGibb} in which $\kappa$ depends on $\theta$:
\begin{equation}
\feex^{\theta}_0 (u \mid x) 
 =    \frac{1}{ \clZ^\theta_\kappa  (x)}   \exp\bigl( - \kappa_\theta  Q^\theta(x,u)     \bigr)  
\label{e:epsyGreedySoftGibbTamed}
\end{equation}
For analysis the following structure is helpful:
choose a large constant $\kappa_0>0$,   and  assume  that
 \begin{equation}
\kappa_\theta \    \begin{cases}    =  \frac{1}{ \|\theta\|}  \kappa_0&   \|\theta\|\ge 1
\\				 \ge \half \kappa_0   &\textit{else}
	\end{cases}  
\label{e:normGibbsBdds}
\end{equation}
This will be called the  \textit{$(\epsy,\kappa_0)$-tamed Gibbs policy} when it is necessary to make the policy parameters explicit.

  The equality in \eqref{e:normGibbsBdds} ensures the following identity holds  all $x,u$:
 \begin{equation}
\text{   $\feex^{r\theta} (u \mid x) =  \feex^\theta(u \mid x) $ for all $r\ge 1$   and $\|\theta\| \ge 1$.}
\label{e:LargeParPolicy}
\end{equation}

\medskip

The Q-learning algorithm \eqref{e:Qintro} can be cast as stochastic approximation when the input is defined using any of the training policies described above,   in which  we take $\Phi_{n+1} =  (X_n, X_{n+1} , U_n)$ since  these three variables appear in  \eqref{e:Qintro}.

It is assumed in \Cref{t:chedevborkonmey21} that $\bfPhi$ is \textit{exogenous}---its transition matrix does not depend on the parameter sequence.     
Fortunately, there is now well developed theory that allows for parameter-dependent dynamics for $\bfPhi$ in the SA recursion \eqref{e:SAa}---see the recent paper  
\cite{faibor23} for history and recent results.   In particular,  theory of  convergence  and asymptotic statistics is now mature.

The question is then, \textit{how can we apply SA theory to make statements about convergence and  convergence rates?}

\section{Trouble with Tabular}
\label{s:Q}

\subsection{Linear function approximation}

In this section we  restrict to a linearly parameterized family  $\{Q^\theta \eqdef \theta^\transpose  \psi\colon \theta \in \Re^d \}$,  where $\psi\colon \state\times\ustate \to \Re^d$ is a vector of basis functions. 
To avoid long equations we often use the shorthand notation,
\begin{equation}
\csub{n} = c(X_n,U_n)\,,\quad 
\psisub{n} = \psi(X_n,U_n)\, ,
\quad f_{n+1}(\theta_n) = f(\theta_n \,, \Phi_{n+1} )
\label{e:subNotation}
\end{equation}
   In the recursion \eqref{e:Qlambda} we then have $ \elig_n  = \nabla_\theta  Q^\theta(X_{n}, U_{n})  = \psisub{n}$.

When considering optimistic policies we encounter an additional complication in the description of the vector field for the mean flow.    If the input is of the form  \eqref{e:OptimisticGeneric}, then   for each $\theta$ we consider the resulting transition matrix for the joint process $ \{ (X_k,U_k)  : k\ge 0\}$ defined by
\begin{equation}
T_\theta(z,z')  \eqdef    P_u(x,x') \feex^{\theta} (u' \mid x')  \,, \qquad  z=(x,u)\, ,\  z'=(x',u')  \in \state\times\ustate\, .
\label{e:Ttheta}
\end{equation}
where $P$ is the controlled transition matrix.   It is assumed that each admits a unique invariant pmf $\uppi_\theta$.

\begin{subequations}

Q-learning in the form 
\eqref{e:Qlambda} is an instance of stochastic approximation,  with mean flow
 \begin{align}
  \barf(\theta)  &=   \Expect_{\uppi_\theta} [  \psisub{n}    \BelErr  (X_n,U_n;\theta)]\,,  
\label{e:ProjBE}
   \\
            &  				  \BelErr (x,u;\theta)  =   c(x,u)    - Q^{\theta}(x,u) + \disc   \sum_{x'} P_u(x,x')  \uQ^{\theta} (x')
  \label{e:BE}
\end{align} 
An alternative formula is valuable for analysis,
\begin{equation}
\begin{aligned}
&  \barf(\theta) = A(\theta) \theta - b(\theta) 
   \\
   \textit{with}\quad
A(\theta )= - \Expect_{\uppi_\theta} & \bigl[  \psisub{n} \{  \psisub{n} - \disc \psi(X_{n+1},\fee^\theta(X_{n+1})  \} ^\transpose \bigr] \,,
  \quad b(\theta) = - \Expect_{\uppi_\theta} [  \psisub{n} \csub{n} ]
  \end{aligned}  
\label{e:barfQgen}
\end{equation}
 The  projected Bellman equation \eqref{e:PBE}  is precisely the root finding problem,  $\barf(\theta^*) = \Zero$.

 \end{subequations}

Any choice of oblivious policy fits the standard SA theory, with $\barf$ globally Lipschitz continuous.  
The tamed Gibbs  approximation is the only choice among the optimistic training rules for which $\barf$ satisfies the smoothness conditions required in \Cref{t:chedevborkonmey21}.
   \notes{add ref to vivek on  discontinuous RHS.   See his papers.  }

\smallskip

\textit{In the remainder of this section we  restrict to   oblivious training.    
}

\subsection{Tabular Q-learning, the good and the bad}
\label{s:tabQ}

 In the tabular setting we have $d=|\state|\times|\ustate|$:  given an ordering of the state-action pairs $\{ (x^i,u^i) : 1\le i\le d\}$ we take for each $i$,
\begin{equation}
\psi_i(x,u) = \ind\{ (x,u) =  (x^i , u^i) \} \,,\qquad x\in\state\,,\ u\in\ustate
\label{e:tabBasis}
\end{equation}
In view of  \eqref{e:Qlambda} we find that only one entry of the parameter is updated at each iteration.   It is typical to use a diagonal matrix gain,
\begin{equation}
\theta_{n+1} = \theta_n + \alpha_{n+1}  G_n  \Tdiff_{n+1} \elig_n 
\label{e:matrixGainQ}
\end{equation}
in which $G_n^{-1} (i,i) $ indicates the number of times the pair $(x^i , u^i) $ is visited up to time $n$   (set to unity when this is zero).

Observe that by definition $Q^{\theta_n}(x^i,u^i) =\theta_n(i)$.   Adopting the notation $q_t$ instead of $\odestate_t$ for the ODE state in the mean flow \eqref{e:ODESA1}  associated with the matrix gain recursion \eqref{e:matrixGainQ}, we have
\begin{equation}
\begin{aligned}
\ddt q_t  &=    A(q_t)  q_t  -    b
\end{aligned} 
\label{e:ODEq}
\end{equation}
 with $b$ the $d$-dimensional vector with entries $b_i = - c(x^i, u^i)$.
The matrix-valued function $A$ is piecewise constant:
\begin{equation}
A(q)  = -     [I - \disc    T(q)  ] \,,\qquad
T^{i,j} (q)  =  P_{u^i}(x^i, x^j)  \ind\{ u^j = \fee^q(x^j)  \}  
\label{e:QsimpleWatkins}
\end{equation}

 \wham{The good news:}  
 The statistical properties of the algorithm are attractive because   $\{  \Delta_{n+1}  \}$ appearing in \eqref{e:SAintro} is a  martingale difference sequence in 
 the tabular setting.
 
  The best news is stability:
 The induced operator norm of  $T(q)$ in $\ell_\infty$ is less than one, meaning $\max_i | \sum_j T_{i,j} (q) v_j  | \le \| v\|_\infty \eqdef \max_i |    v_i|$ for any vector $v$ and any $q$.   It follows that the $\ell_\infty $ norm serves as a Lyapunov function:
  Letting    $\tilq_t = q_t-\theta^*$ and $V(q) = \| \tilq_t \|_\infty $, 
 \[
 \ddt  V(q_t)  \le -(1-\disc) V(q_t) 
 \]
 This is how convergence is established for tabular Q-learning.
 
 \wham{The bad:}   
 The matrix $ I - \disc    Z(q)$ has an eigenvalue at $(1-\disc)$ for all $q$,   which is a reason for slow convergence when the discount factor is close to unity.

\begin{figure}
\centering
\includegraphics[width=.9\hsize]{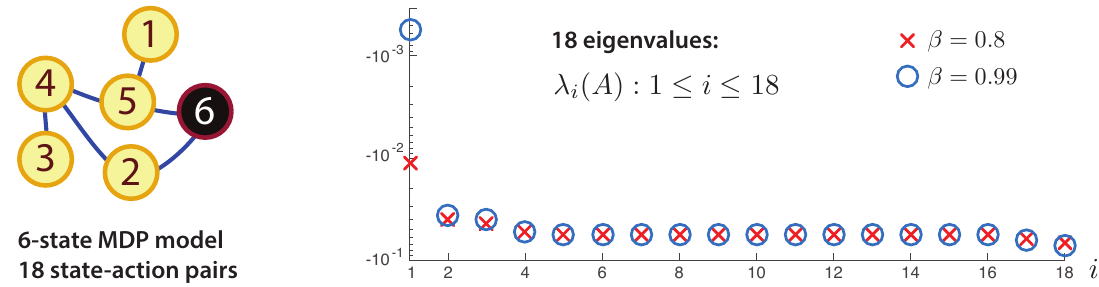}  

\caption{Graph for MDP and eigenvalues of $A^*$ }	
\label{f:6StateGraph+eigs}
 \end{figure}

It is now known that  the asymptotic covariance $\SigmaTheta$ appearing in 
\eqref{e:covThm} is not finite if  $\disc>1/2$  \cite{devmey17a,devmey17b}  (see also the sample complexity analysis that followed in \cite{wai19a}).
A running example in this prior work and \cite{CSRL,devbusmey21,dev19} is the   stochastic-shortest-path problem
whose state transition diagram is shown on the left hand side of \Cref{f:6StateGraph+eigs}.   The  state space $\state = \{1,\ldots,6\}$ coincides with the  six nodes of the un-directed graph, and the action space is $\ustate =\{ e_{x,x'} \}$,
$x,x' \in \state$, indicating decisions on moves.      Details on the description of disturbances can be found in    \cite{CSRL,devbusmey21}.

\Cref{f:AsymCov_Watg1}   shows results without the matrix gain.    With $\disc=0.8$ the output of the standard Q-learning algorithm is worthless after one million samples.  The matrix gain does offer some benefit---see plots in the next section---but convergence remains very slow for $\disc>\half$ using $\alpha_n = 1/n$.

 \begin{figure}[h]
\centering 
  \includegraphics[width= .75\hsize]{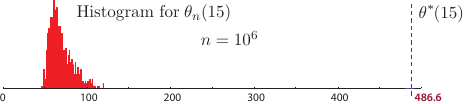}
\caption{Histogram of $10^3$ estimates of $\theta_n(15)$, with $n=10^6$ for the Watkins algorithm applied to the 6-state example with discount factor $\disc = 0.8$}
\label{f:AsymCov_Watg1}
\end{figure}

\wham{ODE@$\infty$}

\Cref{t:chedevborkonmey21} may be applied to  Q-learning \eqref{e:Qlambda} in this tabular setting,  and the theorem easily extends to the case of the matrix gain algorithm \eqref{e:matrixGainQ}.

  It is clear that (SA1) holds, and (SA2) holds for oblivious training as assumed here.   As already remarked,  it is not difficult to establish stability of  \eqref{e:ODEq} to establish (SA3).

The ODE@$\infty$ associated with 
\eqref{e:ODEq} is a minor modification:
\[
\ddt x_t   =   A(x_t)   x_t   
\]
We have $ \ddt \| x_t  \|_\infty         \le -(1-\disc)  \| x_t  \|_\infty   $,  which implies that the ODE@$\infty$ is stable as required in \Cref{t:chedevborkonmey21}.

\subsection{Change your goals}

 Recall that the covariance $\SigmaTheta$ defined in  \eqref{e:covThm} is  not finite   for Q-learning in the form \eqref{e:matrixGainQ}  with step-size $\alpha_n = g/n$  using $g < \half (1-\disc)^{-1}$, which explains the poor performance illustrated in   \Cref{f:AsymCov_Watg1}.
 
 A reader with experience in SA would counter that this is a poor choice of step-size.   Use instead  $\alpha_n = 1/n^\rho$,  with $\rho\in (\half, 1)$,  and then average using
\eqref{e:thetaPR} to obtain $\{\thetaPR_n \}$.  It is found that averaging fails for this example for large discount factors, even though it is known that these estimates achieve the optimal asymptotic covariance  \cite{CSRL,devbusmey21,dev19}.

The observed numerical instability  is a consequence of the eigenvalue at $-(1-\disc)$ for $A^*$.
This can be moved through a change in
objective.   For example, construct an algorithm that estimates the \textit{relative Q-function},
 \[
 H^\star
(x,u) = Q^\star
(x,u)  -  \langle \upnu,  Q^\star
 \rangle 
 \]
 where $\upnu$ is a fixed pmf on $\state\times \ustate$.   Subtracting a constant doesn't change the minimizer over $u$,  and has enormous benefits. 

  The function $H^\star
$ satisfies a DP equation similar to   \eqref{e:DCOE-Qb}, which motivates \textit{relative Q-learning}.   It is shown in  \cite{devmey22} that the eigenvalues of $A^*$ remain bounded away from the imaginary axis uniformly for all $0\le \disc \le 1$, resulting in much faster convergence.
See \cite{CSRL} for generalizations.  

 \Cref{f:rel} is adapted from  \cite{devmey22}  for the six-state example.   The plots show  the span  semi-norm error $\|Q^{\theta} - Q^\star
\|_S$ for three algorithms, and two very large discount factors:
 $\disc = 0.999$ and $\disc = 0.9999$.   The plots illustrate two important points:

 \wham{1.}    Q-learning with a smaller gain converges quickly to $Q^*$ when measured in the span norm
 \[
 \|Q^{\theta} - Q^\star
\|_S \eqdef \min_a  \sup_{x,u}  |  Q^{\theta}(x,u) - Q^\star
(x,u) |   \,,\qquad \theta\in\Re^d
 \]
  \wham{2.}  Convergence of relative Q-learning is very fast in this example, even with discount factor close to unity.   
  
It appears that the span norm difference between estimates obtained using Q-learning    and relative Q-learning is very small.   This observation may be anticipated by comparing the respective mean flows     \cite{devmey22}.

\begin{figure}[h]
\centering
 \includegraphics[width=0.95\textwidth]{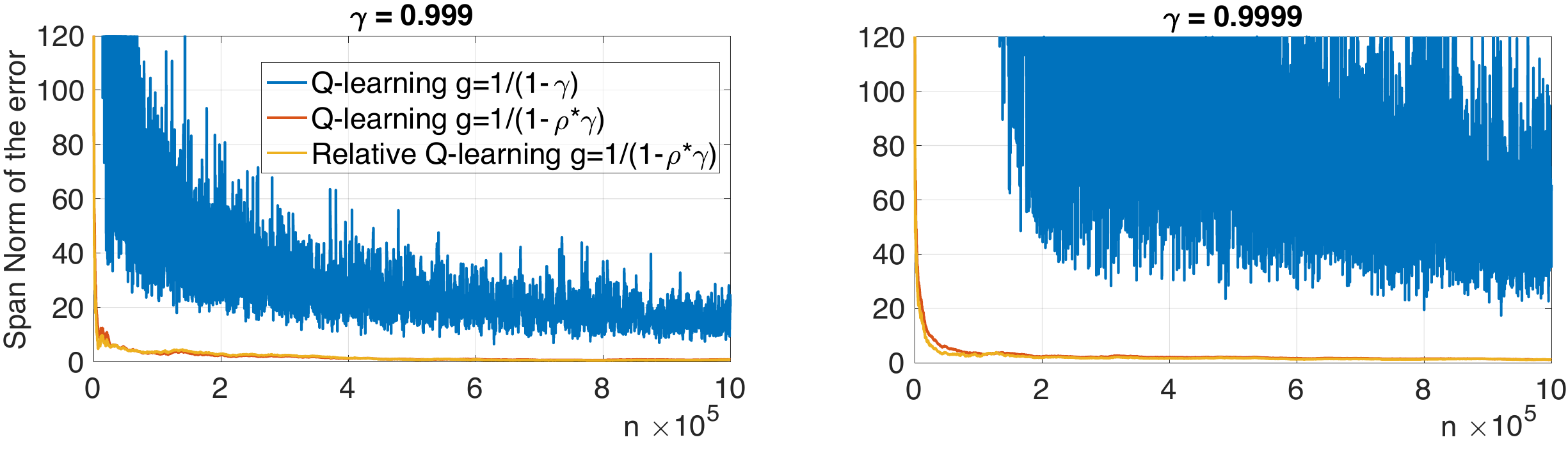} 
 
 \caption{Plots of the span semi-norm of the error $Q^{\theta_n} - Q^\star
$ for two versions of  Q-learning distinguished by the step-size, and relative Q-learning.  }
 \label{f:rel}
 \end{figure}

\section{Zap}
\label{s:Z}

Here the tabular setting is abandoned, and we do not even require linear function approximation. 
We maintain the assumption that the input for training is oblivious.
 
If our goal is to ensure that $\barf(\odestate_t)\to 0$ as $t\to\infty$ then we should \textit{design} dynamics to ensure this.    One approach, the focus of Devraj's dissertation  \cite{dev19} and a focus of the   monograph \cite{CSRL}, is the 
\textit{Newton-Raphson Flow}:
\begin{equation}
 \ddt \barf(\odestate_t) = - \barf(\odestate_t)  
 \quad \Longrightarrow  \quad
 		\barf(\odestate_t)  = e^{-t} \barf(\odestate_0)
 \label{e:NRflow}
\end{equation}
From the chain rule this results in the mean flow dynamics,
\begin{equation}
\ddt \odestate_t =- G_t  \barf(\odestate_t)  \,,\qquad G_t = A(\odestate_t)^{-1}  
\label{e:zapODE}
\end{equation}
where $A(\theta) $ is defined in \eqref{e:Atheta}.

\begin{figure}[h]
\centering
\includegraphics[width=0.7\hsize]{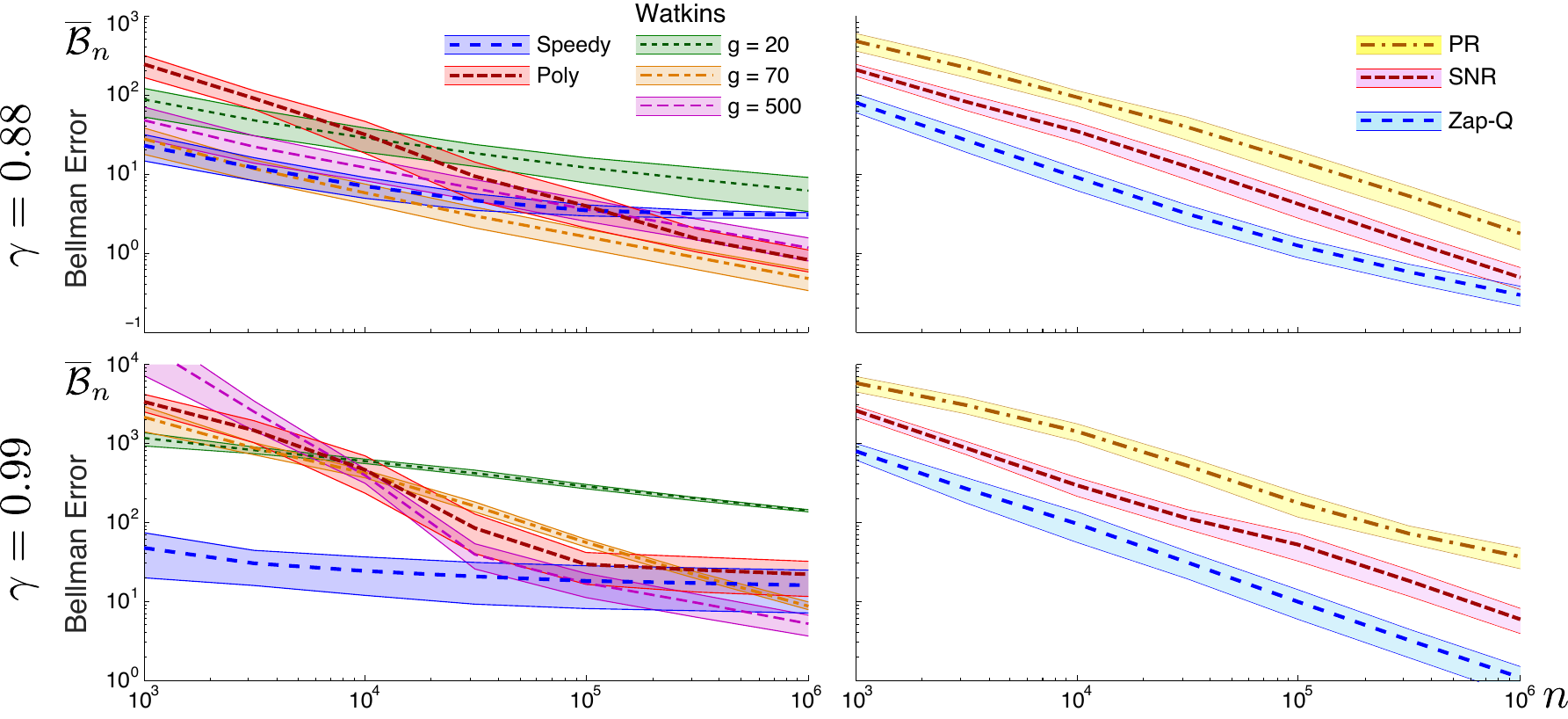}
\caption{Simulation-based $2 \sigma$ confidence intervals for the six Q-learning algorithms.  }
\label{6StateConfIntBEPlot1+2}
\end{figure}

\begin{subequations}

\wham{Zap stochastic approximation.} This is a  two time-scale  algorithm introduced in  \cite{devmey17a,devmey17b}. 
For initialization $\theta_0\,\in\Re^d  $, and $\haA_0 \in \Re^{d \times d}$, obtain the sequence of estimates $\{\theta_n: n \geq 0\}$ recursively:  
\begin{align} 
\theta_{n+1} &= \theta_n -  \alpha_{n+1} \haA_{n+1}^{-1} f(\theta_n \,, \Phi_{n+1})    
\label{e:ZAPnonlinearSAa} 
\\
\haA_{n+1} &= \haA_n + \stepf_{n+1}   [  A_{n+1} - \haA_n   ]\,,
		\qquad A_{n+1} \eqdef \partial_\theta f_{n+1}(\theta_n)  \,.
\label{e:ZAPnonlinearSAb} 
\end{align}
 The   two gain sequences  $\{\alpha_n\}$ and $\{\stepf_n\}$   satisfy  \eqref{e:gammaalpha}.
 
 \label{e:ZAPnonlinearSA} 

 \end{subequations}

The original motivation was  to optimize the rate of convergence,  which it does under mild assumptions using $\alpha_n = 1/n$:
\[
\SigmaTheta \eqdef
\lim_{n\to\infty}  n\Expect[\tiltheta_n {\tiltheta_n}^\transpose ] =  \SigmaPR
\] 
This choice of gain $g$ is critical with the choice $\alpha_n = g/n$:  
\wham{1.}    
$\SigmaTheta$ is finite when $g>\half$, but optimal only if $g = 1$. 
 
\wham{2.}    
 $\trace(\SigmaTheta)=  \infty$      if  $0<g<\half$ and   $\Sigma_{\Delta}^* $ is full rank.

  It was discovered in later research that the positive results hold even for \textit{nonlinear} function approximation
\cite{chedevbusmey20b}.
   Hence the greatest value of the matrix gain is the creation of a universally stable algorithm.

\begin{figure}
\centering

\includegraphics[width=0.7\hsize]{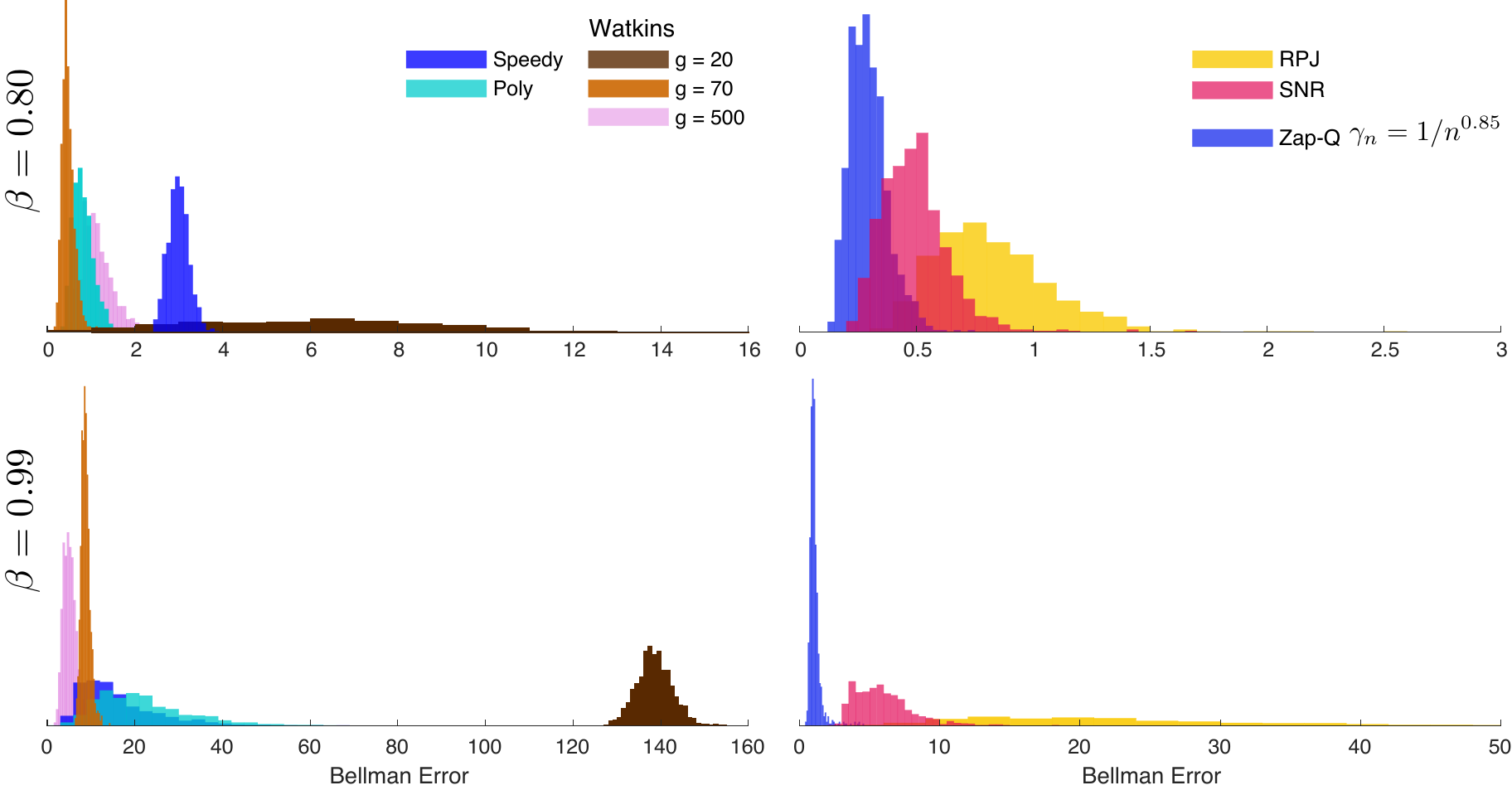}
\caption{Histograms of the maximal Bellman error at iteration $n=10^6$.}
\label{f:6StateHistMaxBEPlot08+99}
\end{figure}

\begin{subequations}

In the applications to Q-learning considered here, the accuracy of a parameter estimate $\theta$ may be measured in terms of the
 Bellman error and its maximum
\begin{align}
\BE^{\theta}(x,u)  &= \theta(x,u) -  r(x,u) - \beta \sum_{x'\in\state}  P_u(x,x')  \max_{u'} \theta(x',u')
\label{e:BError}
\\
\maxBE^{\theta}  &= \max_{x,u}  | \BE^{\theta}(x,u) | 
\label{e:BErrorMax}
\end{align}
Given that the CLT holds for $\sqrt{n} \tiltheta_n$,   the    sequence $\{\sqrt{n} \,\maxBE^{\theta_n}\}$ also converges in distribution as $n\to\infty$.

\end{subequations}

\Cref{6StateConfIntBEPlot1+2}
is taken from
\cite{devbusmey21}
which contains full details on the experiments.  
Plots show the empirical mean and  $2 \sigma$ confidence intervals   for $\disc = 0.8$ in row 1,   and   $\beta = 0.99$ in row 2.    The algorithms considered in the second column are explained here:

\wham{PR}  Estimates obtained from Q-learning and averaging \eqref{e:matrixGainQ}, with step-size 
   $\alpha_n = 1/n^{0.85}$.
\wham{SNR}   The single time scale variant of  Zap Q-learning using $\alpha_n =\stepf_n = 1/n$  (for which no theory is available).
\wham{Zap}   Zap Q-learning using $\alpha_n = 1/n$ and $\stepf_n  = 1/n^{0.85}$.
 
\Fig{f:6StateHistMaxBEPlot08+99}  shows   histograms of  $\{\maxBE_n^i,\, 1 \leq i \leq N\}$, $n = 10^6,$ for all the six algorithms;   this corresponds to the data shown in  \Fig{6StateConfIntBEPlot1+2}   at $n = 10^6$. 
Once again, full details may be found in \cite{devbusmey21}.

\section{Zap Zero}
\label{s:ZZ}

If we denote $w_t = G_t  \barf(\odestate_t) $ in the notation of \eqref{e:zapODE}, then equivalently  $A(\odestate_t) w_t -  \barf(\odestate_t) =0 $ for all $t$.
The Zap Zero algorithm of \cite{CSRL} is designed to achieve this constraint without matrix inversion.   
The ODE method for design suggests the $2d$-dimensional  ODE
\begin{equation}
\begin{aligned}
\ddt \odestate_t &  =  - w_t
\\
\ddt w_t  & =  \stepf_t  \bigl[ A(\odestate_t) w_t -  \barf(\odestate_t)   \bigr]
\end{aligned} 
\label{e:zapzeroODE1}
\end{equation}
in which the time varying gain is introduced in anticipation of a two time-scale SA translation.    
The time inhomogeneous ODE \eqref{e:zapzeroODE1} is stable provided $\stepf_t\uparrow\infty$ as $t\to\infty$, and in addition
(SA4) holds with  $A(\theta) $ Hurwitz for each $\theta$.

\wham{A universally stable algorithm}
A third state variable is introduced for reasons to be explained when we consider the SA translation.    Fix $M>0$, an arbitrary positive definite matrix, and consider the ODE
\begin{equation}
\begin{aligned}
\ddt \odestate_t &  = - [\odestate_t +   z_t    ]
\\
\ddt w_t  &=   - \stepf_t      \bigl[     A(\odestate_t) \{  \odestate_t  +  z_t \}  -  \barf(\odestate_t)   \bigr]
\\
\ddt z_t  &=   - \stepf_t        \bigl[      z_t  -  L_t w_t  \bigr]   &&   L_t = M  A(\odestate_t) ^\transpose
\end{aligned} 
\label{e:zapzeroODEnew}
\end{equation}

The choice $\ddt z_t      - \stepf_t        \bigl[      z_t  -  M  A(\odestate_t) ^\transpose w_t  \bigr]  $ is just one option.  Alternatives are described in \cite{laubusmey23}.

Assuming once more that  $\stepf_t\uparrow\infty$ as $t\to\infty$,  singular perturbation theory (e.g.\ \cite{kokorekha99})  provides methodology for verification of stability of \eqref{e:zapzeroODEnew},  proceeding in two steps:

\wham{1.} Consider the pair of ODEs with the slow variable frozen:  
\[
\begin{aligned} 
\ddt w^\theta_t  &= -A(\theta) \{  \theta + z^\theta_t \} +  \barf(\theta)   
\\
\ddt z^\theta_t  &=                -   z^\theta_t  +  L(\theta) w^\theta_t      &&   L(\theta) = M  A(\theta) ^\transpose
\end{aligned} 
\]
The gain $\stepf_t$ has been removed via a time transformation.   For stability analysis it is more convenient to write,
\begin{equation}
\begin{aligned} 
\ddt 
\begin{bmatrix}  
w^\theta_t 
\\
z^\theta_t 
\end{bmatrix}
=
\begin{bmatrix}  
0  &  -A(\theta)
\\
M  A(\theta) ^\transpose  & -I
\end{bmatrix}
\begin{bmatrix}  
w^\theta_t 
\\
z^\theta_t 
\end{bmatrix}
+
\begin{bmatrix}  
  \barf(\theta)    -A(\theta)   \theta
  \\
  0
 \end{bmatrix}
\end{aligned} 
\label{e:zapzeroODEnewFast}
\end{equation}
This is a linear system with constant input.   It is stable because any eigenvalue $\lambda$ of the state matrix solves the equation
\[
\lambda^2  + \lambda +  \mu_+   =0
\]
for some eigenvalue $\mu_+>0$ of the positive definite matrix $A(\theta)  M  A(\theta) ^\transpose$.    Any solution to this equation lies in the strict left half plane of $\Co$.

The equilibrium $(w^\theta_\infty ; z^\theta_\infty)$ of \eqref{e:zapzeroODEnewFast}
 satisfies
\begin{equation}   
    - A(\theta) \{  \theta +z^\theta_\infty \} +  \barf(\theta)   =0 \,, \qquad   
       M  A(\theta) ^\transpose  w^\theta_\infty -   z^\theta_\infty  =0 \, .
\label{e:zapzeroEqui1}
\end{equation}

 \wham{2.} The equilibrium  for \eqref{e:zapzeroODEnewFast} is substituted into the dynamics for the slow variable to obtain the approximation $x_t\approx \odestate_t $ with
 \[
 \ddt x_t   = - [  \theta + z_\infty^\theta]\big|_{\theta = x_t}
\]
 The equilibrium equations \eqref{e:zapzeroEqui1} imply that $  \theta +  z^\theta_\infty = [A(\theta)]^{-1}  \barf(\theta)  $ for all $\theta$, so that we recover the Newton-Raphson flow,  
 $\ddt x_t   =    -  [A(x_t)]^{-1}  \barf(x_t)  $.

\begin{subequations}

\wham{SA Translation}

The 2023 version of the \textbf{Zap Zero SA} algorithm is defined by the $3d$-dimensional recursion motivated by the ODE \eqref{e:zapzeroODEnew}.
For initialization $\theta_0,w_0,z_0\,\in\Re^d  $, obtain the sequence of estimates recursively:  
\begin{align} 
\theta_{n+1} &= \theta_n  -  \alpha_{n+1} \bigl[    \theta_n +   L_{n+1}  w_n   \bigr]
\label{e:ZapZeroa} 
\\
w_{n+1} &= w_n - \stepf_{n+1}     \bigl[     A_{n+1} \{  \theta_n + z_n \} -  f_{n+1}(\theta_n)   \bigr]
\label{e:ZapZerow} 
\\
z_{n+1} &= z_n - \stepf_{n+1}     \bigl[        z_n  -  L_{n+1} w_n  \bigr]   && 
  L_{n+1} = M  A_{n+1} ^\transpose
\label{e:ZapZeroz} 
\end{align}
where as above $    A_{n+1} \eqdef \partial_\theta f_{n+1}(\theta_n)  $.
 The   two gain sequences  $\{\alpha_n\}$ and $\{\stepf_n\}$   satisfy  \eqref{e:gammaalpha}.
 
 \label{e:ZapZero}
  \end{subequations}
 
 \wham{Why is there a need for dimension \boldmath{$3d$}?}    Theory predicts that $ z_n \approx M A(\theta_n)^\transpose w_n$ for large $n$,  which motivates elimination of $\{z_n\}$ to obtain, 
 \begin{align*}
\theta_{n+1} &= \theta_n  -  \alpha_{n+1} \bigl[    \theta_n +   M \haA_{n+1}^\transpose  w_n   \bigr]
\\
w_{n+1} &= w_n - \stepf_{n+1}     \bigl[     \haA_{n+1} \{  \theta_n + M \haA_{n+1}^\transpose  w_n \} -  f_{n+1}(\theta_n)   \bigr]
\end{align*}
A third recursion is required to construct the matrix sequence $\{  \haA_{n+1} \}$,  which is why  \eqref{e:ZapZero} is a much simpler recursion.

 The assumptions in the following are adapted from \cite{chedevbusmey20b} in their treatment of Zap SA.

\begin{theorem}
\label[theorem]{t:ZapZero}
Suppose that the following hold:

\whamrm{(i)}  Assumptions (SA1) and (SA2).

\whamrm{(ii)}  
The derivative \eqref{e:Atheta}
is a continuous function of $\theta$,  satisfying $\det(A(\theta)) \neq 0$ for all $\theta$.

\whamrm{(iii)}  The function $\| \barf(\theta) \|$ is coercive:  $\displaystyle \lim_{\|\theta\| \to\infty}  \| \barf(\theta) \| = \infty$.

 Then, there is a unique solution to $\barf(\theta^*) = \Zero$,  and the Zap Zero algorithm  \eqref{e:ZapZero}  is convergent for each initial condition:
 \[
\lim_{n\to\infty } \theta_n =  \theta^*
\,,
\quad
 \lim_{n\to\infty } w_n =   w^*_\infty
\,, 
\quad
 \lim_{n\to\infty } z_n =   z^*_\infty  \qquad a.s.,
\]
with $w^*_\infty = w^\theta_\infty$,  $z^*_\infty = z^\theta_\infty$ evaluated at $\theta = \theta^*$. 
\end{theorem}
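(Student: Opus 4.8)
The plan is a two time-scale stochastic approximation analysis, following the template of the Zap SA convergence proof in \cite{chedevbusmey20b}, organized around the three ODEs introduced above: the full system \eqref{e:zapzeroODEnew}, its frozen-slow-variable fast subsystem \eqref{e:zapzeroODEnewFast}, and the equilibrium identity \eqref{e:zapzeroEqui1}.

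First I would dispose of the claim that $\barf(\theta^*)=\Zero$ has a unique solution. Under (SA1) the map $\barf$ is globally Lipschitz; by (ii) its derivative $A(\theta)$ exists and is continuous, so $\barf$ is $C^1$; (iii) says $\barf$ is proper; and (ii) says the Jacobian $A(\theta)$ is nonsingular at every $\theta$. The Hadamard global inverse function theorem then gives that $\barf\colon\Re^d\to\Re^d$ is a $C^1$ diffeomorphism, so $\barf^{-1}(\Zero)$ is a singleton $\{\theta^*\}$. I would also record here that the \emph{reduced} (slow) vector field is exactly the Newton--Raphson field of \eqref{e:NRflow}--\eqref{e:zapODE}: substituting the fast equilibrium \eqref{e:zapzeroEqui1} into $\ddt\odestate_t=-[\odestate_t+L(\odestate_t)w^{\odestate_t}_\infty]$ yields $\ddt x_t=-A(x_t)^{-1}\barf(x_t)$, along which $\ddt\barf(x_t)=-\barf(x_t)$, hence $\barf(x_t)=e^{-t}\barf(x_0)$. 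Coercivity (iii) confines $\{x_t\}$ to a fixed sublevel set of $\|\barf\|$, so the flow is complete and precompact, and $V(x)=\|\barf(x)\|^2$ is a coercive Lyapunov function with $\ddt V(x_t)=-2V(x_t)$, negative definite off $\theta^*$ (using again that $\barf$ vanishes only at $\theta^*$). Hence the reduced ODE is globally asymptotically stable with unique equilibrium $\theta^*$.

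Next the fast time scale, with $\theta$ frozen. The $(w,z)$ subsystem \eqref{e:zapzeroODEnewFast} is linear with constant forcing and state matrix whose Hurwitz property is the eigenvalue computation already recorded in the text, valid because $M\succ0$ and $A(\theta)$ invertible make $A(\theta)MA(\theta)^\transpose$ positive definite. So it has a unique globally exponentially stable equilibrium $(w^\theta_\infty,z^\theta_\infty)$, given by \eqref{e:zapzeroEqui1}, depending continuously on $\theta$, with $\theta+L(\theta)w^\theta_\infty=A(\theta)^{-1}\barf(\theta)$; solving the associated Lyapunov equation produces a continuous family of quadratic Lyapunov functions for the fast flow, uniformly comparable to the squared distance to equilibrium on any compact set of $\theta$'s.

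The step I expect to be the main obstacle is a.s.\ boundedness of the coupled $3d$ iterate $\{(\theta_n,w_n,z_n)\}$ (the analogue of \eqref{e:thetaBdd}), after which convergence is routine. Two features must be handled simultaneously: the noise $\bfPhi$ is Markovian with parameter-dependent statistics, so one invokes the mature two time-scale theory surveyed in \cite{faibor23} rather than a martingale-noise result; and, more delicately, the fast state matrix in \eqref{e:zapzeroODEnewFast} is \emph{not uniformly} Hurwitz, since its slowest mode degrades wherever $\|A(\theta)^{-1}\|$ is large, so one cannot claim a $\theta$-independent rate at which $(w_n,z_n)$ relaxes to $(w^{\theta_n}_\infty,z^{\theta_n}_\infty)$. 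The remedy is to propagate the fast Lyapunov function $V_{\theta_n}$ together with the slow Lyapunov function $\|\barf(\theta)\|^2$: coercivity (iii) makes the latter a genuine drift function for the slow block, the bound $\|A(\theta)\|\le L$ from (SA1) keeps the cross terms controlled, and a perturbation estimate shows that when $\|\theta_n\|$ is large the slow drift dominates before the fast block can accumulate error. This is the argument that really consumes assumptions (ii) and (iii), and it mirrors the stability analysis for Zap SA in \cite{chedevbusmey20b}. With boundedness in hand, the standard two time-scale machinery applies: $(w_n,z_n)$ tracks $(w^{\theta_n}_\infty,z^{\theta_n}_\infty)$, so $\{\theta_n\}$ asymptotically follows the Newton--Raphson flow, which by the second paragraph is globally asymptotically stable at $\theta^*$; therefore $\theta_n\to\theta^*$ a.s.\ from every initial condition, and continuity of $\theta\mapsto(w^\theta_\infty,z^\theta_\infty)$ gives $w_n\to w^*_\infty=w^{\theta^*}_\infty$ and $z_n\to z^*_\infty=z^{\theta^*}_\infty$.
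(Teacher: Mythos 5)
Your proposal is correct and follows the same overall route as the paper's proof: the singular-perturbation reduction of \eqref{e:zapzeroODEnew} to the Newton--Raphson flow, coercivity to confine the reduced trajectory, and an appeal to standard two time-scale SA theory for the stochastic recursion. The one place you genuinely diverge is existence and uniqueness of $\theta^*$: the paper obtains existence by noting that $\barf(\odestate_t)=e^{-t}\barf(\odestate_0)\to 0$ along the Newton--Raphson flow, that coercivity bounds the trajectory, and that any limit point is therefore a root, with uniqueness asserted to ``quickly follow''; you instead invoke Hadamard's global inverse function theorem (properness from (iii), everywhere-nonsingular continuous Jacobian from (ii)), which delivers existence and uniqueness in a single self-contained step and is arguably the cleaner argument. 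You are also more candid than the paper about the real obstacle --- almost-sure boundedness of the coupled $3d$ iterate under Markovian, parameter-dependent noise, and the absence of a uniform Hurwitz bound on the fast block where $\|A(\theta)^{-1}\|$ is large; the paper's proof does not engage with this and simply cites the two time-scale literature, so your sketched coupled-Lyapunov argument, while not fully carried out, is no less complete than what the paper itself provides.
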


\begin{proof}  
As in the Newton-Raphson flow,
the coerciveness assumption is imposed to ensure that convergence  $ \barf(\odestate_t)  \to 0$ as $t\to\infty$, implies that $\{\odestate_t\}$ is bounded.
 We then conclude that any limit point is a root of $\barf$, and uniqueness of $\theta^*$ quickly follows.  
Convergence of  \eqref{e:ZapZero} then
 follows from standard theory of two time-scale stochastic approximation  \cite{bor20a}.
\end{proof}

Many generalizations are possible.  In particular,  it is shown in  \cite{chedevbusmey20b} that invertibility of $A(\theta)$ for all $\theta$ is not required, but may be replaced with the following:  it is assumed that     $ A(\theta) \barf(\theta) = \Zero$ holds  only if $\barf(\theta) = \Zero$.    This justifies   a modification of the Newton-Raphson flow in which 
$A(\odestate_t)^{-1}  \barf(\odestate_t) $  uses a pseudo-inverse when the matrix is not invertible.   It is also shown in this prior work that $A$ need not be continuous in applications to Q-learning.  Extending these generalizations to Zap Zero is a topic for research, but is not expected to be a big challenge.

Unfortunately we do not yet know how to verify all of the assumptions of \Cref{t:ZapZero} for Q-learning, in which  $\barf$ defined in \eqref{e:ProjBE},
even in the relaxed form described in the previous paragraph.  
This means the existence of a solution to the projected Bellman equation remains an open topic of research when using oblivious training.

\section{Stability with Optimism}
\label{s:Qstable}

The theory surveyed in the preceding section imposed oblivious training.   In the case of Watkins' Q-learning this assumption was imposed in part for 
historical reasons, though we will see that the analysis is somewhat more complex when we consider parameter dependent policies.  The technical challenges for Zap Q-learning are far more interesting because the definition of the linearization $A(\theta)$ is not obvious.
See the conclusions for further discussion.

We begin with a motivating example.

\begin{figure}[h]
\centering
 \includegraphics[width=0.5\textwidth]{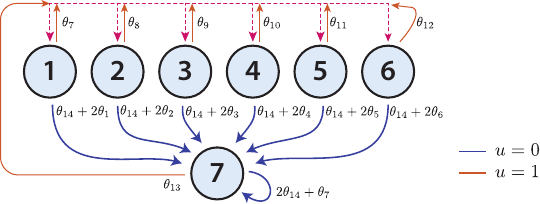} 
 
 \caption{Baird's star example.  }
 \label{f:BairdsStar}
 \end{figure}

\def\whamb{\wham{$\bullet$}}

\subsection{Baird's star example}
\label{s:BairdNumerics}

There are seven states $\state=\{1,\dots,7\}$ and two actions $\ustate=\{0,1\}$,  in which $X_{k+1} = 7$ with probability one whenever $U_k=0$.
If $U_k=1$ then $X_{k+1}$ is uniformly distributed over states $\{ 1,\dots, 6\}$.

In  \cite{bai95} it is assumed the cost is identically zero.   We take $c(x,u) = 0$ if $x\le 6$ and $c(7,u)=-10$ (independent of $u$).

The Q-function is linearly parameterized with dimension $d=14$.  
A schematic is shown in \Cref{f:BairdsStar}, adapted from  \cite{bai95}, indicating   the following values of $Q^\theta(x,u) $:
\whamb    $ \theta_{14} + 2\theta_x$ when $(x,u) = (x,0)$,  with $1\le x\le 6$.  
\whamb    $ 2\theta_{14} +\theta_7$ when $(x,u) = (7,0)$.
\whamb    $\theta_{13} $ when $(x,u) = (7,1)$.
\whamb  $\theta_{6+x}$  when $(x,u) = (x,1)$, with $1\le x\le 6$.

We refer the reader to the source \cite{bai95} (the final page contains a full description of the model considered in the experiments surveyed here),  and   \cite{sutbar18} for a fuller discussion.

\begin{figure}[h]
\centering
	\includegraphics[width=0.7\hsize]{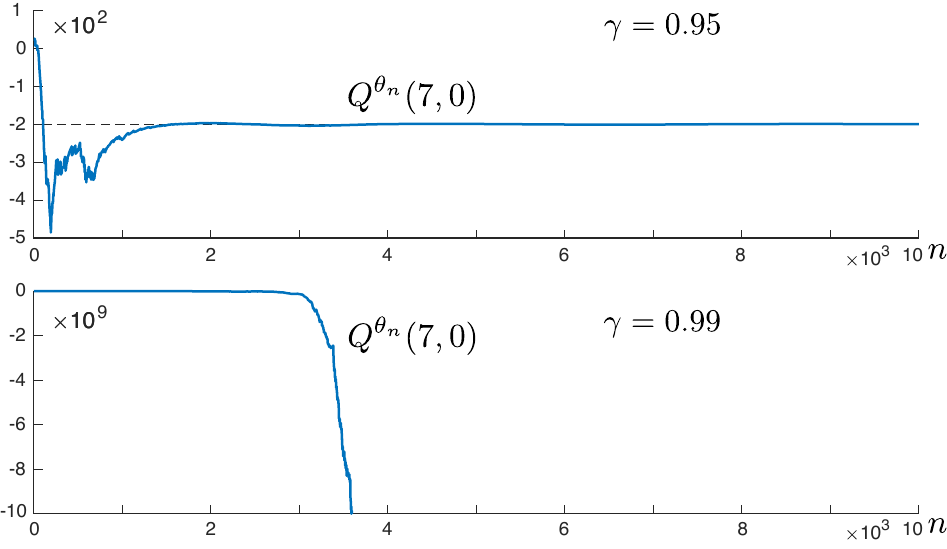}

\caption{Evolution of the   Q-function  approximations for two values of discount factor, and using an $\epsy$-greedy policy with common value of $\epsy=0.5$. }
\label[figure]{f:TwoDiscBaird}
\end{figure}

\smallskip

The following oblivious policy is considered in \cite{bai95}:   $U_k = 0$ with probability $1/7$, and otherwise $U_k=1$.    
While  well-motivated  from the point of view of exploration,   it was shown that the parameter estimates diverge when the discount factor is sufficiently large.

\Cref{f:TwoDiscBaird} shows trajectories from the Q-learning algorithm \eqref{e:Qlambda}
 with an $\epsy$-greedy policy using $\epsy=0.5$.  
The ideal behavior is that $Q^{\theta_n}(x,u) \to Q^\star
(x,u) = -10/(1-\disc)$ as $n\to\infty$ when
$(x,u) =(7,0)$.   The figure shows convergence when $\disc= 0.95$, but the parameters are divergent with discount factor $\disc=0.99$.

With the larger discount factor we obtain stability when using a smaller value of $\epsy>0$.  \Cref{f:BigDiscBaird} shows typical results for three small values.
The dashed line indicates $Q^*(7,0)$.

The step-size sequence was taken to be $\alpha_n = \min(\bar\alpha,  g / n^{\rho})$  using $ g = 1/(1-\disc)$,   $\rho=0.85 $,   and $\bar\alpha = 0.1$
in each run.    See \Cref{s:BairdCode} for Matlab code.

	 \begin{figure}[h]
\centering
	\includegraphics[width=0.7\hsize]{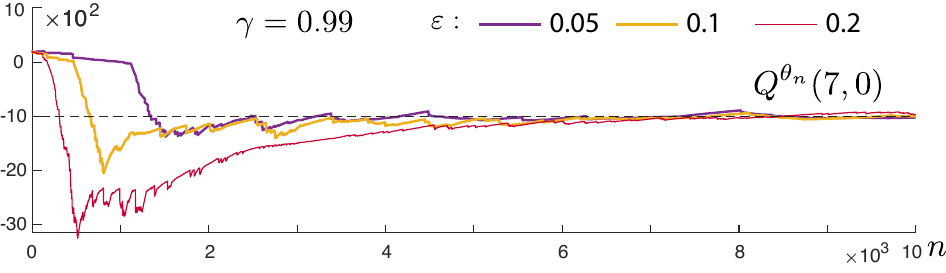}

\caption{Evolution of the   Q-function  approximations when using an $\epsy$-greedy policy.   Convergence holds when $\epsy>0$ is sufficiently small. }
\label[figure]{f:BigDiscBaird}
\end{figure}

See  \cite{devmey22} for an explanation for slow convergence with a large discount factor,  and    for explanation of the choice $ g = 1/(1-\disc)$.    This prior work is based on consideration of the linearization matrix $A^* = A(\theta^*)$  (see \Cref{t:Aproperties1}   for a representation).
\Cref{f:haA} shows a plot of the maximum real part of $A^*$ as a function of $\epsy>0$,   with $A^*$ estimated via Monte-Carlo.   
For larger values of $\epsy>0$ we see that $A^*$ is not Hurwitz for the three choices of discount factor.   
There is also trouble for very small $\epsy>0$:    The discussion following \Cref{t:chedevborkonmey21} \textit{suggests} that the asymptotic covariance will be very large when  $\max(\text{Real}\, \lambda(A^*))$ is close to zero,  but   the covariance $\Sigma_{\Delta}^* $ must also be considered to make any conclusions.

\begin{figure}[h]
\centering
		\includegraphics[width=0.7\hsize]{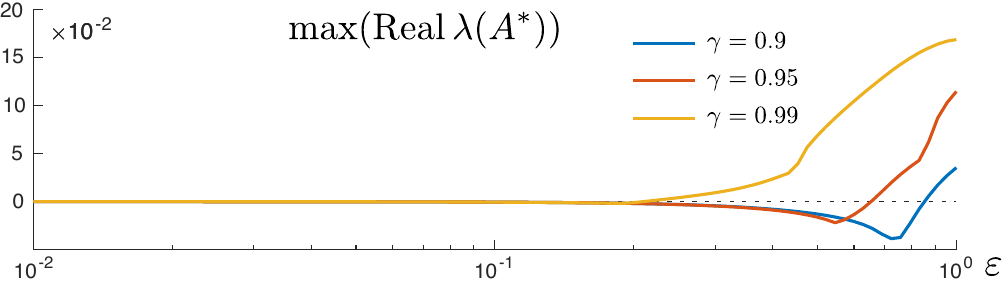}

\caption{The maximum eigenvalue of $A^*$ as a function of $\epsy$.    The matrix is   Hurwitz for sufficiently small
$\epsy>0$,   but some eigenvalues approach zero with vanishing $\epsy$.}
\label[figure]{f:haA}
 \end{figure}
	
\clearpage

\subsection{Stability with linear function approximation}

The main result of this section shows how exploration using a policy of the form \eqref{e:epsyGreedy} encourages stability of 
the Q-learning algorithm \eqref{e:Qlambda}.  Analysis requires the family of autocorrelation matrices,
\begin{subequations}%
\begin{align}
 \Rtheta(\theta) 
 & = 
 \Expect_{\uppi_\theta }\bigl[ \psi(X_{n}, \fee^\theta(X_n))  \psi(X_{n}, \fee^\theta(X_n))^\transpose\bigr]  
 \qquad
   \REXP(\theta)   = 
 \Expect_{\uppi_\theta }\bigl[  \psi(X_{n}, \EXP_n)    \psi(X_{n}, \EXP_n )^\transpose\bigr]  
 \label{e:RthetaEXP}
 \\[.5em]
 R(\theta)  &= \Expect_{\uppi_\theta }[ \psisub{n}\psisub{n}^\transpose]   =   (1-\epsy)   \Rtheta(\theta)  + \epsy \REXP (\theta) 
  \label{e:Rzero}
\end{align}
The expectations are  in steady-state, with stationary pmf $\uppi_\theta$  induced by the randomized stationary policy with fixed parameter.

A special case is considered in the assumptions, in which we take $\epsy=1$, and  the randomized policy is then denoted  $ \feex^{\tEXP}$, giving 
 $ \feex^{\tEXP}(\varble\mid x) = \upnu_\tEXP(u)$  for all $x,u$.   The  (assumed unique) invariant pmf is denoted   $\uppi_\tEXP$, and
the autocorrelation matrix 
\begin{equation}
\REXP = \Expect_{\uppi_\tEXP}  \bigl[  \psi(X_{n}, \EXP_n)    \psi(X_{n}, \EXP_n )^\transpose\bigr]  
   \,, 
\qquad
\textit{using $U_k = \EXP_k$ for all $k$.}
\label{e:RreallyOblivious}
\end{equation}

\label{e:RRR}
\end{subequations}

\begin{subequations}

The following assumptions are required in the main results of this section:
   \begin{align}
\parbox{.75\hsize}{\raggedright
The randomized policy $ \feex^{\tEXP}$ gives rise to a uni-chain Markov chain,  with unique invariant pmf $\uppi_\tEXP$, 
and    the autocorrelation matrix $\REXP$ defined in \eqref{e:RreallyOblivious} is positive definite.
}
\label{e:RankCondition}
 \\
 \parbox{.75\hsize}{\raggedright
 The inverse temperature
  $\kappa_\theta$ is twice continuously differentiable ($C^2$) in $\theta$, and the first and second derivatives of   $\kappa_\theta$ are  continuous and bounded.
  }
\end{align}

\label{e:TamedAssumptions}

\end{subequations}

 We also require small $\epsy>0$ in specification of the policies.    Denote
\begin{equation}
\epsy_\disc\eqdef 
		\frac{ (1-\disc)^2}{ (1-\disc)^2 + \disc^2}
 \label{e:epsydisc}
\end{equation}

\begin{theorem}
\label[theorem]{t:Qstable} 
Consider
the Q-learning algorithm \eqref{e:Qlambda} with linear function approximation,  and training policy \eqref{e:epsyGreedy} defined using the tamed Gibbs policy \eqref{e:epsyGreedySoftGibbTamed}.   Suppose moreover that \eqref{e:TamedAssumptions} holds.

Then, for any $\epsy \in (0, \epsy_\disc)$ there is $\kappa_{\epsy,\disc}>0$ for which
 the following hold using the  $(\epsy,\kappa_0)$-tamed Gibbs policy, using $\kappa_0\ge  \kappa_{\epsy,\disc}$:
 
\whamrm{(i)}  The parameter estimates $\{\theta_n\}$ are bounded:   there is a fixed constant  $B_\tTheta$,
independent of  $\kappa_0\ge  \kappa_{\epsy,\disc}$,
 such that \eqref{e:ODEbdd} holds 
with probability one from each initial condition.

\whamrm{(ii)}  There exists at least one solution to the projected Bellman equation \eqref{e:PBE}.
\end{theorem}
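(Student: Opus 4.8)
The plan is to extract from the hypotheses a single drift inequality for the mean flow $\ddt\odestate_t=\barf(\odestate_t)$ of the recursion \eqref{e:Qlambda},
\[
\theta^\transpose\barf(\theta)\ \le\ -\delta\|\theta\|^2+\beta\|\theta\|\,,\qquad \|\theta\|\ge 1\,,
\]
for some $\delta>0$ and $\beta<\infty$ depending on $(\epsy,\disc,\kappa_0)$, and then feed it into the two general results of \Cref{s:SARL}. Granting the inequality, part~(ii) is immediate from \Cref{t:ODEstableImpliesPBE} applied with the convex and coercive $V(\theta)=\half\|\theta\|^2$: since $\nabla V(\theta)^\transpose\barf(\theta)=\theta^\transpose\barf(\theta)\le-1$ for $\|\theta\|$ large, $\barf$ has a root, which by \eqref{e:barfQgen} solves the projected Bellman equation \eqref{e:PBE}. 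For part~(i) I verify the hypotheses of \Cref{t:chedevborkonmey21}: the limit \eqref{e:barfinfty} exists --- the reward contribution vanishes in the limit and, by the taming identity \eqref{e:LargeParPolicy}, $\uppi_\theta$ and the greedy map $\fee^\theta$ entering \eqref{e:barfQgen} are radially constant for $\|\theta\|\ge1$ --- and equals $\barfinf(\theta)=A(\theta/\|\theta\|)\theta$; hence, once the drift bound is established on the unit sphere, $V(\theta)=\half\|\theta\|^2$ gives $\ddt V(\odestate_t)=\odestate_t^\transpose A(\odestate_t/\|\odestate_t\|)\odestate_t\le-2\delta V(\odestate_t)$ along the ODE@$\infty$, so the origin is globally asymptotically stable there and \eqref{e:ODEbdd} follows.

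It remains to check (SA1), (SA2), and to prove the drift inequality. For (SA1) the tamed Gibbs structure is exactly what is needed. The Bellman-error map $\BelErr(x,u;\theta)=c(x,u)-Q^\theta(x,u)+\disc\sum_{x'}P_u(x,x')\uQ^\theta(x')$ is globally Lipschitz in $\theta$ --- it is affine in $\theta$ plus the concave, piecewise-linear term $\uQ^\theta(x')=\min_u\theta^\transpose\psi(x',u)$ --- although it grows linearly in $\|\theta\|$; on the other hand, by \eqref{e:normGibbsBdds}--\eqref{e:LargeParPolicy} together with the $C^2$ assumption on $\kappa_\theta$ in \eqref{e:TamedAssumptions}, the randomized policy $\feex^\theta$, hence $T_\theta$ of \eqref{e:Ttheta} and (by smoothness of the invariant law in the transition matrix for uni-chain finite chains) the pmf $\uppi_\theta$, depends on $\theta$ only through $\theta/\|\theta\|$ once $\|\theta\|\ge1$, so $\|\partial_\theta\uppi_\theta\|=O(1/\|\theta\|)$. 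Applying the product rule to $\barf(\theta)=\Expect_{\uppi_\theta}[\psi(X_n,U_n)\BelErr(X_n,U_n;\theta)]$, the linearly growing factor is multiplied by a sensitivity that decays like $1/\|\theta\|$, so $\partial_\theta\barf$ is bounded: (SA1) holds. For (SA2), $\Phi_n=(X_n,X_{n+1},U_n)$ lives on a finite set with a parameter-dependent transition law; the floor $\feex^\theta(u\mid x)\ge\epsy\upnu_\tEXP(u)$ forces it to share the irreducibility class of the fully-exploratory chain, whose invariant pmf is $\uppi_\tEXP$ by \eqref{e:RankCondition}, so it is uni-chain, and the parameter-dependent refinement of \Cref{t:chedevborkonmey21} noted in \Cref{s:SARL} applies. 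The same floor, with compactness of $\{\feex^\theta:\theta\in\Re^d\}$ (hence of $\{\uppi_\theta\}$) and \eqref{e:RankCondition}, also gives a uniform bound $R(\theta)\succeq\epsy\REXP(\theta)\succeq\epsy c\,I$ with $c>0$, used below.

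The core of the argument is the bound $\theta^\transpose A(\theta)\theta\le-\delta\|\theta\|^2$. Write it in function-space form using the steady-state inner product $\langle\cdot,\cdot\rangle_{\uppi_\theta}$ and the homogeneous Bellman operator $\mathsf{T}_0 Q(x,u)=\disc\sum_{x'}P_u(x,x')\min_{u'}Q(x',u')$: by \eqref{e:barfQgen}, $\theta^\transpose A(\theta)\theta=\langle Q^\theta,\mathsf{T}_0 Q^\theta-Q^\theta\rangle_{\uppi_\theta}$. Let $\mathsf{T}_0^{\mathrm b}$ be the homogeneous Bellman operator of the \emph{behavior} policy --- replace $\min_{u'}$ by the average over $u'\sim\feex^\theta(\cdot\mid x')$ --- which is a $\disc$-contraction in $\|\cdot\|_{\uppi_\theta}$ because $\uppi_\theta$ is invariant for the behavior chain. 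Since $\mathsf{T}_0 Q^\theta(x,u)=\mathsf{T}_0^{\mathrm b}Q^\theta(x,u)-\disc\sum_{x'}P_u(x,x')\Xi^\theta(x')$, where $\Xi^\theta(x')=\bigl(\sum_{u'}\feex^\theta(u'\mid x')Q^\theta(x',u')\bigr)-\uQ^\theta(x')\ge0$ is the exploration cost, Cauchy--Schwarz (using that the $\uppi_\theta$-marginals of the current and next state coincide) gives
\[
\theta^\transpose A(\theta)\theta\ \le\ -(1-\disc)\|Q^\theta\|_{\uppi_\theta}^2+\disc\,\|Q^\theta\|_{\uppi_\theta}\,\|\Xi^\theta\|_{\mu_\theta}\,,
\]
with $\mu_\theta$ the state-marginal of $\uppi_\theta$. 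It then suffices to show $\|\Xi^\theta\|_{\mu_\theta}$ is small relative to $\|Q^\theta\|_{\uppi_\theta}=(\theta^\transpose R(\theta)\theta)^{1/2}\ge\sqrt{\epsy c}\,\|\theta\|$: the $\EXP$-part of $\feex^\theta$ contributes to $\Xi^\theta$ a term of size $O(\epsy\|\theta\|)$, while the Gibbs part contributes a cost of size $O(\|\theta\|e^{-\kappa_0\cdot\mathrm{gap}})$, which can be made arbitrarily small by increasing $\kappa_0$; balancing these against $(1-\disc)\|Q^\theta\|_{\uppi_\theta}$ is precisely where the thresholds $\epsy<\epsy_\disc$ and $\kappa_0\ge\kappa_{\epsy,\disc}$ enter. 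Strict negativity of $\theta^\transpose A(\theta)\theta$ on the compact unit sphere then upgrades to the uniform bound $-\delta$, since $\theta\mapsto\theta^\transpose A(\theta)\theta$ is continuous --- it is assembled from $Q^\theta$, $\uQ^\theta=\min_uQ^\theta$, and $\uppi_\theta$, all continuous, even though $\fee^\theta$ is not. I expect the main obstacle to be this last estimate: controlling the Gibbs exploration cost $\Xi^\theta$ \emph{uniformly} in $\theta$ as $\theta$ approaches the complement of $\feeUnique$, where the gap in $Q^\theta(x',\cdot)$ between the minimizing action and the rest degenerates, so the naive exponential bound is useless and one must instead exploit that \emph{all} near-minimizing actions carry $Q$-values within $O(\|\theta\|/\kappa_0)$ of $\uQ^\theta$.
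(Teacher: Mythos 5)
Your architecture coincides with the paper's: a drift bound $\theta^\transpose\barf(\theta)\le-\delta\|\theta\|^2+\beta\|\theta\|$ fed into \Cref{t:chedevborkonmey21} (via (V4) and the radially homogeneous ODE@$\infty$, $\barfinf(\theta)=A(\theta/\|\theta\|)\theta$) for part~(i), and into \Cref{t:ODEstableImpliesPBE} for part~(ii); your verification of (SA1)--(SA2) via the taming identity \eqref{e:LargeParPolicy} and the exploration floor matches the paper's \Cref{t:AbsCts_pmf}. Where you genuinely diverge is the core quadratic-form estimate. The paper (\Cref{t:psipsiTheta} and \Cref{t:QstableODEGibbs}, building on \Cref{t:QstableODEGreedy}) decomposes $\Expect_{\uppi_\theta}[\psi_n\{\psi^\tTheta_{n+1}\}^\transpose]$ as $R_{-1}(\theta)+\epsy D(\theta)+(1-\epsy)E(\theta)$ and controls the cross terms by weighted arithmetic--geometric-mean bounds expressed in the $R_0(\theta)$ metric, with the weight $\eta$ optimized so that the admissible range is exactly $\epsy<\epsy_\disc$. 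You instead compare the target operator to the behavior-policy operator (a $\disc$-contraction in $\|\cdot\|_{\uppi_\theta}$ by invariance --- the Tsitsiklis--Van Roy mechanism the paper also invokes) and collect the discrepancy into the exploration cost $\Xi^\theta\ge 0$. That is a legitimate alternative skeleton, but as written it has two concrete problems.

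First, the Gibbs contribution to $\Xi^\theta$: your stated bound $O(\|\theta\|e^{-\kappa_0\cdot\mathrm{gap}})$ is, as you yourself note, useless where the action gap degenerates, and the repair you gesture at in your final sentence is left unproven. It is not a hard obstacle but an elementary entropy computation: for any energy $E$ on the finite set $\ustate$, the Gibbs average satisfies $\sum_u p_\kappa(u)E(u)\le\min_uE(u)+\kappa^{-1}\log|\ustate|$ (maximum entropy of the uniform law plus $\log\clZ_\kappa\ge-\kappa\min_uE(u)$); this is \Cref{t:GibbsSoftMinBdd} and \eqref{e:GibbsSoftMinImplication}, and with $\kappa_\theta=\kappa_0/\|\theta\|$ from \eqref{e:normGibbsBdds} it yields the gap-independent bound $\|\theta\|\kappa_0^{-1}\log|\ustate|$. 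You must actually carry this step out; without it the drift inequality, and hence both parts of the theorem, is unproved. Second, the threshold: to compare $\|\Xi^\theta\|_{\mu_\theta}$ with $\|Q^\theta\|_{\uppi_\theta}$ you need change-of-measure constants (e.g.\ $\uppi_\theta(x,u)\ge\epsy\,\upmu_\theta(x)\upnu_\tEXP(u)$ and a $1/|\ustate|$ floor on the probability of the greedy action), so your Cauchy--Schwarz step delivers a condition of the form $\epsy\lesssim(1-\disc)^2/\disc^2$ with a basis- or $|\ustate|$-dependent constant in place of the clean $\epsy_\disc$ of \eqref{e:epsydisc}; as written you establish the conclusion only on a strictly smaller interval of $\epsy$ than the theorem asserts. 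Recovering the full range $(0,\epsy_\disc)$ requires redoing the estimate in the $R_0(\theta)$-weighted form, which is essentially the paper's \Cref{t:QstableODEGreedy}. The remainder of your argument --- Brouwer via \Cref{t:ODEstableImpliesPBE} for (ii), the reduction of (i) to the drift bound --- is correct.
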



See \Cref{s:epsyGreedyWorks} for an extension of (ii) to the $\epsy$-greedy policy.

To see why (i) is plausible, consider   an algorithm approximating \eqref{e:Qlambda}, in which the minimum defining $   \uQ^{\theta_n} (X_{n+1}) $ is replaced by substitution of the input used for training: 
\begin{equation}
\begin{aligned}
\theta_{n+1} &= \theta_n + \alpha_{n+1}    \tilTdiff_{n+1}  \elig_n \, .
   \\
    \tilTdiff_{n+1} & = 
 \csub{n}  -Q^{\theta_n}(X_n,U_n)   + \disc   Q^{\theta_n} (X_{n+1} , U^-_{n+1}) 
  \end{aligned} 
\label{e:QlambdaApprox}
\end{equation}
in which $U_{n+1}^-$ is obtained by sampling from $   \feex^{\theta} ( \varble \mid x)  $   using $x=X_{n+1}$ and $\theta=\theta_n$.
The use of a soft-minimum instead of the hard minimum $ \uQ^{\theta_n} (X_{n+1} )$
 is common in the RL literature  \cite{sutbar18}.

    Stability of the ODE@$\infty$ is then relatively easy, from which we obtain the following:

\begin{proposition}
\label[proposition]{t:approxStable}

Consider the recursion \eqref{e:QlambdaApprox} with linear function approximation,  and training policy \eqref{e:epsyGreedy} defined as the
$(\epsy,\kappa_0)$-tamed Gibbs policy  \eqref{e:epsyGreedySoftGibbTamed} with  $\epsy \in (0,1)$ and $\kappa_0>0$.  Suppose moreover that \eqref{e:TamedAssumptions} holds.
 Then,
 we obtain the conclusions of   \Cref{t:Qstable}:

\whamrm{(i)} The parameter estimates $\{\theta_n\}$ are bounded with probability one from each initial condition.

\whamrm{(ii)}  There exists at least one solution $\theta^*$ to  $  \barf(\theta^*) = \Zero$,   with $\barf$ the mean flow for  \eqref{e:QlambdaApprox}.
 
\end{proposition}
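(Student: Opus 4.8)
The plan is to write down the mean flow for \eqref{e:QlambdaApprox} and exploit the feature that distinguishes it from true Q-learning: the ``next'' value $Q^{\theta_n}(X_{n+1},U_{n+1})$ is evaluated at an \emph{on-policy} state--action pair, so under the stationary joint chain the pair $(X_{n+1},U_{n+1})$ has the same marginal $\uppi_\theta$ as $(X_n,U_n)$. Taking $\Phi_{n+1}=(X_n,U_n,X_{n+1},U_{n+1})$ and $\psisub{n}=\psi(X_n,U_n)$, the mean flow is affine, $\barf(\theta)=A(\theta)\theta-b(\theta)$, with the coefficient matrix as in \eqref{e:barfQgen} but carrying the on-policy next feature,
\[
A(\theta)=-\Expect_{\uppi_\theta}\bigl[\psisub{n}\{\psisub{n}-\disc\,\psi(X_{n+1},U_{n+1})\}^\transpose\bigr],\qquad b(\theta)=-\Expect_{\uppi_\theta}[\psisub{n}\,c(X_n,U_n)].
\]
First I would record the regularity needed to invoke the general theory. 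Since $\kappa_\theta$ is $C^2$ with bounded derivatives and $Q^\theta$ is linear, $\theta\mapsto\feex^\theta$ is continuous, and by \eqref{e:LargeParPolicy} it depends only on the direction $\theta/\|\theta\|$ once $\|\theta\|\ge1$. Hence $\theta\mapsto\uppi_\theta$ is continuous with compact range (continuity of the invariant pmf of a finite uni-chain in its transition matrix), so $A(\theta),b(\theta)$ are bounded; moreover $A(\theta)\theta$ is globally Lipschitz, because for $\|\theta\|\ge1$ the matrix $A$ depends only on $\theta/\|\theta\|$, so its derivatives decay like $1/\|\theta\|$ and the Jacobian of $A(\theta)\theta$ stays bounded. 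Thus $\barf$ is globally Lipschitz, giving (SA1); (SA2) must be read in its parameter-dependent form, for which boundedness and convergence theory is available in \cite{faibor23}.

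The analytic heart is an elementary contraction valid at \emph{every} $\theta$. Writing $C(\theta)=\Expect_{\uppi_\theta}[\psisub{n}\,\psi(X_{n+1},U_{n+1})^\transpose]$ and $R(\theta)=\Expect_{\uppi_\theta}[\psisub{n}\psisub{n}^\transpose]$, the bound $ab\le\half(a^2+b^2)$ together with stationarity, $\Expect_{\uppi_\theta}[(\psi(X_{n+1},U_{n+1})^\transpose x)^2]=\Expect_{\uppi_\theta}[(\psisub{n}^\transpose x)^2]=x^\transpose R(\theta)x$, yields
\[
x^\transpose C(\theta)x=\Expect_{\uppi_\theta}\bigl[(\psisub{n}^\transpose x)(\psi(X_{n+1},U_{n+1})^\transpose x)\bigr]\le x^\transpose R(\theta)x,
\]
so that $x^\transpose A(\theta)x=-x^\transpose R(\theta)x+\disc\,x^\transpose C(\theta)x\le-(1-\disc)\,x^\transpose R(\theta)x$. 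This is precisely the classical on-policy contraction that \emph{fails} for the hard minimum, and it requires no smallness of $\epsy$ or $\kappa_0$, consistent with the hypotheses $\epsy\in(0,1)$, $\kappa_0>0$. To turn it into a drift I bound $R(\theta)$ uniformly below: the training policy \eqref{e:epsyGreedy} mixes in pure exploration with probability $\epsy$, so $\feex^\theta(u\mid x)\ge\epsy\,\upnu_\tEXP(u)$, whence the state marginal of $\uppi_\theta$ dominates that of the exploration chain and has full support uniformly in $\theta$; then $R(\theta)\succeq\epsy\REXP(\theta)$, and by \eqref{e:RankCondition} (which forces $\REXP(\theta)\succ0$ once the state marginal is fully supported) together with compactness of the range of $\theta\mapsto\uppi_\theta$, one obtains $r_{\min}\eqdef\inf_\theta\lambdamin(R(\theta))>0$.

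To conclude part (i), I note that \eqref{e:LargeParPolicy} makes the policy, hence $\uppi_\theta$ and $A(\theta)$, invariant under positive scaling for $\|\theta\|\ge1$, so $\barfinf(\theta)=A(\theta)\theta$ and the ODE@$\infty$ is $\ddt x_t=A(x_t)x_t$. With $V(x)=\half\|x\|^2$ the contraction gives $\ddt V(x_t)=x_t^\transpose A(x_t)x_t\le-2(1-\disc)r_{\min}V(x_t)$, so the origin is globally exponentially stable for the ODE@$\infty$ and \Cref{t:chedevborkonmey21}(i) delivers \eqref{e:ODEbdd}. For part (ii) the same quadratic feeds \Cref{t:ODEstableImpliesPBE}: contracting $\barf$ itself gives $\theta^\transpose\barf(\theta)\le-(1-\disc)r_{\min}\|\theta\|^2+\|b(\theta)\|\,\|\theta\|$, which is $\le-1$ for $\|\theta\|$ large since $b$ is bounded; as $V$ is convex, coercive, with Lipschitz gradient, the Brouwer argument of \Cref{t:ODEstableImpliesPBE} produces a root $\theta^*$ of $\barf$.

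The stability drift above is short once set up, so the real work lies in the regularity and uniform non-degeneracy. Establishing global Lipschitz continuity of $\barf$ and, above all, the uniform lower bound $\inf_\theta\lambdamin(R(\theta))>0$ hinges on the tamed-Gibbs scaling \eqref{e:LargeParPolicy}, which renders the policy family $\{\feex^\theta\}$ (and hence $\{\uppi_\theta\}$) compact and collapses the $\theta$-dependence to a pure direction for large $\|\theta\|$. Carefully justifying continuity of $\theta\mapsto\uppi_\theta$ and the full-support claim uniformly in $\theta$ is where I expect the most care to be needed, and it is the step that genuinely uses the $\epsy$-exploration floor rather than the on-policy algebra.
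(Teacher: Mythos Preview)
Your proposal is correct and follows essentially the same approach as the paper: identify $\barf(\theta)=A(\theta)\theta-b(\theta)$ with $A(\theta)=-R_0(\theta)+\disc R_{-1}(\theta)$, use stationarity to get the on-policy contraction $\theta^\transpose A(\theta)\theta\le-(1-\disc)\theta^\transpose R_0(\theta)\theta$, invoke the uniform lower bound on $R_0(\theta)$ (the paper's \Cref{t:AbsCts_pmf}), and then apply \Cref{t:chedevborkonmey21}(i) and \Cref{t:ODEstableImpliesPBE} with the quadratic $V$. The only cosmetic difference is that for part~(i) you verify stability of the ODE@$\infty$ directly, whereas the paper establishes the equivalent (V4) drift for the full mean flow; both routes are explicitly permitted in \Cref{t:chedevborkonmey21}.
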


We proceed with the proof of \Cref{t:approxStable}.    The proof of \Cref{t:Qstable} is postponed to the Appendix.

The recursion \eqref{e:QlambdaApprox} is similar to the sequence of fixed point equations, 
\[
\theta_{n+1} = \theta_n + \alpha_{n+1}  \bigl[
 \csub{n}  -Q^{\theta_n}(X_n,U_n)   + \disc   Q^{\theta_n} (X_{n+1} , U_{n+1}) \bigr]
 \]
 In which $\{U_n\}$ is obtained   from the tamed Gibbs policy (so that $U_{n+1}$ depends on $\theta_{n+1}$).
Assuming there is a solution for each $n$,  this recursion and    \eqref{e:QlambdaApprox} share the same mean-flow vector fields.   
 
The sequence of fixed point equations may be expressed in a form similar to the TD(0) learning algorithm,
\begin{equation}
\theta_{n+1} = \theta_n + \alpha_{n+1} \bigl[  \psisub{n}   \csub{n}    -   \psisub{n} \{  \psisub{n}  -\disc   \psisub{n+1}  \}^\transpose \theta_n \bigr]   
\label{e:QlambdaApprox2}
\end{equation}
This motivates consideration of  the family of autocorrelation matrices
\[
R_k (\theta) = \Expect_{\uppi_\theta }[ \psisub{n+k}\psisub{n}^\transpose]  \,,\quad n,k\ge 0 \, ,
\] 
so that  $R_0 (\theta) = R (\theta)$ in the notation  \eqref{e:Rzero}.

The vector field for the mean flow associated with \eqref{e:QlambdaApprox} is Lipschitz continuous and has an attractive form.

\begin{lemma}
\label[lemma]{t:ODEsForQopti}
Under the assumptions of \Cref{t:approxStable},
\whamrm{(i)}
The vector field for the mean flow is  
\[
\barf(\theta) = A(\theta)\theta -b(\theta)     \,, \quad \textit{ in which $b(\theta) = -\Expect_{\uppi_\theta}[ \psisub{n}   \csub{n} ]$ and $A(\theta) = - R_0(\theta) + \disc R_{-1}(\theta)$.  }
\]

\whamrm{(ii)}
The limit defining $\barfinf$ in \eqref{e:barfinfty} exists and may be  expressed 
\[
\barfinf(\theta) = A_\infty(\theta)\theta \,,\qquad   \textit{where} \quad  A_\infty(\theta) = A(\theta/\|\theta\|)\quad \textit{for $\theta\neq \Zero$.}
\]
\end{lemma}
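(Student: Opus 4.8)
The plan is to recognize \eqref{e:QlambdaApprox2} as a stochastic approximation recursion of the form \eqref{e:SAa} and to read off its mean flow by averaging the per-sample update against the stationary law of the training chain with the policy frozen at $\theta$. Writing the recursion as $\theta_{n+1} = \theta_n + \alpha_{n+1} f(\theta_n,\Phi_{n+1})$, the per-sample function is $f(\theta,\Phi_{n+1}) \eqdef \psisub{n}\csub{n} - \psisub{n}\bigl(\psisub{n} - \disc\,\psisub{n+1}\bigr)^{\transpose}\theta$ with $\Phi_{n+1} = (X_n,U_n,X_{n+1},U_{n+1})$. Under \eqref{e:TamedAssumptions} the $(\epsy,\kappa_0)$-tamed Gibbs policy \eqref{e:epsyGreedySoftGibbTamed} is a $C^2$ function of $\theta$ with bounded first and second derivatives, so the controlled transition matrix $T_\theta$ of \eqref{e:Ttheta} on the finite set $\state\times\ustate$ is Lipschitz in $\theta$; the Bernoulli-$\epsy$ exploration in \eqref{e:epsyGreedy} produces a uniform minorization of $T_\theta$, hence a uniformly ergodic chain with a unique invariant pmf $\uppi_\theta$ that is Lipschitz in $\theta$ on compact sets. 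Since $\psi$ and $c$ are bounded on $\state\times\ustate$, the mean vector field $\barf(\theta) = \Expect_{\uppi_\theta}[f(\theta,\Phi)]$, computed with the policy held fixed at $\theta$, is well defined and locally Lipschitz, and \eqref{e:QlambdaApprox2} falls within the SA framework with parameter-dependent dynamics discussed in \Cref{s:SARL}.

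For part (i) I would simply expand the steady-state expectation:
\[
\barf(\theta) = \Expect_{\uppi_\theta}[\psisub{n}\csub{n}] \;-\; \Expect_{\uppi_\theta}\bigl[\psisub{n}\psisub{n}^{\transpose}\bigr]\theta \;+\; \disc\,\Expect_{\uppi_\theta}\bigl[\psisub{n}\psisub{n+1}^{\transpose}\bigr]\theta \,.
\]
The first term is $-b(\theta)$; the second matrix is $R_0(\theta)$; and stationarity of the $\theta$-frozen chain gives $\Expect_{\uppi_\theta}[\psisub{n}\psisub{n+1}^{\transpose}] = \Expect_{\uppi_\theta}[\psisub{n-1}\psisub{n}^{\transpose}] = R_{-1}(\theta)$. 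This yields $\barf(\theta) = A(\theta)\theta - b(\theta)$ with $A(\theta) = -R_0(\theta) + \disc R_{-1}(\theta)$, exactly as stated. To upgrade local to global Lipschitz continuity I would invoke \eqref{e:normGibbsBdds}--\eqref{e:LargeParPolicy}: for $\|\theta\|\ge 1$ the policy, and hence $\uppi_\theta$ together with the matrices $R_0(\theta),R_{-1}(\theta),b(\theta)$, depends only on $\theta/\|\theta\|$, so on $\{\|\theta\|\ge 1\}$ one has $\barf(\theta) = \|\theta\|\,A(\theta/\|\theta\|)(\theta/\|\theta\|) - b(\theta/\|\theta\|)$, a sum of a map that is positively homogeneous of degree one and a bounded function of $\theta/\|\theta\|$; both are globally Lipschitz there because $A$ and $b$ are Lipschitz on the unit sphere and $\theta\mapsto\theta/\|\theta\|$ is Lipschitz on $\{\|\theta\|\ge 1\}$, and gluing with local Lipschitzness on the compact ball $\{\|\theta\|\le 1\}$ gives the global bound asserted just before the lemma.

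Part (ii) is then immediate from the same radial structure. Fix $\theta\neq\Zero$ and take $r\ge 1/\|\theta\|$, so $\|r\theta\|\ge 1$; by \eqref{e:LargeParPolicy}, applied with the unit vector $\theta/\|\theta\|$ and the scalar $r\|\theta\|\ge 1$, we get $\feex^{r\theta} = \feex^{\theta/\|\theta\|}$, whence $\uppi_{r\theta} = \uppi_{\theta/\|\theta\|}$ and $A(r\theta) = A(\theta/\|\theta\|)$, $b(r\theta) = b(\theta/\|\theta\|)$. Therefore
\[
\frac{1}{r}\,\barf(r\theta) = \frac{1}{r}\bigl[A(\theta/\|\theta\|)(r\theta) - b(\theta/\|\theta\|)\bigr] = A(\theta/\|\theta\|)\,\theta - \frac{1}{r}\,b(\theta/\|\theta\|)\,,
\]
which converges to $A(\theta/\|\theta\|)\,\theta$ as $r\to\infty$. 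Hence the limit \eqref{e:barfinfty} exists and equals $A_\infty(\theta)\theta$ with $A_\infty(\theta)=A(\theta/\|\theta\|)$; for $\theta=\Zero$ this reduces to the trivial identity $\barfinf(\Zero)=\Zero$.

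The only genuinely delicate ingredients are the Lipschitz (indeed the mere continuous) dependence of $\uppi_\theta$ on $\theta$, and the passage from local to global Lipschitz continuity of $\barf$: the former rests on the uniform minorization supplied by $\epsy$-exploration together with the boundedness of the derivatives of $\kappa_\theta$ assumed in \eqref{e:TamedAssumptions}, and the latter rests entirely on the ``radially frozen'' normalization \eqref{e:normGibbsBdds}. Once these are in hand, both parts of the lemma are routine bookkeeping with stationary expectations.
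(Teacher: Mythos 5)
Your proposal is correct and takes essentially the same route as the paper: part (i) by expanding the per-sample update of \eqref{e:QlambdaApprox2} against the frozen-policy stationary law $\uppi_\theta$ (with the stationarity shift identifying $\Expect_{\uppi_\theta}[\psisub{n}\psisub{n+1}^\transpose]$ with $R_{-1}(\theta)$), and part (ii) by the radial invariance \eqref{e:LargeParPolicy} of the tamed Gibbs policy, which makes $A$ and $b$ constant along rays outside the unit ball. The paper's own proof is just a terser version of the same argument; your additional remarks on ergodicity and global Lipschitz continuity of $\barf$ are sound but not needed for the lemma as stated.
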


\begin{proof}
Identification of  $\barf$ follows immediately from \eqref{e:QlambdaApprox2} since $\theta$ is held fixed in the definition of the mean flow. 
The representation of the ODE@$\infty$ follows from structure of the policy highlighted in \eqref{e:LargeParPolicy}, which implies
\[
 \uppi_{r\theta} =   \uppi_{\theta} \,, \quad A(r\theta) = A(\theta) \,,\quad \textit{and}   \quad b(r\theta) = b(\theta)\qquad \textit{for all $r\ge 1$ when $\|\theta\|\ge 1$.}
 \]
\end{proof}

 \begin{lemma}
\label[lemma]{t:Rbdds}
Suppose that \eqref{e:RankCondition} holds.  Then,  for the recursion \eqref{e:QlambdaApprox}    there exists $\delta_\psi>0$, independent of $\theta$ such that
\[
\begin{aligned}
  R_0(\theta) & \ge \delta_\psi    I    &&  \textit{  for all $ \theta\in\Re^d $}
   \\
      \theta ^\transpose A(\theta ) \theta       &  \le  - (1-\disc)  \delta_\psi   &&  \textit{  for all $ \theta\in\Re^d $, $\|\theta\|\ge 1$.}
\end{aligned} 
\]
 \end{lemma}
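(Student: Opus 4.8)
The plan is to obtain both displayed inequalities from a single uniform bound, namely $R_0(\theta)\succeq\delta_\psi I$ for every $\theta$; the bound on $\theta^\transpose A(\theta)\theta$ then follows at once from \Cref{t:ODEsForQopti}(i) and a Cauchy--Schwarz estimate. To get the uniform bound I would first discard everything but the forced exploration. Under the $(\epsy,\kappa_0)$-tamed Gibbs policy the input has the form \eqref{e:epsyGreedy}, so in steady state the conditional law of $U_n$ given $X_n$ assigns mass at least $\epsy\,\upnu_\tEXP(u')$ to each action $u'$, independently of $X_n$. Conditioning on the exploration indicator $B_n$ in $R_0(\theta)=\Expect_{\uppi_\theta}[\psisub{n}\psisub{n}^\transpose]$ writes it as a positive-semidefinite term plus $\epsy\,\REXP(\theta)$, where $\REXP(\theta)=\Expect_{\uppi_\theta}[\psi(X_n,\EXP_n)\psi(X_n,\EXP_n)^\transpose]$ is the matrix of \eqref{e:RthetaEXP}; hence $R_0(\theta)\succeq\epsy\,\REXP(\theta)$, and it suffices to bound $\REXP(\theta)$ below uniformly in $\theta$.

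The substantive step is a \emph{support-domination} claim: $\supp(\uppi_\tEXP)\subseteq\supp(\uppi_\theta)$ for every $\theta$. Since the training policy always gives action $u'$ probability at least $\epsy\,\upnu_\tEXP(u')$, the transition matrix \eqref{e:Ttheta} dominates entrywise the transition matrix $\bar T$ of the chain governed by $\feex^\tEXP$ (the chain whose invariant pmf is $\uppi_\tEXP$): $T_\theta(z,z')\ge\epsy\,\bar T(z,z')$, hence $T_\theta^m(z,z')\ge\epsy^m\,\bar T^m(z,z')$ for all $m$. Fix $z'\in\supp(\uppi_\tEXP)$. Because the $\bar T$-chain is uni-chain on a finite set, $z'$ lies in its unique recurrent class and is therefore accessible from every state under $\bar T$, hence also under $T_\theta$; taking any $z_0\in\supp(\uppi_\theta)$ and using $\uppi_\theta=\uppi_\theta T_\theta^m$ shows $\uppi_\theta(z')>0$. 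Passing to the $\state$-marginals, $\supp(\uppi_\tEXP^{\mathsf X})\subseteq\supp(\uppi_\theta^{\mathsf X})$, and combined with the hypothesis $\REXP\succ0$ from \eqref{e:RankCondition} this yields $\REXP(\theta)\succ0$ pointwise: a vector $v$ with $v^\transpose\REXP(\theta)v=0$ would satisfy $v^\transpose\psi(x,u)=0$ for every $(x,u)$ charged by $\uppi_\theta^{\mathsf X}\!\otimes\upnu_\tEXP$, hence for every $(x,u)$ charged by $\uppi_\tEXP^{\mathsf X}\!\otimes\upnu_\tEXP$, forcing $v^\transpose\REXP v=0$ and so $v=0$.

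A pointwise bound is not enough since $\theta$ ranges over all of $\Re^d$; this is exactly where the scale invariance \eqref{e:LargeParPolicy} is used. It gives $T_{r\theta}=T_\theta$, hence $\uppi_{r\theta}=\uppi_\theta$, for $r\ge1$ and $\|\theta\|\ge1$, so $R_0$ takes on $\Re^d$ only the values it takes on the compact set $\{\|\theta\|\le1\}$. The map $\theta\mapsto T_\theta$ is continuous (the tamed-Gibbs weights are continuous, $\kappa_\theta$ being $C^2$ by \eqref{e:TamedAssumptions}), and on a finite state space the invariant pmf of a uni-chain depends continuously on the transition matrix; hence $\theta\mapsto\lambda_{\min}(R_0(\theta))$ is continuous. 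Being strictly positive on $\{\|\theta\|\le1\}$, it attains there a positive minimum $\delta_\psi$, which by the scale invariance bounds $\lambda_{\min}(R_0(\theta))$ from below for all $\theta$. This establishes $R_0(\theta)\succeq\delta_\psi I$.

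For the second inequality, \Cref{t:ODEsForQopti}(i) gives $A(\theta)=-R_0(\theta)+\disc R_{-1}(\theta)$ with $R_{-1}(\theta)=\Expect_{\uppi_\theta}[\psisub{n-1}\psisub{n}^\transpose]$, and Cauchy--Schwarz together with stationarity of $\uppi_\theta$ gives $\theta^\transpose R_{-1}(\theta)\theta\le\theta^\transpose R_0(\theta)\theta$; therefore $\theta^\transpose A(\theta)\theta\le-(1-\disc)\theta^\transpose R_0(\theta)\theta\le-(1-\disc)\delta_\psi\|\theta\|^2\le-(1-\disc)\delta_\psi$ whenever $\|\theta\|\ge1$. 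The only genuinely delicate point is the support-domination and uniformity argument of the second and third paragraphs: the entrywise domination $T_\theta\ge\epsy\,\bar T$ and the scaling identity \eqref{e:LargeParPolicy} are precisely what make it go through, and the remaining manipulations are routine.
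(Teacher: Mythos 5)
Your proof is correct, and the second half (the identity $A(\theta)=-R_0(\theta)+\disc R_{-1}(\theta)$ plus the Cauchy--Schwarz/stationarity bound $\theta^\transpose R_{-1}(\theta)\theta\le\theta^\transpose R_0(\theta)\theta$) coincides with the paper's argument. Where you diverge is in how you obtain the uniform lower bound $R_0(\theta)\ge\delta_\psi I$. The paper gets it quantitatively in one stroke: the forced exploration gives the entrywise domination $T_\theta^k(z,z')\ge\epsy^k T_\tEXP^k(z,z')$, and the uni-chain hypothesis supplies $N,\delta_N$ with $\sum_{k=1}^N T_\tEXP^k(z,z')\ge\delta_N$ on the support of $\uppi_\tEXP$; averaging the invariance equation then yields $\uppi_\theta(z)\ge\delta_{\textup{ii}}\,\uppi_\tEXP(z)$ with the \emph{explicit, $\theta$-independent} constant $\delta_{\textup{ii}}=\epsy^N\delta_N/N$ (this is \Cref{t:AbsCts_pmf}(ii)), whence $R_0(\theta)\ge\delta_{\textup{ii}}\REXP\ge\delta_{\textup{ii}}\lambdamin(\REXP)I$. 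You instead prove only the qualitative support-domination $\supp(\uppi_\tEXP)\subseteq\supp(\uppi_\theta)$, which gives pointwise positive definiteness, and then restore uniformity by combining the radial homogeneity \eqref{e:LargeParPolicy} with continuity of $\theta\mapsto\uppi_\theta$ and compactness of the unit ball. Both routes are valid here, but note the trade-off: your compactness argument leans on continuity of the policy in $\theta$ and on the scaling identity, so it is specific to the tamed Gibbs family, whereas the paper's minorization argument is policy-agnostic (it covers the discontinuous $\epsy$-greedy case used elsewhere, e.g.\ in \Cref{t:QstableODEGreedy}) and produces an explicit constant. For the lemma as stated your argument suffices; the paper's is the more portable and more quantitative of the two.
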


\begin{proof}
  The proof of the lower bound on $ R_0(\theta) $  is identical to the proof of  \Cref{t:AbsCts_pmf} in the Appendix.
From \Cref{t:ODEsForQopti}~(i) we have for $ \theta \in\Re^d $ satisfying $ \|\theta \|\ge 1$,
\[
\theta ^\transpose A(\theta ) \theta   =  -   \theta ^\transpose R_0(\theta )   \theta  + \disc  \theta ^\transpose R_{-1}(\theta)\theta   \le -(1-\disc)   \theta ^\transpose R_0(\theta )   \theta    
			\le  -  (1-\disc)  \delta_\psi  \|\theta \|^2   
\] 
\end{proof}

\begin{proof}[Proof of \Cref{t:approxStable}]
 Let $V_1(\theta ) =\half \|\theta \|^2$ and apply \Cref{t:ODEsForQopti,t:Rbdds} to obtain,  whenever $\|\odestate_t\|\ge 1$,
\[
\ddt V_1(\odestate_t) = \odestate_t^\transpose \barf(\odestate_t) = \odestate_t^\transpose\{ A(\odestate_t) \odestate_t - b( \odestate_t ) \}
\le   -   \delta_1 \|\odestate_t \|^2     +   \| \odestate_t \|  \|b( \odestate_t )\|  
\]
with $\delta_1 =  (1-\disc)     \delta_\psi$.
This gives,  with $\barb=\sup_\theta \|b( \theta )\|   <\infty$,  
\[
\ddt V_1(\odestate_t)  \le  -  \half  \delta_1 \|\odestate_t \|^2    \,,  \qquad    \| \odestate_t \|  \ge \max(1 , 2 \barb)
\]
We then obtain (v4) using $V(\theta )=\sqrt{V(\theta )} = \|\theta \|$ for $  \| \theta  \|  \ge\max(1 , 2 \barb)$  (modified in a neighborhood of  the origin to impose the $C^1$ condition):
\[
\ddt V(\odestate_t)  \le  -  \delta_v V (\odestate_t )    \,,  \qquad    \| \odestate_t \|  \ge \max(1 , 2 \barb) \,,\quad \textit{with $\delta_v =    \delta_1/4$.   }
\]

Part (i) then follows from  \Cref{t:chedevborkonmey21}~(i) and part (ii) from
\Cref{t:ODEstableImpliesPBE}.
\end{proof}

\subsection{Implications to the \boldmath{$\epsy$}-greedy policy}
\label{s:epsyGreedyWorks}

A full analysis of  Q-learning using the  $\epsy$-greedy policy for training is beyond the scope of this paper due to discontinuity of the vector field.     We find here that \Cref{t:Qstable} admits a partial extension.      

We consider here the corresponding mean flow \eqref{e:barfQgen},   and also the algorithm with matrix gain, whose mean flow vector field is
\[
\barfzap(\theta) =   - \theta   +   [A(\theta) ]^{-1} b(\theta)   \,,  \quad \theta \in \feeUnique  
\] 
This defines the dynamics expected when using    Zap Q-learning based on \eqref{e:ZAPnonlinearSA}.

The set  $ \feeUnique$ defined in \eqref{e:Sunique} may be expressed as the disjoint union,
\[
\feeUnique  = \bigcup_i  \feeUnique_i
\]
in which each $\feeUnique_i$ is an open convex polyhedron,   with $\fee^\theta = \fee^{\theta'} $   for all $\theta,\theta'\in \feeUnique_i$.   
Consequently,  both $\barf$ and $\barfzap$ are constant on each set $\feeUnique_i$.

For each $\theta\in\Re^d$,  denote by $\Upphi^\theta $ the set of all randomized $Q^\theta$-greedy policies:  if $\feex \in \Upphi^\theta $ then
\[
\sum_u \feex (u \mid x) Q^\theta (x,u)  =   \uQ^\theta (x)\,,\qquad x\in\state\,.
\]
If $\theta \in \feeUnique$ then $\Upphi^\theta =\{ \fee^\theta\} $ is a singleton.

\begin{theorem}
\label[theorem]{t:greedyStable}
Suppose that \eqref{e:RankCondition}  holds.    Then, the following hold for the mean flows associated with the Q-learning algorithm with $\epsy$-greedy training, provided $0<\epsy < \epsy_\disc$:

\whamrm{(i)}  There exists $\theta^*\in\Re^d$  and  $\feex^* \in \Upphi^{\theta^*} $  such that $\barf(\theta^*) =\Zero$, with $\barf$ defined in \eqref{e:barfQgen} in which the expectation is taken in steady-state using $\uppi_{\theta^*}$ obtained from the  randomized policy,
\begin{equation}
\feex^{\theta^*}  (u \mid x)  = (1-\epsy)  \feex^* (u\mid x)  +  \epsy \upnu_\tEXP(u) 
\label{e:epsyGreedyTrue2}
\end{equation}

\whamrm{(ii)}  If  $\theta^*\in\feeUnique $  then $\theta^*$ is locally asymptotically stable for the mean flow with vector field $\barf$.  

\whamrm{(iii)}  If  $\theta^*\in\feeUnique_i $ for some $i$,  then $\theta^*$ is locally asymptotically stable for the mean flow  with vector field $\barfzap$,  with domain of attraction including all of $\feeUnique_i$. 
\end{theorem}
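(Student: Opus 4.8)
The plan is to handle part (i) by a limiting argument anchored to \Cref{t:Qstable}, and parts (ii)--(iii) by exploiting the piecewise-affine structure of the $\epsy$-greedy mean flows on the cells $\feeUnique_i$. For (i), fix $\epsy\in(0,\epsy_\disc)$ and a sequence $\kappa_0^{(m)}\uparrow\infty$ with each $\kappa_0^{(m)}\ge\kappa_{\epsy,\disc}$. For each $m$, \Cref{t:Qstable}(ii) supplies a root $\theta^*_m$ of the projected Bellman equation \eqref{e:PBE} for the $(\epsy,\kappa_0^{(m)})$-tamed Gibbs policy, i.e.\ $\barf_{\kappa_0^{(m)}}(\theta^*_m)=\Zero$ with $\barf_{\kappa_0^{(m)}}$ the mean flow \eqref{e:barfQgen} for that training policy. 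The first step is to check $\{\theta^*_m\}$ is bounded: the $(V4)$-type drift bound produced inside the proof of \Cref{t:Qstable} (feeding \Cref{t:chedevborkonmey21}) holds on $\{\|\theta\|\ge\delta_v^{-1}\}$ with $\delta_v>0$ that may be taken uniform over $\kappa_0\ge\kappa_{\epsy,\disc}$, so every equilibrium lies in a fixed ball. Passing to a subsequence $\theta^*_m\to\theta^*$, and then (the $\ustate$-simplices being compact) to a further subsequence along which $\feex^{\theta^*_m}(\,\cdot\mid x)$ converges for each $x$, the softmax degeneracy as $\kappa_{\theta^*_m}\to\infty$ (cf.\ \eqref{e:GibbsDegeneracy}), together with $Q^{\theta^*_m}\to Q^{\theta^*}$ on the finite set, forces the limit to be supported on $\argmin_u Q^{\theta^*}(x,u)$; that is, it is a randomized $Q^{\theta^*}$-greedy policy $\feex^*\in\Upphi^{\theta^*}$, and the mixed policies $\feex^{\theta^*_m}$ converge to the $\epsy$-greedy policy \eqref{e:epsyGreedyTrue2}. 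Since $\epsy>0$ the limiting chain is uni-chain with invariant pmf bounded uniformly below (the argument of \Cref{t:AbsCts_pmf}, using \eqref{e:RankCondition}), so $\uppi_{\theta^*_m}\to\uppi_{\theta^*}$; continuity of $(\theta,x,u)\mapsto\BelErr(x,u;\theta)$ then lets me pass to the limit in $\Zero=\Expect_{\uppi_{\theta^*_m}}[\psisub{n}\BelErr(X_n,U_n;\theta^*_m)]$ to obtain $\barf(\theta^*)=\Zero$ in the sense of \eqref{e:barfQgen}, with the steady state taken under $\uppi_{\theta^*}$ induced by \eqref{e:epsyGreedyTrue2}. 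This is (i).

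For (ii)--(iii), the key point (already noted in the text) is that on each cell $\feeUnique_i$ the feedback law $\fee^\theta$ is constant, hence so is the $\epsy$-greedy policy \eqref{e:epsyGreedyTrue}, its transition matrix $T_\theta$, and its invariant pmf; consequently $A(\theta)\equiv A_i$ and $b(\theta)\equiv b_i$ on $\feeUnique_i$, so $\barf(\theta)=A_i\theta-b_i$ and $\barfzap(\theta)=-\theta+A_i^{-1}b_i$ there, once $A_i$ is known invertible. The one substantive computation is that each $A_i$ is Hurwitz. Taking any $\theta\in\feeUnique_i$ and writing $A_i=-R_0(\theta)+\disc\,\Expect_{\uppi_\theta}[\psisub{n}\,\psi(X_{n+1},\fee^\theta(X_{n+1}))^\transpose]$ from \eqref{e:barfQgen}, Cauchy--Schwarz and stationarity of $\uppi_\theta$ (so $X_{n+1}$ has the state-marginal of $\uppi_\theta$, whence $\Expect[(v^\transpose\psi(X_{n+1},\fee^\theta(X_{n+1})))^2]=v^\transpose\Rtheta(\theta)v$) give
\[
v^{\transpose} A_i v \;\le\; - v^{\transpose} R_0(\theta) v + \disc \sqrt{v^{\transpose} R_0(\theta) v}\,\sqrt{v^{\transpose} \Rtheta(\theta) v}\,, \qquad v \ne \Zero .
\]
By \eqref{e:Rzero}, $R_0(\theta)=(1-\epsy)\Rtheta(\theta)+\epsy\REXP(\theta)\ge(1-\epsy)\Rtheta(\theta)$, and $R_0(\theta)\ge\delta_\psi I$ by the argument of \Cref{t:AbsCts_pmf}; hence $v^\transpose\Rtheta(\theta)v\le(1-\epsy)^{-1}v^\transpose R_0(\theta)v$ and $v^\transpose A_iv\le-(1-\disc/\sqrt{1-\epsy})\,v^\transpose R_0(\theta)v<0$ provided $\epsy<1-\disc^2$. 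Since $\epsy_\disc\le 1-\disc^2$ (an elementary inequality), the hypothesis $\epsy<\epsy_\disc$ suffices, so the symmetric part of $A_i$ is negative definite and $A_i$ is Hurwitz, hence invertible, for every cell. Part (ii) then follows for $\theta^*\in\feeUnique_i$: restrict to a sublevel set of a quadratic Lyapunov function for $A_i$ that fits inside the open cell, on which the flow is the stable linear system $\ddt\odestate_t=A_i(\odestate_t-\theta^*)$ (using $A_i\theta^*=b_i$). Part (iii) follows because on the convex cell $\feeUnique_i$ one has $\barfzap(\theta)=-(\theta-\theta^*)$, whose flow $\odestate_t=\theta^*+e^{-t}(\odestate_0-\theta^*)$ stays on the segment $[\odestate_0,\theta^*]\subseteq\feeUnique_i$ and converges to $\theta^*$; hence all of $\feeUnique_i$ lies in the domain of attraction.

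The main obstacle is part (i): making the $\kappa_0\to\infty$ passage rigorous. The two delicate inputs are the uniform-in-$\kappa_0$ boundedness of the tamed-Gibbs equilibria (imported from the drift bound inside the proof of \Cref{t:Qstable}) and the joint continuity of the invariant pmf as both $\theta$ and the training policy vary along the limit, which rests on the uniform uni-chain/positivity property coming from \eqref{e:RankCondition}. An alternative route to existence, should the uniform bound prove awkward to extract, is a Kakutani fixed-point argument for $\theta\mapsto\theta+\delta\,\overline{\mathrm{co}}\,\mathcal F(\theta)$ on a large ball, where $\mathcal F$ is the set of $\barf$-values over $\feex^*\in\Upphi^\theta$ and the inward-pointing property on the boundary is supplied by the mean-flow estimates behind \Cref{t:QstableODEGreedy}; parts (ii)--(iii) are then essentially bookkeeping once the Cauchy--Schwarz bound for $A_i$ and the piecewise-constancy of the policy on the cells are in hand.
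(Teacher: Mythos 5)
Your proposal is correct and follows essentially the same route as the paper: part (i) is proved exactly as in the paper, by extracting a convergent subsequence of the tamed-Gibbs equilibria $\theta^{\kappa_0}$ as $\kappa_0\to\infty$ (uniform boundedness of the equilibria from the uniform drift bound, softmax degeneracy forcing the limiting policy into $\Upphi^{\theta^*}$, and convergence of the invariant pmfs), while parts (ii)--(iii) rest on the same piecewise-affine structure of $\barf$ and $\barfzap$ on the convex cells $\feeUnique_i$. The only variation is your self-contained Cauchy--Schwarz argument showing the symmetric part of $A(\theta^*)$ is negative definite under $\epsy<1-\disc^2$, where the paper instead invokes \Cref{t:QstableODEGreedy} (which obtains the sharper threshold $\epsy_\disc\le 1-\disc^2$ by optimizing over a splitting parameter); both are valid since $\epsy<\epsy_\disc$ implies your condition.
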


 \begin{proof}
The proof of (i) is contained in  \Cref{s:Qfinalproofs}.
 
If $ \barf(\theta^*) = 0$ with $\theta^* \in \feeUnique$, it then  follows from the definition of the vector field that $\theta^* =   [A(\theta^*) ]^{-1} b(\theta^*)$.  Consequently,  for $\theta$ in a neighborhood of $\theta^*$ contained in $\feeUnique$,
\[
\barf(\theta) =     A(\theta^*)  \bigl( \theta  -  \theta^*  \bigr)
\] 
See \Cref{t:QstableODEGreedy} for a proof that $   A(\theta^*) $ is Hurwitz, so that $\theta^*$ is locally asymptotically stable as claimed in (ii).

We have under the assumptions of (iii),
\[
\barfzap(\theta) =   - \theta   +  \theta^*   \,,  \quad \theta \in \feeUnique_i
\] 
If $\odestate_0 \in \feeUnique_i$ it follows that the solution to $\ddt \odestate_t  = \barfzap(\odestate_t)$ is given by
\[
\odestate_t = \theta^*     +  [\odestate_t - \theta^*] e^{-t}
\]
Convexity of $\feeUnique_i$   ensures that $\odestate_t \in \feeUnique_i$ for all $t$,  which completes the proof of (iii).
\end{proof}

\section{Conclusions and Thoughts for the Future}

This article began as a companion to the INFORMS APS lecture delivered by the author in June, 2023.   
The scope of the lecture and this article grew to include several significant new contributions,  which invite many avenues for future research:

\wham{1.}  The new Zap Zero SA algorithm  \eqref{e:ZapZero} is only one possible approach to approximate the Newton-Raphson flow.
There may be approaches based on momentum---it will be worthwhile revisiting the NESA algorithm of \cite{devbusmey19a}.    

\wham{2.}  We now know that the existence of a solution to the projected Bellman equation exists under mild conditions, most important of which involves the choice of policy for training.      
 
\wham{3.} The extension to average cost optimal control will be possible through consideration of \cite{aboberbor01}.    And for the discounted case, better algorithms and better bounds on $\epsy$ might be obtained by adopting relative Q-learning algorithms \cite{devmey22,CSRL}.

\wham{4.} 
We should consider other paradigms for algorithm design.   The recent approaches  \cite{mehmey09a,bascurkraneu21,mehmeyneulu21,lumehmeyneu22} 
 are  based on the linear programming formulation of optimal control due to of Manne, 1960 \cite{man60a}.

\wham{5.}   \textit{Challenge with Zap}.
Based on theory surrounding the Actor-Critic method,  we have for a policy of the form  \eqref{e:OptimisticGeneric},
\begin{equation}
  \partial_\theta \barf\, (\theta)
 = \Expect_{\uppi_\theta} [ A_n(\theta) ]  
 +
  \Expect_{\uppi_\theta} [   \haf_n\, (\theta) \Lambda_n(\theta)^\transpose ]   
\label{e:AthetaScore}
\end{equation}
in which $A_n(\theta)  =  \partial_\theta f_n\, (\theta)$.   The expectations are in steady state  under $\uppi_\theta$ (recall discussion surrounding \eqref{e:Ttheta}).

The second expectation involves the \textit{score function} associated with the randomized policy,
\[
 \Lambda_n(\theta)  =  \nabla_\theta \log  \feex^{\theta} (u\mid x)   \Big|_{  u=U_n  \,, \ x= X_n}
\]
The function $\haf_n$ solves a certain Poisson equation.   
If the   transition matrix $T_\theta$ is aperiodic, then for a stationary realization 
of $\{ X_n,U_n : n\ge 0\}$    we have
\[
  \Expect_{\uppi_\theta} [   \haf_n (\theta) \Lambda_n(\theta)^\transpose ]   
=
\sum_{k=0}^\infty    \Expect_{\uppi_\theta} [   [f_{n-k} (\theta) -\barf(\theta)] \Lambda_n(\theta)^\transpose ]   
\]
Based on this representation we can obtain unbiased   estimates of $  \partial_\theta \barf\, (\theta_n)$   by adopting concepts from actor-critic algorithms.
See \cite[Ch.~10]{CSRL} for a survey in the style of this paper.

Analysis of the resulting algorithms will be considered in future research.

\clearpage

\appendix

  \clearpage
  
  \centerline{\LARGE\bf Appendix}
    
 \section{Stability with Optimism} 
 
 This section concerns analysis of Q-learning with optimistic training, so that the input is defined by a randomized policy $ 
 \feex^{\theta}$.    When $\theta$ is frozen,  so that $U_k \sim   \feex^{\theta}(\varble\mid X_k)$ for each $k$, 
  then $\bfmX$ is a time homogeneous Markov chain with transition matrix,   
  \begin{equation}
P_\theta(x,x')  \eqdef    \sum_u   \feex^{\theta} (u\mid x) P_u(x,x')    \,, \qquad x, x'  \in  \state\, .
\label{e:Ptheta}
\end{equation}
Recall that in this case the joint process $ \{ (X_k,U_k)  : k\ge 0\}$ is also Markovian, with transition matrix given in \eqref{e:Ttheta}.
We maintain the notation $\uppi_\theta$ for the unique invariant pmf for $T_\theta$.

Of course,  the parameter $\theta$ is never frozen in any algorithm.  The transition matrices $P_\theta$ and $T_\theta$ are introduced for analysis. 
 
 It is assumed that  the function class is linear,  $\{Q^\theta = \theta^\transpose \psi : \theta\in\Re^d\}$ with $\psi \colon\state\times \ustate\to\Re^d$.

\subsection{A truly oblivious policy}

We require structure of the truly oblivious policy defined by $U_k \equiv \EXP_k$  in the definition of $\REXP$ in \eqref{e:RreallyOblivious}.     
The transition matrix for the joint process $ \{ (X_k,U_k)  : k\ge 0\}$ can be obtained from \eqref{e:Ttheta}, and is denoted
\[
T_\tEXP(z,z')  =    P_u(x,x')  \upnu_\tEXP(u')  \,, \qquad  z=(x,u)\, ,\  z'=(x',u')  \in \state\times\ustate\, .
\]
The invariance equation $\uppi_\tEXP(z') = \sum_z \uppi_\tEXP(z) T_\theta(z,z') $ implies that the invariant pmf is product form:
\[
\uppi_\tEXP(z')   =   \upmu_\tEXP(x')   \upnu_\tEXP(u') \,,\qquad z'=(x',u')  \in \state\times\ustate\, .
\]
 in which $ \upmu_\tEXP(x')  = \sum_u  \uppi_\tEXP(x',u)$ is the steady-state marginal distribution of $\bfmX$ under this policy.    
 Similar notation is adopted for each of the invariant pmfs,
 \[
  \upmu_\theta(x)  = \sum_u  \uppi_\theta (x,u)  \,,  \qquad  x\in\state\,, \ \theta\in\Re^d\, .
 \]
 These are the invariant pmfs for $\{ P_\theta \}$ appearing in \eqref{e:Ptheta}.

\def\deltaWlowerDom{\delta_\bullet}
 
 \begin{lemma}
\label[lemma]{t:AbsCts_pmf}
Suppose that the Markov chain with transition matrix $T_\tEXP$ is uni-chain, so that $\uppi_\tEXP$ is the unique invariant pmf.   
Consider any one of the three choices of $\{\clU_k\}$ used in \eqref{e:epsyGreedy}  
with $\epsy<1$ and any choice of $\kappa$ in the case of \eqref{e:epsyGreedySoftGibb} or $\{\kappa_\theta\}$ in the case of  \eqref{e:epsyGreedySoftGibbTamed}.   The following conclusions then hold:   

\whamrm{(i)}  $T_\theta$ is also uni-chain, so that $\uppi_\theta$ is unique for any $\theta$.

\whamrm{(ii)}    There is a constant $\deltaWlowerDom  >0$ such that  $\uppi_\theta(z)  \ge  \deltaWlowerDom   \uppi_\tEXP(z) $ for all $z$ and $\theta$.
The constant $\deltaWlowerDom  $   may depend on the  policy parameters, but not $\theta$.


\whamrm{(iii)}   $\uppi_\theta (x,u)  \ge  \epsy    \upmu_\theta(x)  \upnu_\tEXP(u)$ for all $x,u$, and   $\theta$.

\whamrm{(iv)}  $
   \REXP(\theta)  \ge \deltaWlowerDom      \REXP$  for all $ \theta\in\Re^d$.
\end{lemma}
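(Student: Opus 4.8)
The plan is to reduce all four parts to a single $\theta$-free domination inequality between transition matrices. In each of the three cases, on the event $\{B_k=1\}$ the input \eqref{e:epsyGreedy} equals the exploratory action $\EXP_k\sim\upnu_\tEXP$, so the conditional law of $U_k$ satisfies $\feex^{\theta}(u\mid x)\ge \epsy\,\upnu_\tEXP(u)$ for every $x,u$ and every $\theta$. Substituting this into \eqref{e:Ttheta} gives the entrywise bound $T_\theta(z,z')\ge\epsy\,T_\tEXP(z,z')$ for all $z,z'$ and all $\theta$ --- this is the only property of the three policies that the proof uses. For (i): since $\epsy>0$, the directed graph of $T_\theta$ contains that of $T_\tEXP$, and because $T_\tEXP$ is uni-chain a short graph argument applies --- every state still reaches the unique recurrent class $\clR$ of $T_\tEXP$, $\clR$ stays strongly connected, and the communicating class of $T_\theta$ containing $\clR$ is closed and is the unique recurrent class --- so $T_\theta$ is uni-chain and $\uppi_\theta$ is well defined.

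For (ii) I would work from the invariance identity. Writing $\uppi_\theta=\uppi_\theta T_\theta$ and applying the domination entrywise gives $\uppi_\theta\ge\epsy\,\uppi_\theta T_\tEXP$; iterating, $\uppi_\theta\ge\epsy^{\,n}\,\uppi_\theta T_\tEXP^{\,n}$ for every $n\ge0$. Averaging over $n=1,\dots,N$ and bounding $\epsy^{\,n}\ge\epsy^{\,N}$ yields $\uppi_\theta\ge\epsy^{\,N}\,\uppi_\theta\bar T_N$ with $\bar T_N=\tfrac1N\sum_{n=1}^{N}T_\tEXP^{\,n}$. Since $T_\tEXP$ is uni-chain on a finite space, the Ces\`aro averages converge, $\bar T_N(z,\cdot)\to\uppi_\tEXP$, uniformly in the starting state; hence $\uppi_\theta\bar T_N\to\uppi_\tEXP$ \emph{uniformly in $\theta$}. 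Fixing $N$ large enough that $(\uppi_\theta\bar T_N)(z)\ge\tfrac12\uppi_\tEXP(z)$ for all $z$ and all $\theta$ gives $\uppi_\theta(z)\ge\tfrac12\epsy^{\,N}\,\uppi_\tEXP(z)$, i.e.\ (ii) with $\delta_{\textup{ii}}=\tfrac12\epsy^{\,N}$ (trivial where $\uppi_\tEXP(z)=0$); crucially $N$, and therefore $\delta_{\textup{ii}}$, depends only on $T_\tEXP$, i.e.\ on the policy parameters, not on $\theta$.

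Parts (iii) and (iv) are then immediate. For (iii), the product structure of $T_\theta$ in \eqref{e:Ttheta} gives the factorization $\uppi_\theta(x,u)=\upmu_\theta(x)\,\feex^{\theta}(u\mid x)$, and combining this with $\feex^{\theta}(u\mid x)\ge\epsy\,\upnu_\tEXP(u)$ yields $\uppi_\theta(x,u)\ge\epsy\,\upmu_\theta(x)\upnu_\tEXP(u)$. For (iv), summing the inequality from (ii) over $u$ gives $\upmu_\theta(x)\ge\delta_{\textup{ii}}\,\upmu_\tEXP(x)$ for all $x$; since $\REXP(\theta)=\sum_{x,u}\upmu_\theta(x)\upnu_\tEXP(u)\,\psi(x,u)\psi(x,u)^\transpose$ and $\REXP=\sum_{x,u}\upmu_\tEXP(x)\upnu_\tEXP(u)\,\psi(x,u)\psi(x,u)^\transpose$ (using the product form of $\uppi_\tEXP$), and each $\psi(x,u)\psi(x,u)^\transpose$ is positive semidefinite, monotonicity of the weighted sum in the nonnegative weights $\upmu_\theta(x)-\delta_{\textup{ii}}\upmu_\tEXP(x)$ gives $\REXP(\theta)\ge\delta_{\textup{ii}}\,\REXP$. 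The one genuinely delicate point --- and the step I expect to demand the most care --- is the uniformity in $\theta$ of the mixing time $N$ used in (ii); this is exactly where the argument would break if the exploration floor were allowed to depend on $\theta$, and it is the reason everything is routed through the single $\theta$-independent inequality $T_\theta\ge\epsy\,T_\tEXP$ together with the $\theta$-independent ergodicity of $T_\tEXP$.
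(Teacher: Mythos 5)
Your proposal is correct and follows essentially the same route as the paper: everything is reduced to the single $\theta$-free domination $T_\theta \ge \epsy\, T_\tEXP$ together with the $\theta$-independent ergodicity of $T_\tEXP$, with (iii) obtained from one-step invariance and the factorization $\uppi_\theta(x,u)=\upmu_\theta(x)\feex^{\theta}(u\mid x)$, and (iv) from (ii) plus positive semidefiniteness of $\psi\psi^\transpose$. The only (cosmetic) difference is in (ii), where you pass to the Ces\`aro limit $\bar T_N(z,\cdot)\to\uppi_\tEXP$ while the paper instead invokes the uniform Doeblin minorization $\sum_{k=1}^N T_\tEXP^k(z,z')\ge\delta_N$ on the support of $\uppi_\tEXP$ and then bounds $\uppi_\tEXP(z)\le 1$; both yield a $\delta_{\textup{ii}}$ depending only on the policy parameters.
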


 \begin{proof} 
 Let $\state_0$ denote the support of $\upmu_\tEXP$ and  $\ustate_0$ denote the support of $\upnu_\tEXP$.    
 The uni-chain assumption is equivalent to the following reachability criterion: 
 	 there is $N\ge 1$  and $\delta_N>0$ such that   
\[
 \sum_{k=1}^N  T^k_\tEXP(z,z')  \ge \delta_N\,,\qquad  
 	\textit{for any $z \in \state\times\ustate$,
  			and  $ z' \in \state_0\times\ustate_0$,}
 \]
 with $ T^k_\tEXP$ the $k$-step transition matrix. 
 This is a version of Doeblin's minorization condition that implies uniform ergodicity when the chain is aperiodic \cite{MT}.

In view of \eqref{e:epsyGreedy} we have for any $\theta$,
 \[
 \sum_{k=1}^N  T^k_\theta(z,z')  \ge 
 \sum_{k=1}^N  \epsy^k  T^k_\tEXP(z,z')  \ge 
 \epsy^N  \delta_N
 		\,,\qquad  
 	\textit{for any $z \in \state\times\ustate$,
  			and  $ z' \in \state_0\times\ustate_0$,}
 \]
 Hence the family $\{T_\theta : \theta\in\Re^d \}$ satisfies a uniform Doeblin minorization.   In particular, each transition matrix is uni-chain, which establishes (i).   
 
 Part (ii) follows from the bounds above and invariance:
 \[
 \uppi_\theta(z')  =   \sum_z    \uppi_\theta(z)  \Bigl(  \frac{1}{N}  \sum_{k=1}^N  T^k_\theta(z,z')   \Bigr)
 \ge  
  \frac{1}{N} 
 \epsy^N
 \delta_N\,,\qquad z' \in \state_0\times\ustate_0
 \]

Part (iii) also follows from invariance in the following one-step form:   we have from \eqref{e:Ttheta},
and using the bound $ \feex^{\theta} (u' \mid x')  \ge \epsy  \upnu_\tEXP(u') $,   
\[
\begin{aligned}
 \uppi_\theta(z')  =   \sum_z    \uppi_\theta(z)  T_\theta(z,z')   
 & =    \sum_{x,u}    \uppi_\theta(x,u)  P_u(x,x') \feex^{\theta} (u' \mid x')   
 \\
 &
 						\ge   \epsy    \sum_{x,u}   \uppi_\theta(x,u)     P_u(x,x')   \upnu_\tEXP(u') 
\\
&						=   \epsy    \upmu_\theta(x')  \upnu_\tEXP(u')
\end{aligned} 
\]

 For  part (iv)  consider the definition  \eqref{e:RthetaEXP}, which gives
 \[
  \REXP(\theta)   =   \sum_{x,u}  \upmu_\theta(x) \upnu_\tEXP(u)
  \psi(x,u)  \psi(x,u)  ^\transpose   
\]
Applying (ii) gives $ \upmu_\theta(x) \ge   \deltaWlowerDom  \upmu_\tEXP(x) $ for all $x$, and hence
 the desired bound:
 \[
  \REXP(\theta)    
  \ge  
  \deltaWlowerDom   \sum_{x,u}  \upmu_\tEXP(x) \upnu_\tEXP(u)
  \psi(x,u)  \psi(x,u)  ^\transpose    =   \deltaWlowerDom   \REXP \, 
\]
  \end{proof}
  

\subsection{Mean flow for the \boldmath{$\epsy$}-greedy policy}


 In this subsection the input is chosen to be the $\epsy$-greedy policy \eqref{e:epsyGreedy}. The motivation is in part the fact that establishing stability of the ODE@$\infty$   in this case is far easier than the tamed Gibbs approximation. 
 
The transition matrix  \eqref{e:Ttheta} becomes  
\begin{equation}
T_\theta(z,z') =            P_u(x,x')    \bigl\{  (1-\epsy)  \ind\{ u' = \phi^\theta(x') \}  + \epsy \nu^\tEXP(u')  \bigr\} 
					\,, \qquad  z=(x,u)\, ,\  z'=(x',u')  \in \state\times\ustate\, .
\label{e:PthetaA1a}
\end{equation}
The family $\{ T_\theta : \theta\in\Re^d\}$ is finite because there are only a finite number of deterministic stationary policies;   
it takes on a constant value on each connected component of $\feeUnique$ (recall \eqref{e:Sunique}).

Compact representations of $f$ and $\barf$ are obtained with additional notation.   For $n\ge 0$ denote
\begin{equation}
\begin{aligned}
 \psisub{n}^\tTheta &= \psi(X_{n}, \fee^{\theta_n}(X_n))    \quad
&
 \psisub{n}^\tEXP &= \psi(X_{n}, \EXP_n)   
\\
\ctheta_n & = c(X_{n}, \fee^{\theta_n}(X_n)) 
 &
\cEXP_n &= c(X_{n}, \EXP_n)  
\end{aligned}  
\label{e:CompactNotation}
\end{equation}
We have under  the $\epsy$-greedy policy (\ref{e:epsyGreedy}, \ref{e:epsyGreedyTrue}),
\begin{equation}
\begin{aligned}
 f_{n+1}(\theta_n) 
 =  (1 - B_n) &  \bigl( 
     \ctheta_n   +         \bigl[ \disc \psisub{n+1}^\tTheta      - \psisub{n}^\tTheta  \bigr]^\transpose \theta_n  \bigr) \psisub{n}^\tTheta 
   \\
{}  +B_n & \bigl( 
     \cEXP_n   +         \bigl[  \disc\psisub{n+1}^\tTheta      - \psi^\tEXP_n  \bigr]^\transpose \theta_n  \bigr)  \psisub{n}^\tEXP
\end{aligned} 
\label{e:SAQ}
\end{equation}

\begin{lemma}
\label[lemma]{t:psipsiThetaGreedy}
$\displaystyle  \Expect_{\uppi_\theta}   \bigl[   \psisub{n}   \{  \psisub{n+1}^\tTheta  \} ^\transpose  \bigr]  =     R_{-1}(\theta)  + \epsy D(\theta)$,
in which
\begin{equation}
	  D(\theta) =  \Expect_{\uppi_\theta}   \bigl[   \psisub{n}   \{  \psisub{n+1}^\tTheta  -  \psisub{n+1}^\tEXP \} ^\transpose  \bigr]  
 \label{e:Dgreedy}
\end{equation}
\end{lemma}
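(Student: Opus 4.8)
The plan is to reduce the identity to a short linear-algebra rearrangement, using the definition \eqref{e:epsyGreedy}--\eqref{e:epsyGreedyTrue} of the $\epsy$-greedy input together with stationarity of the frozen-parameter chain. First I would record the one-step-ahead decomposition of the basis vector: with $\theta$ frozen, the joint process $\{(X_k,U_k)\}$ is the time-homogeneous Markov chain with transition matrix \eqref{e:PthetaA1a} and (by \Cref{t:AbsCts_pmf}, using \eqref{e:RankCondition}) unique invariant pmf $\uppi_\theta$, while $U_{n+1} = (1-B_{n+1})\fee^{\theta}(X_{n+1}) + B_{n+1}\EXP_{n+1}$. Since $B_{n+1}\in\{0,1\}$, this gives the pointwise identity
\[
\psisub{n+1} \;=\; (1-B_{n+1})\,\psisub{n+1}^\tTheta + B_{n+1}\,\psisub{n+1}^\tEXP
\]
in the notation \eqref{e:CompactNotation} (with the running parameter equal to $\theta$).

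Next, by stationarity the time index in $R_{-1}(\theta)=\Expect_{\uppi_\theta}[\psisub{n-1}\psisub{n}^\transpose]$ may be shifted, so that $R_{-1}(\theta)=\Expect_{\uppi_\theta}[\psisub{n}\psisub{n+1}^\transpose]$; substituting the decomposition above yields
\[
R_{-1}(\theta) \;=\; \Expect_{\uppi_\theta}\bigl[\psisub{n}\{(1-B_{n+1})\psisub{n+1}^\tTheta + B_{n+1}\psisub{n+1}^\tEXP\}^\transpose\bigr].
\]
Now $B_{n+1}$ is a fresh Bernoulli$(\epsy)$ variable, independent of $(X_0,U_0,\dots,X_{n+1})$, of $\{B_k:k\le n\}$, and of $\{\EXP_k\}$, hence independent of the triple $(\psisub{n},\psisub{n+1}^\tTheta,\psisub{n+1}^\tEXP)$; it therefore factors out of the expectation to give
\[
R_{-1}(\theta) \;=\; (1-\epsy)\,\Expect_{\uppi_\theta}[\psisub{n}\{\psisub{n+1}^\tTheta\}^\transpose] \;+\; \epsy\,\Expect_{\uppi_\theta}[\psisub{n}\{\psisub{n+1}^\tEXP\}^\transpose].
\]
Solving this identity for $\Expect_{\uppi_\theta}[\psisub{n}\{\psisub{n+1}^\tTheta\}^\transpose]$ and regrouping the $\epsy$-terms produces exactly $\Expect_{\uppi_\theta}[\psisub{n}\{\psisub{n+1}^\tTheta\}^\transpose] = R_{-1}(\theta) + \epsy D(\theta)$ with $D(\theta)$ as in \eqref{e:Dgreedy}.

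There is no genuinely hard step here. The only points requiring care are the independence bookkeeping in the last displayed step --- that $B_{n+1}$ is realized after $X_{n+1}$ and independently of $\EXP_{n+1}$ and of the past, so it pulls out of the joint expectation cleanly --- and the validity of the stationary time-shift identifying $\Expect_{\uppi_\theta}[\psisub{n}\psisub{n+1}^\transpose]$ with $R_{-1}(\theta)$, which is immediate once $\uppi_\theta$ is known to be the unique invariant pmf for $T_\theta$.
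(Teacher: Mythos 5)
Your proposal is correct and follows essentially the same route as the paper: both decompose $\psisub{n+1}$ under the $\epsy$-greedy policy into the greedy and exploratory parts, identify $R_{-1}(\theta)=\Expect_{\uppi_\theta}[\psisub{n}\psisub{n+1}^\transpose]$ via stationarity, and rearrange to isolate $\Expect_{\uppi_\theta}[\psisub{n}\{\psisub{n+1}^\tTheta\}^\transpose]$. Your version merely makes explicit the independence bookkeeping for $B_{n+1}$ that the paper leaves implicit.
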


\begin{proof}
Starting with the definition $    R_{-1}(\theta)  =   \Expect_{\uppi_\theta}   \bigl[   \psisub{n}   \{  \psisub{n+1}  \} ^\transpose  \bigr] $, we have under the $\epsy$-greedy policy,
\[
\begin{aligned}
    R_{-1}(\theta)  &=  (1-\epsy)   \Expect_{\uppi_\theta}   \bigl[   \psisub{n}   \{  \psisub{n+1}^\tTheta  \} ^\transpose  \bigr]   
					+\epsy  \Expect_{\uppi_\theta}   \bigl[   \psisub{n}   \{  \psisub{n+1}^\tEXP  \} ^\transpose  \bigr]  
					\\
					&= \Expect_{\uppi_\theta}   \bigl[   \psisub{n}   \{  \psisub{n+1}^\tTheta  \} ^\transpose  \bigr]  +  \epsy  
					\Expect_{\uppi_\theta}   \bigl[   \psisub{n}   \{  \psisub{n+1}^\tEXP - \psisub{n+1}^\tTheta  \} ^\transpose  \bigr]  
\end{aligned} 
\]
\end{proof}

\begin{subequations}

\begin{lemma}
\label[lemma]{t:Aproperties1}   
The vector fields for the mean flow and the ODE@$\infty$ for the $\epsy$-greedy policy are 
\begin{align}
  \barf(\theta) = A(\theta) \theta  &- b(\theta)  	\qquad\qquad
		  \barf_\infty(\theta) = A(\theta) \theta  
\label{e:epsy-greedy-flow}
\\[.5em]
\textit{ in which  }
\quad
A(\theta)  & =  -  \bigl[ R_0(\theta) - \disc R_{-1}(\theta) \bigr]  
					+ \epsy\disc D(\theta)     
\label{e:Agr}
   \\ 
b(\theta)  &=  (1-\epsy)\btheta(\theta)   +  \epsy  \bEXP(\theta)
\label{e:bgr}
\end{align}  
		$\btheta(\theta)  = - \Expect_{\uppi_\theta}[    \psisub{n}^\tTheta   \csub{n}^\tTheta  ]$   and 
		$  \bEXP(\theta) = -\Expect_{\uppi_\theta}[ \psisub{n}^\tEXP  c(X_{n}, \EXP_n)   ]$.
\end{lemma}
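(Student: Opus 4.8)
The plan is to compute $\barf$ directly from the closed form \eqref{e:SAQ} of $f_{n+1}(\theta_n)$ with $\theta_n$ frozen at $\theta$, and then to read the ODE@$\infty$ off the scale invariance of the $\epsy$-greedy policy. The starting observation is that in \eqref{e:Qlambda} the eligibility vector is $\elig_n = \psisub{n} = \psi(X_n,U_n)$, and that $\uQ^{\theta}(X_{n+1}) = \theta^\transpose\psisub{n+1}^\tTheta$ regardless of which branch $B_n\in\{0,1\}$ of \eqref{e:epsyGreedy} is active, so \eqref{e:SAQ} collapses to $f_{n+1}(\theta) = \bigl(\csub{n} + \disc\,\theta^\transpose\psisub{n+1}^\tTheta - \theta^\transpose\psisub{n}\bigr)\psisub{n}$, which is the general optimistic form already recorded in \eqref{e:barfQgen}. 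Taking $\Expect_{\uppi_\theta}$ and using $\Expect_{\uppi_\theta}[\psi(X_{n+1},\fee^\theta(X_{n+1}))\mid X_n,U_n] = \sum_{x'}P_{U_n}(X_n,x')\psi(x',\fee^\theta(x'))$ gives $\barf(\theta) = \disc\,\Expect_{\uppi_\theta}[\psisub{n}\{\psisub{n+1}^\tTheta\}^\transpose]\,\theta - R_0(\theta)\,\theta + \Expect_{\uppi_\theta}[\psisub{n}\csub{n}]$.

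Next I would identify the three pieces. The cross moment $\Expect_{\uppi_\theta}[\psisub{n}\{\psisub{n+1}^\tTheta\}^\transpose]$ equals $R_{-1}(\theta) + \epsy D(\theta)$ by \Cref{t:psipsiThetaGreedy}; together with the $-R_0(\theta)\,\theta$ contribution this yields $A(\theta) = -[R_0(\theta) - \disc R_{-1}(\theta)] + \epsy\disc D(\theta)$, which is \eqref{e:Agr}. For the constant term, conditioning on $X_n$ and using that under $\uppi_\theta$ the conditional law of $U_n$ is $\feex^{\theta}(u\mid X_n) = (1-\epsy)\ind\{u=\fee^\theta(X_n)\} + \epsy\,\upnu_\tEXP(u)$ (from \eqref{e:epsyGreedy}--\eqref{e:epsyGreedyTrue}) splits it as $\Expect_{\uppi_\theta}[\psisub{n}\csub{n}] = (1-\epsy)\Expect_{\uppi_\theta}[\psisub{n}^\tTheta\csub{n}^\tTheta] + \epsy\,\Expect_{\uppi_\theta}[\psisub{n}^\tEXP\, c(X_n,\EXP_n)] = -(1-\epsy)\btheta(\theta) - \epsy\,\bEXP(\theta)$, which is $-b(\theta)$ for $b$ as in \eqref{e:bgr}. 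Assembling the pieces gives $\barf(\theta) = A(\theta)\theta - b(\theta)$, the first half of \eqref{e:epsy-greedy-flow}.

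For the ODE@$\infty$, note that $\argmin_u (r\theta)^\transpose\psi(x,u) = \argmin_u \theta^\transpose\psi(x,u)$ for every $r>0$, so with a fixed tie-breaking rule $\fee^{r\theta} = \fee^\theta$, hence $T_{r\theta} = T_\theta$, $\uppi_{r\theta} = \uppi_\theta$, and therefore $A(r\theta) = A(\theta)$ and $b(r\theta) = b(\theta)$ for all $r>0$ (the $\epsy$-greedy analogue of \eqref{e:LargeParPolicy}). Consequently $r^{-1}\barf(r\theta) = A(\theta)\theta - r^{-1}b(\theta) \to A(\theta)\theta$ as $r\to\infty$, so the limit \eqref{e:barfinfty} exists and $\barf_\infty(\theta) = A(\theta)\theta$, completing \eqref{e:epsy-greedy-flow}. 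The only point requiring care is the bookkeeping that keeps the greedy next-state feature $\psisub{n+1}^\tTheta = \psi(X_{n+1},\fee^\theta(X_{n+1}))$, the minimizer inside $\uQ^\theta$, distinct from the feature $\psisub{n}$ actually sampled under the exploratory policy at time $n$, and likewise separates the greedy and exploratory halves of $\Expect_{\uppi_\theta}[\psisub{n}\csub{n}]$; the cross term between the two is precisely what \Cref{t:psipsiThetaGreedy} absorbs into $\epsy D(\theta)$. The remaining steps, namely the conditioning identities under $\uppi_\theta$ and the scaling argument, are routine.
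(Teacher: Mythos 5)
Your proposal is correct and follows essentially the same route as the paper: write $f_{n+1}(\theta)=A_{n+1}\theta-b_{n+1}$, take the steady-state expectation, invoke \Cref{t:psipsiThetaGreedy} for the cross moment $\Expect_{\uppi_\theta}[\psisub{n}\{\psisub{n+1}^\tTheta\}^\transpose]=R_{-1}(\theta)+\epsy D(\theta)$, and obtain the ODE@$\infty$ from the scale invariance $A(r\theta)=A(\theta)$, $b(r\theta)=b(\theta)$ for $r>0$. The only difference is that you spell out the conditioning that splits $\Expect_{\uppi_\theta}[\psisub{n}\csub{n}]$ into its greedy and exploratory parts, which the paper leaves as ``immediate.''
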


\begin{proof}   The representation \eqref{e:SAQ} is equivalently expressed  $ f_{n+1} (\theta_n)  =   A_{n+1}  \theta_n  - b_{n+1}$, in which  
\[
\begin{aligned}
A_{n+1}  &=     \psisub{n}    \bigl[\disc  \psisub{n+1}^\tTheta      - \psisub{n}  \bigr]^\transpose     
     \\
b_{n+1}& = (1 - B_n)   \psisub{n}^\tTheta    c(X_{n}, \fee^{\theta}(X_n))      +B_n  \psisub{n}^\tEXP   c(X_n, \EXP_n)   
\end{aligned} 
\]
The expression for $b(\theta)$ in the expression $\barf(\theta) = \Expect_{\uppi_\theta}[f_{n+1}(\theta ) = A(\theta) \theta - b(\theta) $ is immediate.

We have $A(\theta)   =  - R_0(\theta)    +\disc  \Expect_{\uppi_\theta}   \bigl[   \psisub{n}   \{  \psisub{n+1}^\tTheta  \} ^\transpose  \bigr] $,
so that  \eqref{e:Agr} follows from  \Cref{t:psipsiThetaGreedy}.

 The expression for $\barf_\infty$ follows from the fact that $A$ and $b$ are invariant under positive scaling of their arguments:   $A(r\theta) = A(\theta)$ and $b(r\theta) = b(\theta)$ for any $\theta$ and $r>0$.
 \end{proof}

\label{e:A+b}
\end{subequations}

 The mean flow \eqref{e:ODESA1} is a differential inclusion because the vector field $\barf$ is not continuous.

The form of the expression for $A(\theta)$ in \eqref{e:Agr} is intended to evoke the similar formula \eqref{t:ODEsForQopti}  obtained for \eqref{e:QlambdaApprox2}.

The following conclusions are based on arguments similar to what is used to obtain stability of on-policy TD-learning   \cite{tsivan97}.    
Recall the definition  \eqref{e:epsydisc}: $\epsy_\disc\eqdef   (1-\disc)^2  / [ (1-\disc)^2 + \disc^2 ]$.

\begin{proposition}
\label[proposition]{t:QstableODEGreedy}
If $\epsy< \epsy_\disc$, then there is   is $\beta_\epsy>0$ such that 
$v^\transpose A(\theta)v  \le  -\beta_\epsy \| v\|^2$  for each $v,\theta\in\Re^d$.
\end{proposition}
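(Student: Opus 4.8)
The plan is to bound the quadratic form $v^\transpose A(\theta) v$ directly for a fixed $v\in\Re^d$, with no separate symmetrization step since $v^\transpose A(\theta)v = v^\transpose\tfrac12(A(\theta)+A(\theta)^\transpose)v$ already, and to exploit the decomposition \eqref{e:Agr}: $A(\theta)=-[R_0(\theta)-\disc R_{-1}(\theta)]+\epsy\disc D(\theta)$, treating $-[R_0(\theta)-\disc R_{-1}(\theta)]$ as an ``on-policy TD'' term (handled as in \cite{tsivan97}) and $\epsy\disc D(\theta)$ as an exploration correction. Write $g_n=v^\transpose\psisub{n}$, $g_{n+1}=v^\transpose\psisub{n+1}$, $g_{n+1}^\tTheta=v^\transpose\psisub{n+1}^\tTheta$, $g_{n+1}^\tEXP=v^\transpose\psisub{n+1}^\tEXP$ under the stationary chain with law $\uppi_\theta$, and set $a=\Expect_{\uppi_\theta}[g_n^2]=v^\transpose R_0(\theta)v$, $a_\tTheta=v^\transpose\Rtheta(\theta)v$, $a_\tEXP=v^\transpose\REXP(\theta)v$, so that $a=(1-\epsy)a_\tTheta+\epsy a_\tEXP$ by \eqref{e:Rzero}, while $\Expect_{\uppi_\theta}[(g_{n+1}^\tTheta)^2]=a_\tTheta$ and $\Expect_{\uppi_\theta}[(g_{n+1}^\tEXP)^2]=a_\tEXP$ since the $X$-marginal of $\uppi_\theta$ is $\upmu_\theta$ at every time (using \eqref{e:RthetaEXP}).

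For the on-policy part, stationarity gives $\Expect_{\uppi_\theta}[g_{n+1}^2]=a$, so Cauchy--Schwarz yields $|v^\transpose R_{-1}(\theta)v|=|\Expect_{\uppi_\theta}[g_n g_{n+1}]|\le a$ and hence $-v^\transpose R_0(\theta)v+\disc v^\transpose R_{-1}(\theta)v\le -(1-\disc)a$. For the correction term I would use Cauchy--Schwarz again, $|v^\transpose D(\theta)v|=|\Expect_{\uppi_\theta}[g_n(g_{n+1}^\tTheta-g_{n+1}^\tEXP)]|\le \sqrt{a}\,\big(\Expect_{\uppi_\theta}[(g_{n+1}^\tTheta-g_{n+1}^\tEXP)^2]\big)^{1/2}$, followed by the $L^2$ triangle inequality $\big(\Expect_{\uppi_\theta}[(g_{n+1}^\tTheta-g_{n+1}^\tEXP)^2]\big)^{1/2}\le\sqrt{a_\tTheta}+\sqrt{a_\tEXP}$, and then the weighted Cauchy--Schwarz bound $(\sqrt{a_\tTheta}+\sqrt{a_\tEXP})^2\le\big(\tfrac1{1-\epsy}+\tfrac1\epsy\big)\big((1-\epsy)a_\tTheta+\epsy a_\tEXP\big)=\tfrac{a}{\epsy(1-\epsy)}$. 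Combining, $\epsy\disc|v^\transpose D(\theta)v|\le \disc\sqrt{\epsy/(1-\epsy)}\,a$, so that
\[
v^\transpose A(\theta)v \;\le\; -\Big[(1-\disc)-\disc\sqrt{\tfrac{\epsy}{1-\epsy}}\,\Big]\,v^\transpose R_0(\theta)v,
\]
and an elementary rearrangement shows the bracket is strictly positive exactly when $\epsy<(1-\disc)^2/[(1-\disc)^2+\disc^2]=\epsy_\disc$ of \eqref{e:epsydisc}.

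Finally, to upgrade ``$\le -\beta\, v^\transpose R_0(\theta)v$'' to ``$\le -\beta_\epsy\|v\|^2$'' uniformly in $\theta$, I would invoke $R_0(\theta)=(1-\epsy)\Rtheta(\theta)+\epsy\REXP(\theta)\succeq\epsy\REXP(\theta)\succeq\epsy\,\delta_{\textup{ii}}\,\REXP$ from \Cref{t:AbsCts_pmf}~(iv), together with $\REXP\succ0$ from \eqref{e:RankCondition}, giving $R_0(\theta)\succeq\delta_\psi I$ with $\delta_\psi=\epsy\,\delta_{\textup{ii}}\,\lambdamin(\REXP)>0$ independent of $\theta$; then $\beta_\epsy=\delta_\psi\big[(1-\disc)-\disc\sqrt{\epsy/(1-\epsy)}\,\big]$ works. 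The only genuinely delicate point is the perturbation estimate: obtaining $\tfrac{a}{\epsy(1-\epsy)}$ rather than a cruder $\tfrac{2a}{\epsy(1-\epsy)}$ (which would follow from $(x-y)^2\le2x^2+2y^2$) is precisely what makes the admissible range come out to the sharp value $\epsy_\disc$. The $\theta$-uniformity of every constant is automatic once \Cref{t:AbsCts_pmf} is available, since all $\theta$-dependence enters through $\uppi_\theta$, which is uniformly minorized there, and $\barf$ is globally Lipschitz by \Cref{t:Aproperties1}.
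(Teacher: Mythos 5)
Your proof is correct and follows essentially the same route as the paper's: the same decomposition $A(\theta)=-[R_0(\theta)-\disc R_{-1}(\theta)]+\epsy\disc D(\theta)$, the same Cauchy--Schwarz treatment of the on-policy term, and the same sharp constant $\|v\|^{-2}_{R_0}\,|v^\transpose D(\theta)v|\le 1/\sqrt{\epsy(1-\epsy)}$, which the paper obtains by optimizing a free parameter $\eta$ in termwise bounds $xy\le\half[\delta^{-1}x^2+\delta y^2]$ rather than by your Cauchy--Schwarz plus $L^2$-triangle-inequality argument --- the two derivations are equivalent. The uniformization in $\theta$ via \Cref{t:AbsCts_pmf} and \eqref{e:RankCondition} also matches the paper's proof.
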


 \begin{proof} 
 Applying \Cref{t:Aproperties1}   gives for any $v,\theta$,
 \begin{equation}
v^\transpose A(\theta) v    \le   -  (1-\gamma)  v^\transpose  R_0(\theta)   v
					+
							  \epsy \gamma  v^\transpose D (\theta) v
\label{e:vAtheta1}
\end{equation}
The inequality follows from the bound $ v^\transpose  R_k(\theta)   v  \le  v^\transpose  R_0(\theta)   v$, valid for any $k$. 

We are left to bound the term involving $D$.
Write 
\[
v^\transpose    D(\theta) v =  \Expect_{\uppi_\theta} \bigl[  
	 (v^\transpose \psisub{n} )  ( v^\transpose \psisub{n+1}^\tTheta ) 
	 			 \bigr] 
	-
 \Expect_{\uppi_\theta} \bigl[  	( v^\transpose \psisub{n} )(  v^\transpose\psisub{n+1}^\tEXP)    
			 \bigr]  
\]
Using the bound $xy\le \half [x^2 + y^2]$ for $x,y\in\Re$, we obtain for any $\delta_\tEXP, \delta_\tTheta >0$, 
\[
\begin{aligned}
\big|   \Expect_{\uppi_\theta} \bigl[    (v^\transpose \psisub{n} )  ( v^\transpose \psisub{n+1}^\tTheta )   \bigr]    & 
			 \le    \half \delta_\tTheta^{-1} v^\transpose R_0(\theta) v    +   \half \delta_\tTheta  v^\transpose R_0^\tTheta(\theta) v 
			 \\
\big|   \Expect_{\uppi_\theta} \bigl[  ( v^\transpose \psisub{n} )(  v^\transpose\psisub{n+1}^\tEXP)     \bigr]  \big| &  
			 \le    \half  \delta_\tEXP^{-1} v^\transpose R_0(\theta) v    +   \half \delta_\tEXP  v^\transpose R_0^\tEXP(\theta) v 
\end{aligned} 
\]
Recall from  \eqref{e:Rzero} that $R_0(\theta) = (1-\epsy) R_0^\tTheta(\theta)  + \epsy  R_0^\tEXP(\theta) $.    Set 
$\delta_\tEXP  =\epsy \eta$, $\delta_\tTheta = (1-\epsy ) \eta$,  with     $\eta>0$ to be chosen.   Then,
\[
\begin{aligned}
v^\transpose    D(\theta) v   & \le   \half \Bigl[  \bigl( \delta_\tEXP^{-1} +   \delta_\tTheta^{-1}  \bigr) v^\transpose R_0(\theta) v
					+ \delta_\tTheta  v^\transpose R_0^\tTheta(\theta) v
					+  \delta_\tEXP  v^\transpose R_0^\tEXP(\theta) v      \Bigr]
   \\
            & =    \half \Bigl[    \Bigl( \frac{1}{\epsy}  + \frac{1}{1- \epsy}  \Bigr) \frac{1}{\eta}      +   \eta  \Bigr]   v^\transpose R_0(\theta) v
\end{aligned} 
\]
Minimizing the right hand side over $\eta$ gives $ \eta^*_\epsy = \sqrt{ \epsy^{-1} + (1-  \epsy)^{-1} }$,  and on substitution,  
\[
v^\transpose    D(\theta) v   \le \eta^*_\epsy \,  v^\transpose R_0(\theta) v  
\]

Substitution into \eqref{e:vAtheta1} gives the final bound,
\[
v^\transpose A(\theta) v    \le  \bigl[ -  (1-\gamma)   +
							  \epsy \gamma  \eta^*_\epsy     \bigr]  v^\transpose  R_0(\theta)   v
 \]
The coefficient is negative for positive $\epsy$ if and only if $\epsy<\epsy_\disc$.   We obtain the desired bound with
\[
\beta_\epsy =    \bigl[  (1-\gamma)  -  \epsy \gamma  \eta^*_\epsy     \bigr] \min_\theta \lambdamin(R_0(\theta) )
\]
\Cref{t:AbsCts_pmf}
implies that the minimum is strictly positive. 
 \end{proof}

The extension of \Cref{t:QstableODEGreedy} to the tamed Gibbs policy requires approximations summarized in the next subsection.
 
 \subsection{Entropy and Gibbs bounds}

 Consider a single Gibbs   pmf on $\ustate$ with energy $E\colon\ustate\to\Re$ and  inverse temperature $\kappa>0$:
 \[
 p_\kappa(u) =    \frac{1}{ \clZ_\kappa }  \exp(-\kappa E(u) )   \,, \qquad u\in\ustate\, . 
 \]
 The normalizing factor $\clZ_\kappa$ is commonly called the \textit{partition function}.    The entropy of $ p_\kappa$ is denoted
 \[
 H_\kappa = - \sum_u  p_\kappa(u) \log( p_\kappa(u) ) = \sum_u  p_\kappa(u) \bigl[ \kappa E(u)  + \log (\clZ_\kappa) \bigr]
 \]
 It is well known that bounds on entropy lead to bounds on the quality of the softmin approximation.
 
 Denote $\uE \eqdef \min_u E(u)$.
 
 \begin{lemma}
\label[lemma]{t:GibbsSoftMinBdd}
$\displaystyle\uE  \le \sum_u  p_\kappa(u)  E(u)    \le  \uE +\frac{1}{\kappa}  \log(|\ustate|)  $,  for any $\kappa>0$.   
\end{lemma}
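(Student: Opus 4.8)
The plan is to handle the two inequalities separately; both are elementary, and the only input beyond the definitions already recorded above is the maximum-entropy bound on a finite set.

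For the lower bound I would simply use that $p_\kappa$ is a probability mass function and that $E(u)\ge\uE$ for every $u$, so that
\[
\sum_u p_\kappa(u) E(u)\ \ge\ \uE \sum_u p_\kappa(u)\ =\ \uE .
\]

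For the upper bound I would start from the entropy identity already displayed in the excerpt,
\[
H_\kappa \;=\; \sum_u p_\kappa(u)\bigl[\kappa E(u)+\log(\clZ_\kappa)\bigr] \;=\; \kappa\sum_u p_\kappa(u) E(u)+\log(\clZ_\kappa),
\]
which rearranges to $\sum_u p_\kappa(u) E(u)=\kappa^{-1}\bigl[H_\kappa-\log(\clZ_\kappa)\bigr]$. It then suffices to bound the two terms on the right. First, $H_\kappa\le\log(|\ustate|)$, the standard fact that the uniform law maximizes entropy on a finite set, which follows from Jensen's inequality (equivalently, from non-negativity of the relative entropy of $p_\kappa$ with respect to the uniform distribution). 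Second, $\clZ_\kappa=\sum_u\exp(-\kappa E(u))\ge\exp(-\kappa\uE)$, since every summand is positive and the term corresponding to a minimizer of $E$ equals $\exp(-\kappa\uE)$; hence $-\log(\clZ_\kappa)\le\kappa\uE$. Combining the two estimates,
\[
\sum_u p_\kappa(u) E(u)\;=\;\kappa^{-1}\bigl[H_\kappa-\log(\clZ_\kappa)\bigr]\;\le\;\kappa^{-1}\bigl[\log(|\ustate|)+\kappa\uE\bigr]\;=\;\uE+\kappa^{-1}\log(|\ustate|).
\]

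I do not anticipate any genuine obstacle: the claim is a one-line consequence of the max-entropy inequality together with a trivial lower bound on the partition function. The only minor stylistic choice is that one could instead argue directly via the scalar inequality $te^{-t}\le e^{-1}$ applied to $t=\kappa(E(u)-\uE)$, but that route gives the weaker bound $\uE+|\ustate|/(e\kappa)$, so the entropy argument above is preferable.
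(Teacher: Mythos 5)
Your proof is correct and follows essentially the same route as the paper's: both use the maximum-entropy bound $H_\kappa\le\log(|\ustate|)$ for the uniform distribution together with the partition-function estimate $\log(\clZ_\kappa)\ge-\kappa\uE$ to obtain the upper bound. The only difference is that you also spell out the trivial lower bound, which the paper leaves implicit.
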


\begin{proof}
The uniform distribution maximizes entropy, giving
\[
\sum_u  p_\kappa(u) \bigl[ \kappa E(u)  + \log (\clZ_\kappa) \bigr]   \le    \log(|\ustate|)
\]
The proof is completed on substituting the following bound for the log partition function:  
\[
 \log (\clZ_\kappa)   =  \log  \sum_u  \exp(-\kappa E(u) )   \ge   -\kappa \uE
 \]
\end{proof}

An implication of the lemma to the  policy  \eqref{e:epsyGreedySoftGibbTamed}:    for any initial distribution for $(X_0, U_0)$,     
\begin{equation}
 \uQ^\theta(X_{k+1})  
 \le 
\Expect \bigl[ Q^\theta(X_{k+1}, \clU_{k+1})   \mid X_0^{k+1}, U_0^k\bigr]
			 \le   \uQ^\theta(X_{k+1}) +  \frac{1}{\kappa_\theta}  \log(|\ustate|)   \,,\qquad  k\ge 0 \, .
\label{e:GibbsSoftMinImplication}
\end{equation}

 \subsection{Proof of  \Cref{t:Qstable,t:greedyStable}}
 \label{s:Qfinalproofs}
 
 The proof of  \Cref{t:Qstable} closely follows the proof of \Cref{t:QstableODEGreedy}.   We begin a companion to \Cref{t:psipsiThetaGreedy}:

\begin{lemma}
\label[lemma]{t:psipsiTheta}
We have for the $(\epsy,\kappa_0)$-tamed Gibbs policy,
\begin{subequations}%
\begin{align}
  \Expect_{\uppi_\theta}   \bigl[   \psisub{n}    \{  \psisub{n+1}^\tTheta  \} ^\transpose  \bigr]  
			=     R_{-1}(\theta) & + \epsy D(\theta)   +  (1-\epsy)  E(\theta)
\\[.5em]
\textit{in which} \quad
	  D(\theta) &=  \Expect_{\uppi_\theta}   \bigl[   \psisub{n}   \{  \psisub{n+1}^\tTheta  -  \psisub{n+1}^\tEXP \} ^\transpose  \bigr]  
 \label{e:DGibbs}
	  \\
	  E(\theta) &=  \Expect_{\uppi_\theta}   \bigl[   \psisub{n}   \{  \psisub{n+1}^\tTheta  -  \psisub{n+1}^\clU  \} ^\transpose  \bigr]  
 \label{e:EGibbs}
\end{align}
with $ \psisub{n+1}^\clU = \psi(X_{n+1}, \clU_{n+1})$.%
 \end{subequations}%
\end{lemma}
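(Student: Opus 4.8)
The plan is to mimic the proof of \Cref{t:psipsiThetaGreedy}, the only new ingredient being that under the tamed Gibbs policy the ``exploit'' action $\clU_k$ is drawn from $\feex^{\theta}(\varble\mid X_k)$ rather than being the hard minimizer $\fee^{\theta}(X_k)$; the discrepancy between the two is precisely what produces the extra term $E(\theta)$ in \eqref{e:EGibbs}.

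First I would record the exact decomposition of the next-step basis vector. Under the frozen-$\theta$ dynamics (transition matrix $T_\theta$), the training action has the form \eqref{e:epsyGreedy}, $U_{n+1}=(1-B_{n+1})\clU_{n+1}+B_{n+1}\EXP_{n+1}$ with $B_{n+1}\in\{0,1\}$, so that
\[
\psisub{n+1}=\psi(X_{n+1},U_{n+1})=(1-B_{n+1})\,\psisub{n+1}^\clU+B_{n+1}\,\psisub{n+1}^\tEXP,
\]
where $\psisub{n+1}^\clU=\psi(X_{n+1},\clU_{n+1})$ and $\psisub{n+1}^\tEXP=\psi(X_{n+1},\EXP_{n+1})$. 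By construction $B_{n+1}$ is independent of $(X_n,U_n,X_{n+1},\clU_{n+1},\EXP_{n+1})$, hence of the triple $(\psisub{n},\psisub{n+1}^\clU,\psisub{n+1}^\tEXP)$; multiplying the display by $\psisub{n}$ on the left and taking $\Expect_{\uppi_\theta}$ of the transpose gives
\[
R_{-1}(\theta)=\Expect_{\uppi_\theta}\!\bigl[\psisub{n}\{\psisub{n+1}\}^\transpose\bigr]
=(1-\epsy)\,\Expect_{\uppi_\theta}\!\bigl[\psisub{n}\{\psisub{n+1}^\clU\}^\transpose\bigr]
+\epsy\,\Expect_{\uppi_\theta}\!\bigl[\psisub{n}\{\psisub{n+1}^\tEXP\}^\transpose\bigr].
\]

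Next I would insert $\psisub{n+1}^\clU=\psisub{n+1}^\tTheta-(\psisub{n+1}^\tTheta-\psisub{n+1}^\clU)$ and $\psisub{n+1}^\tEXP=\psisub{n+1}^\tTheta-(\psisub{n+1}^\tTheta-\psisub{n+1}^\tEXP)$ into this identity. The coefficients of $\Expect_{\uppi_\theta}[\psisub{n}\{\psisub{n+1}^\tTheta\}^\transpose]$ sum to $(1-\epsy)+\epsy=1$, while the two correction terms are $-(1-\epsy)E(\theta)$ and $-\epsy D(\theta)$ with $D,E$ exactly as in \eqref{e:DGibbs}--\eqref{e:EGibbs}. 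Rearranging yields $\Expect_{\uppi_\theta}[\psisub{n}\{\psisub{n+1}^\tTheta\}^\transpose]=R_{-1}(\theta)+\epsy D(\theta)+(1-\epsy)E(\theta)$, which is the claim.

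There is essentially no obstacle beyond this bookkeeping; the only point requiring care is the conditional-independence structure of $(B_{n+1},\EXP_{n+1},\clU_{n+1})$ relative to the $\sigma$-field generated by $(X_n,U_n,X_{n+1})$, which is immediate from \eqref{e:epsyGreedy} together with the i.i.d.\ assumptions on $\{B_k\}$ and $\{\EXP_k\}$. Note that when $\clU_k$ is taken to be the hard minimizer one has $\psisub{n+1}^\clU=\psisub{n+1}^\tTheta$, so $E(\theta)\equiv 0$ and the statement collapses to \Cref{t:psipsiThetaGreedy}.
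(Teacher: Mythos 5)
Your proof is correct and follows exactly the route the paper intends: the paper states \Cref{t:psipsiTheta} without proof as the immediate extension of \Cref{t:psipsiThetaGreedy}, and your argument is precisely that extension --- decompose $\psisub{n+1}=(1-B_{n+1})\psisub{n+1}^\clU+B_{n+1}\psisub{n+1}^\tEXP$, use independence of $B_{n+1}$, and re-express $\psisub{n+1}^\clU$ and $\psisub{n+1}^\tEXP$ relative to $\psisub{n+1}^\tTheta$ to produce the $\epsy D(\theta)$ and $(1-\epsy)E(\theta)$ corrections. The sign bookkeeping checks out, and your closing remark that $E(\theta)\equiv 0$ recovers the $\epsy$-greedy case is a nice sanity check.
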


 We have a partial extension of \Cref{t:QstableODEGreedy}:
 
\begin{lemma}
\label[lemma]{t:QstableODEGibbs}
The following holds for the $(\epsy,\kappa_0)$-tamed Gibbs policy, subject to  \eqref{e:TamedAssumptions} and   $\epsy< \epsy_\disc$:  
 there is   is $\beta_\epsy>0$ such that 
$\theta^\transpose A(\theta)\theta  \le  -\beta_\epsy \| \theta\|^2$  
for all $\kappa_0>0$ sufficiently large, and all   $\| \theta \| \ge 1$.
\end{lemma}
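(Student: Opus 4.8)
The plan is to imitate the proof of \Cref{t:QstableODEGreedy} and to absorb the one extra term the soft minimum creates. From the recursion \eqref{e:Qlambda} with linear function approximation — writing $\uQ^{\theta}(X_{n+1})=\theta^\transpose\psisub{n+1}^\tTheta$ and $Q^\theta(X_n,U_n)=\theta^\transpose\psisub{n}$ — one reads off $A(\theta)=\disc\,\Expect_{\uppi_\theta}[\psisub{n}\{\psisub{n+1}^\tTheta\}^\transpose]-R_0(\theta)$. Combining this with \Cref{t:psipsiTheta} and splitting the matrix $D(\theta)$ of \eqref{e:DGibbs} as $D(\theta)=\widetilde D(\theta)+E(\theta)$, where $E(\theta)$ is the soft-minimum error of \eqref{e:EGibbs} and $\widetilde D(\theta)\eqdef\Expect_{\uppi_\theta}[\psisub{n}\{\psisub{n+1}^\clU-\psisub{n+1}^\tEXP\}^\transpose]$ is the analogue of $D$ in which the greedy basis $\psisub{n+1}^\tTheta$ is replaced by the Gibbs-sampled one $\psisub{n+1}^\clU=\psi(X_{n+1},\clU_{n+1})$, a short rearrangement gives $A(\theta)=-[R_0(\theta)-\disc R_{-1}(\theta)]+\epsy\disc\,\widetilde D(\theta)+\disc\,E(\theta)$. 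Hence, using $\theta^\transpose R_{-1}(\theta)\theta\le\theta^\transpose R_0(\theta)\theta$,
\[
 \theta^\transpose A(\theta)\theta \le -(1-\disc)\,\theta^\transpose R_0(\theta)\theta+\epsy\disc\,|\theta^\transpose\widetilde D(\theta)\theta|+\disc\,|\theta^\transpose E(\theta)\theta|.
\]

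For the $\widetilde D$-term I would replay the Young's-inequality estimate from the proof of \Cref{t:QstableODEGreedy} verbatim, with $\psisub{n+1}^\clU$ in place of $\psisub{n+1}^\tTheta$. This is legitimate because (a) by stationarity $\Expect_{\uppi_\theta}[(\theta^\transpose\psisub{n+1}^\clU)^2]=\theta^\transpose R^\clU(\theta)\theta$ with $R^\clU(\theta)\eqdef\Expect_{\uppi_\theta}[\psi(X_n,\clU_n)\psi(X_n,\clU_n)^\transpose]$, and (b) since the training input is $U_n=(1-B_n)\clU_n+B_n\EXP_n$ with $\{B_n\}$ i.i.d.\ Bernoulli$(\epsy)$, one has the exact decomposition $R_0(\theta)=(1-\epsy)R^\clU(\theta)+\epsy\REXP(\theta)$ — precisely the identity the greedy argument used with $\Rtheta$ in place of $R^\clU$. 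The outcome is the same bound $|\theta^\transpose\widetilde D(\theta)\theta|\le\eta^*_\epsy\,\theta^\transpose R_0(\theta)\theta$ with $\eta^*_\epsy=\sqrt{\epsy^{-1}+(1-\epsy)^{-1}}$.

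The genuinely new estimate is the one for $E(\theta)$, and it is here that the entropy bound \Cref{t:GibbsSoftMinBdd} — in the form \eqref{e:GibbsSoftMinImplication} — and the norm-scaled temperature \eqref{e:normGibbsBdds} enter. Since $\theta^\transpose\psisub{n+1}^\tTheta=\uQ^\theta(X_{n+1})$ we may write $\theta^\transpose E(\theta)\theta=\Expect_{\uppi_\theta}\!\big[(\theta^\transpose\psisub{n})\big(\uQ^\theta(X_{n+1})-Q^\theta(X_{n+1},\clU_{n+1})\big)\big]$. Both $\theta^\transpose\psisub{n}=\theta^\transpose\psi(X_n,U_n)$ and $\uQ^\theta(X_{n+1})$ are $\clF_{n+1}$-measurable, $\clF_{n+1}\eqdef\sigma(X_0^{n+1},U_0^n)$, whereas $\clU_{n+1}\sim\feex^\theta(\varble\mid X_{n+1})$ is drawn only after $X_{n+1}$; conditioning on $\clF_{n+1}$ and invoking \eqref{e:GibbsSoftMinImplication} gives $\big|\uQ^\theta(X_{n+1})-\Expect[Q^\theta(X_{n+1},\clU_{n+1})\mid\clF_{n+1}]\big|\le\kappa_\theta^{-1}\log(|\ustate|)$. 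The tower rule and Cauchy--Schwarz then give
\[
 |\theta^\transpose E(\theta)\theta|\le\kappa_\theta^{-1}\log(|\ustate|)\;\Expect_{\uppi_\theta}[\,|\theta^\transpose\psisub{n}|\,]\le\kappa_\theta^{-1}\log(|\ustate|)\,\big(\theta^\transpose R_0(\theta)\theta\big)^{1/2}.
\]
For $\|\theta\|\ge1$ the schedule \eqref{e:normGibbsBdds} forces $\kappa_\theta^{-1}=\|\theta\|/\kappa_0$, and since $\psi$ is bounded on the finite set $\state\times\ustate$, say $\|\psi(x,u)\|\le C_\psi$, we have $(\theta^\transpose R_0(\theta)\theta)^{1/2}\le C_\psi\|\theta\|$; hence $|\theta^\transpose E(\theta)\theta|\le(C_\psi\log(|\ustate|)/\kappa_0)\,\|\theta\|^2$ on $\{\|\theta\|\ge1\}$.

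Collecting the three bounds, for $\|\theta\|\ge1$,
\[
 \theta^\transpose A(\theta)\theta\le\big[-(1-\disc)+\epsy\disc\,\eta^*_\epsy\big]\theta^\transpose R_0(\theta)\theta+\frac{\disc\,C_\psi\log(|\ustate|)}{\kappa_0}\,\|\theta\|^2 .
\]
The bracket equals $-\delta_0$ with $\delta_0\eqdef(1-\disc)-\epsy\disc\,\eta^*_\epsy>0$ precisely when $\epsy<\epsy_\disc$ (the same algebra as in \Cref{t:QstableODEGreedy}, recalling \eqref{e:epsydisc}), and by \Cref{t:AbsCts_pmf}~(iv) together with positive-definiteness of $\REXP$ from \eqref{e:RankCondition} one has $R_0(\theta)\ge\epsy\,\REXP(\theta)\ge\epsy\,\delta_{\textup{ii}}\,\REXP\ge\delta_\psi I$ for a constant $\delta_\psi>0$ depending on $\epsy$ and the MDP but not on $\theta$ or $\kappa_0$. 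Therefore $\theta^\transpose A(\theta)\theta\le(-\delta_0\delta_\psi+c_E/\kappa_0)\|\theta\|^2$ with $c_E\eqdef\disc\,C_\psi\log(|\ustate|)$, and taking $\kappa_0\ge\kappa_{\epsy,\disc}\eqdef 2c_E/(\delta_0\delta_\psi)$ delivers the claim with $\beta_\epsy=\tfrac12\delta_0\delta_\psi$. The main obstacle is the $E$-term: one must set up the conditioning so \eqref{e:GibbsSoftMinImplication} applies ($\clU_{n+1}$ sampled after $X_{n+1}$, with $\psisub{n}$ already $\clF_{n+1}$-measurable), and then observe that the $\|\theta\|/\kappa_0$ growth of the soft-minimum gap is exactly matched by the extra $\|\theta\|$ coming from $(\theta^\transpose R_0(\theta)\theta)^{1/2}$, leaving an $O(\|\theta\|^2/\kappa_0)$ remainder — which is precisely the budget the norm-scaled inverse temperature \eqref{e:normGibbsBdds} was built to provide, so that it is swallowed by the strictly negative quadratic part once $\kappa_0$ is large.
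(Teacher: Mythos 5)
Your proof is correct and follows essentially the same route as the paper's: decompose $A(\theta)$ via \Cref{t:psipsiTheta}, reuse the Young's-inequality estimate from \Cref{t:QstableODEGreedy} for the $D$-type term, and control the soft-minimum error $E(\theta)$ by combining the entropy bound \eqref{e:GibbsSoftMinImplication} with the norm-scaled temperature \eqref{e:normGibbsBdds} and the uniform lower bound $R_0(\theta)\ge\delta_\psi I$. Your extra splitting $D=\widetilde D+E$, which runs the Young's-inequality step with the Gibbs-sampled basis $\psisub{n+1}^{\clU}$ so that the exact identity $R_0(\theta)=(1-\epsy)R^{\clU}(\theta)+\epsy\REXP(\theta)$ applies, is a slightly more careful bookkeeping than the paper's literal reuse of \eqref{e:Rzero}, but it is a refinement of the same argument rather than a different one.
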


\begin{proof}  
Applying \Cref{t:psipsiTheta} to \eqref{e:barfQgen}, and following the same steps as in the proof of  \Cref{t:QstableODEGreedy} we obtain
\[
\theta^\transpose A(\theta) \theta    \le  -\beta_\epsy^0  \,  \theta^\transpose  R_0(\theta)   \theta     +  \disc (1-\epsy)  \theta^\transpose E(\theta) \theta
 \]
 with  $\beta_\epsy^0 =  \bigl[  (1-\gamma) - \epsy \gamma  \eta^*_\epsy     \bigr] >0$,  with  $ \eta^*_\epsy = \sqrt{ \epsy^{-1} + (1-  \epsy)^{-1} }$.

 From the definition   \eqref{e:EGibbs} we have
 \[
 \theta^\transpose E(\theta) \theta  =  
   \Expect_{\uppi_\theta}   \bigl[  Q^\theta(X_n,U_n)    \{  \uQ^\theta(X_{n+1} )   - Q^\theta(X_{n+1} ,  \clU_{n+1})   \} \bigr]  
\]
Applying \eqref{e:GibbsSoftMinImplication}  and the expression for $\kappa_\theta$ in \eqref{e:normGibbsBdds},  we obtain for  $\| \theta \| \ge 1$,  
\[
\big | \theta^\transpose E(\theta) \theta \big |   
\le  \frac{ 1 }{\kappa_0}  \|\theta\|    \log(|\ustate|) 
   \Expect_{\uppi_\theta}   \bigl[  \big | Q^\theta(X_n,U_n)  \big | \bigr]   
\le  \frac{ 1 }{\kappa_0}  \|\theta\|^2   \log(|\ustate|)  \sqrt{\lambdamax}
\]
 with $\lambdamax $ the maximum over all $\theta$ of the maximum eigenvalue of $R_0(\theta)$.  Combining these bounds completes the proof. 
\end{proof}

\begin{proof}[Proof of  \Cref{t:Qstable}]
Precisely as in the proof of  \Cref{t:approxStable} we obtain a solution to (v4) using $V(\theta) = \|\theta \|$  (recall 	\eqref{e:ddt_bound_LyapfunTmp}),   which implies \eqref{e:ODEbdd} exactly as in the case when $\bfPhi$ is exogenous.  
  
The existence of $\theta^*$ follows from \Cref{t:ODEstableImpliesPBE}, exactly as in   the proof in \Cref{t:approxStable}
\end{proof}

\begin{proof}[Proof of  \Cref{t:greedyStable}]

Let $\theta^{\kappa_0} $ denote the solution to the projected Bellman equation for the $(\epsy,\kappa_0)$-tamed Gibbs policy,  in which  $\epsy<\epsy_\disc$ is fixed. 

Observe that in  \Cref{t:QstableODEGibbs}  we obtain a uniform bound over all large $\kappa_0$.     An examination of the proof of \Cref{t:ODEstableImpliesPBE} shows that there is a constant $b_\epsy$ such that $\|  \theta^{\kappa_0}  \|  \le b_\epsy$ for all sufficiently large $\kappa_0$.  

Hence we can find a subsequence $\kappa_0^n  \to \infty$ as $n\to \infty$,  for which the following limits exist: 
\[
\theta^*  =  \lim_{n\to\infty}  \theta^{\kappa_0^n}  \,, \quad
\uppi^* = \lim_{n\to\infty}  \uppi_n  \,,   
\]
in which $\uppi_n$ is the invariant pmf obtained from the policy using $ \theta^{\kappa_0^n}  $.

The invariant pmfs  have the form
\[
\uppi_n (x,u)  =   \upmu_n(x) \feex^n  (u \mid x)  
\]
with $ \feex^n  $ defined in \eqref{e:epsyGreedySoftGibbTamed}  using $\kappa_0^n$,   and $\upmu_n$ the first marginal of $\uppi_n$.
It follows that the limiting invariant pmf has the same structure,
\[
\uppi^*(x,u)  =   \upmu^*(x) \feex^{\theta^*}  (u \mid x)  
\]
Since $\kappa_0^n\uparrow \infty$,
convergence implies that  $\feex^{\theta^*} $ is of the form  \eqref{e:epsyGreedyTrue2} with $\feex^* \in \Upphi^{\theta^*} $.

Letting $\barf_n$ denote the vector field obtained using $ \theta^{\kappa_0^n}  $ we must have convergence for each $\theta$:
\[
\barf(\theta) =  \lim_{n\to\infty} \barf_n (\theta)  =  \Expect_{\uppi^*} [  \psisub{n}    \BelErr  (X_n,U_n;\theta)]\,,
\]
in which $U_n$ is defined using the randomized $\epsy$-greedy policy $ \feex^{\theta^*}  $,   and $ \BelErr$ defined in  \eqref{e:BE} is a continuous function of $\theta$.     Since  $\barf_n (\theta_n)   =0$ for each $n$,  we conclude that 
$\barf(\theta^*) = \Zero$ as desired.
\end{proof}

\subsection{Code for Baird's star example}
\label{s:BairdCode}

The basis vector corresponding to Baird's example is dimension 14, and defined as follows:
\[
\psi_i(x,u)    
=
\begin{cases}  
2 \ind\{x=i, u=0\}
              &   1\le i\le 6
\\
 \ind\{x=7, u=0\}
   + \ind\{x=1, u=1\}
              &   i=7
\\
 \ind\{x+6=i, u=1\}
              &   8\le i\le 13
 \\
 \ind\{u=0\} [1 +  \ind\{x=7\}   ]           &   i=14
\end{cases} 
\]
With the code below we find for a time horizon of $N=10^7$,  with $\epsy=0.05$ and $\gamma=0.99$, {\small
\[
\thetaPR_N=  [    
 -379;
    -379;
    -379;
    -379;
    -379;
    -379;
    -536;
    -802;
    -700;
    -620;
    -703;
    -699;
    -992;
    -234]
    \]}%
    where $\thetaPR_N$ was obtained by averaging over the final 90\%\ of the estimates.
    
    Estimation of $A^*$ was performed using this estimate of $\theta^*$.  
    
    The code allows for decaying, non-vanishing $\epsy$.  Fixed $\epsy>0$ was used in each experiment.
    
    \begin{verbatim}
% Baird's Counterexample Q-learning

rng(2)

num_states = 7;
num_actions = 2;

d=num_states*num_actions;  %in Baird's example

% Runlength
Hor = 1e4;

% Discount factor
disc = 0.99; 

epsyFinal = 0.05; % Epsilon-greedy exploration
% Option for decaying exploration
epsyAll=1:Hor;
epsyAll=epsyAll.^(-0.2);
epsyAll=max(epsyAll,epsyFinal);

% Define the reward vector, depending only on the state
R = [0; 0; 0; 0; 0; 0; 10];
C=-R;   %Stick to notation in my paper

LowestQ=C(7)/(1-disc);

% step-size pars:
g = 1/(1-disc); %
rho=0.85   ;

alphaAll=1:Hor;
alphaAll=g*alphaAll.^(-rho);

alphaAll=min(alphaAll,0.1);

haA=zeros(d,d);

% Initialize theta

thetaAll=zeros(d,Hor);
thetaNow=rand(d,1)/(1-disc);
thetaAll(:,1)= thetaNow;

Q71All=zeros(1,Hor);

state = randi(num_states); % Start in a random state
action = randi(num_actions)-1; % Start in a random input, values 0 or 1

psi=rand(d,1);

Cov=zeros(d,d);   %estimation of \Sigma_\psi for diagnostics

for n = 1:Hor

    alpha = alphaAll(n);

    beta=alpha^0.9;

    epsy=epsyAll(n);

    cost = C(state);

    psi = getpsi(state,action,d);

    Cov = Cov+ beta*((psi*psi'-Cov);

    state_new=getX(action) ;

    Q_now=thetaNow'*psi;

    [uQ_new,u_greedy] = uQ(state_new,thetaNow,d);

    upsi_new = getpsi(state_new,u_greedy,d);

    haA=haA+ beta*( (- psi + disc*upsi_new )*psi'- haA);

    TempDiff= cost - Q_now + disc*uQ_new;

    thetaNow=thetaNow+alpha*TempDiff*psi;

    thetaAll(:,n)=thetaNow;

    state = state_new;

    [~,u_greedy] = uQ(state,thetaNow,d);

    action=get_action(u_greedy,epsyFinal) ;

    psiBest = getpsi(7,0,d);
    Q71All(n)=thetaNow'*psiBest;
end


% Plots

figure(1)
plot(thetaAll(1,:))
hold on
plot(thetaAll(14,:))

hold off

figure(2)
plot(Q71All)
hold on
plot(LowestQ*ones(size(Q71All)),'--')

hold off
\end{verbatim}

\clearpage

\wham{Functions:}

\begin{verbatim}
function [psi_new] = getpsi(x,u,d)
%Baird basis

psi_new=zeros(d,1);

for i=1:6
    psi_new(i)= 2*(1-u)*(x==i);
end

psi_new(7)=(1-u)*(x==7)+ u*(x==1)  ;

for i=8:13
    psi_new(i) = u*((x+6)==i);
end

psi_new(14)= (1-u)*(1+(x==7));     

end

function [u_new] = get_action(u_greedy,epsy)

Greed= rand<(1-epsy);
u_obliv= (rand<0.67);
u_new=Greed*u_greedy + (1-Greed)*u_obliv;

end

function [x_new] = getX(u)
%Baird dynamics
if u==0
    x_new=7;
else
    x_new=randi(6);
end
end

function [uQ_new,u_greedy] = uQ(x,theta,d)
% Minimize Q-function approximation for given state and parameter
psi_zero = getpsi(x,0,d);
psi_one = getpsi(x,1,d);

[uQ_new,u_greedy]=min([theta'*psi_zero,theta'*psi_one]) ;
u_greedy= u_greedy-1;

end
\end{verbatim}

\clearpage

\wham{Estimation of $A^*$}

\begin{verbatim}
% Baird's Counterexample Q-learning - estimating A^*

num_states = 7;
num_actions = 2;

d=num_states*num_actions;  %in Baird's example

% Runlength
Hor = 1e6;

% Number of epsilons
M=50;

disc = 0.99; % Discount factor

epsyFinal = 0.6; % Epsilon-greedy exploration
epsyAll=1:Hor;
epsyAll=epsyAll.^(-0.2);
epsyAll=max(epsyAll,epsyFinal);

% Define the reward vector, depending only on the state
R = [0; 0; 0; 0; 0; 0; 10];
C=-R;   %Stick to notation in my paper

LowestQ=C(7)/(1-disc);

% step-size pars:
g = 1/(1-disc); %
rho=0.85   ;

alphaAll=1:Hor;
alphaAll=g*alphaAll.^(-rho);

alphaAll=min(alphaAll,0.1);

haA=zeros(d,d);

%Need PR averaging:
haA_PR=zeros(d,d);

PRstart=ceil(Hor/10);

% Initialize theta

thetaAll=zeros(d,Hor);
thetaNow=rand(d,1)/(1-disc);
thetaAll(:,1)= thetaNow;

Q71All=zeros(1,Hor);

state = randi(num_states); % Start in a random state
action = randi(num_actions)-1; % Start in a random input, values 0 or 1

psi=rand(d,1);

action=0;

thetaStar=[    %From 1e7 run using PR-averaging
    -379.1538;
    -379.1506;
    -379.1472;
    -379.1469;
    -379.1465;
    -379.1468;
    -535.6352;
    -801.9270;
    -700.0499;
    -619.9210;
    -703.1767;
    -699.0400;
    -992.2462;
    -233.8702];

allEpsy=logspace(-3,0,M);

allEigs=zeros(d,M);

for m = 1:M

    rng(2) %common random numbers

    m
    epsy=allEpsy(m);

    for n = 1:Hor

        alpha = alphaAll(n);

        beta=alpha^0.9;

        psi = getpsi(state,action,d);

        state_new=getX(action) ;

        Q_now=thetaStar'*psi;

        [uQ_new,u_greedy] = uQ(state_new,thetaStar,d);

        upsi_new = getpsi(state_new,u_greedy,d);

        haA=haA+ beta*( (- psi + disc*upsi_new )*psi'- haA);

        TempDiff= cost - Q_now + disc*uQ_new;

        state = state_new;

        u_greedy=0;

        action=get_action(u_greedy,epsy) ;

        psiBest = getpsi(7,0,d);
        Q71All(n)=thetaStar'*psiBest;

        if n>PRstart
            haA_PR=haA_PR+(haA-haA_PR)/(n-PRstart) ;
        end
    end
    allEigs(:,m)= eig(haA_PR);
end

figure(1)

semilogx(allEpsy, max(real(allEigs'),[],2))

figure(2)
plot(Q71All)
hold on
plot(LowestQ*ones(size(Q71All)),'--')

hold off
 
\end{verbatim}

\clearpage

 \bibliographystyle{abbrv}

\def\cprime{$'$}\def\cprime{$'$}

  \end{document}